\newcommand\algcomment[1]{\def\@algcomment{\footnotesize#1}}
\def\eqref#1{equation~\ref{#1}}
\def\1{\bm{1}}
\def\vp{{\bm{p}}}
\def\vx{{\bm{x}}}
\DeclareMathAlphabet{\mathsfit}{\encodingdefault}{\sfdefault}{m}{sl}
\SetMathAlphabet{\mathsfit}{bold}{\encodingdefault}{\sfdefault}{bx}{n}
\def\sR{{\mathbb{R}}}
\newcommand{\Norm}{\mathrm{Norm}}
\newcommand{\diag}{\mathrm{diag}}
\newcommand{\tr}{\mathrm{tr}}
\def\ie{\textit{i.e.,~}}
\def\st{\textit{s.t.,~}}
\newtheorem{theorem}{Theorem}
\newtheorem{lemma}{Lemma}
\newtheorem{remark}{Remark}
\title{HybridNorm: Towards Stable and Efficient Transformer Training via Hybrid Normalization}
\author[1,2]{Zhijian Zhuo}
\author[2,\dagger]{Yutao Zeng}
\author[2,\dagger]{Ya Wang}
\author[2]{Sijun Zhang}
\author[3]{Jian Yang}
\author[4]{Xiaoqing Li}
\author[2]{Xun Zhou}
\author[1]{Jinwen Ma}
\affiliation[1]{School of Mathematical Sciences, Peking University}
\affiliation[2]{ByteDance Seed}
\affiliation[3]{Beihang University}
\affiliation[4]{Capital University of Economics and Business}
\abstract{
Transformers have become the de facto architecture for a wide range of machine learning tasks, particularly in large language models (LLMs). Despite their remarkable performance, many challenges remain in training deep transformer networks, especially regarding the position of the layer normalization. While Pre-Norm structures facilitate more stable training owing to their stronger identity path, they often lead to suboptimal performance compared to Post-Norm.  In this paper, we propose \textbf{HybridNorm}, a simple yet effective hybrid normalization strategy that integrates the advantages of both Pre-Norm and Post-Norm. Specifically, HybridNorm employs QKV normalization within the attention mechanism and Post-Norm in the feed-forward network (FFN) of each transformer block. We provide both theoretical insights and empirical evidence to demonstrate that HybridNorm improves the gradient flow and the model robustness. Extensive experiments on large-scale transformer models, including both dense and sparse variants, show that HybridNorm consistently outperforms both Pre-Norm and Post-Norm approaches across multiple benchmarks. These findings highlight the potential of HybridNorm as a more stable and effective technique for improving the training and performance of deep transformer models.
}
\date{\today}
\begin{document}
\maketitle

\section{Introduction}\label{sec:introduction}
Transformers have become the backbone of large language models (LLMs) and a wide range of machine learning applications. These architectures are capable of modeling long-range dependencies through self-attention mechanisms, which have made them the preferred choice for a variety of tasks, including language modeling, machine translation, and image processing \citep{bert,gpt3,dosovitskiy2021vit}. However, as transformer models become deeper and more complex, ensuring stable training remains a significant challenge. One critical factor that influences training stability is the choice of normalization methods, which is crucial for mitigating issues such as internal covariate shift and gradient instability \citep{xiong2020layer}. Effectively addressing these challenges is crucial for fully harnessing the potential of deep transformer models in large-scale applications.

In transformers, Layer Normalization (LayerNorm)~\citep{ba2016layer} plays a central role in stabilizing training by normalizing the activations within each layer. The two predominant strategies for applying LayerNorm are Pre-Layer Normalization (Pre-Norm) and Post-Layer Normalization (Post-Norm), each with its respective benefits and trade-offs. In the Pre-Norm architecture, normalization is applied before the residual addition, resulting in a more prominent identity path that facilitates faster convergence and more stable gradients~\citep{xiong2020layer}. This design is particularly advantageous when training deep models, as it helps mitigate gradient-related issues that can arise during backpropagation. However, while Pre-Norm can stabilize training, it often leads to inferior final performance compared to Post-Norm~\citep{xie2023residual}. In contrast, Post-Norm applies normalization after the residual connection, resulting in stronger regularization effects, which contribute to improved model performance. This approach has been shown to improve the generalization ability of transformers, particularly in very deep networks~\citep{wang2024deepnet}. Further discussion of related work is provided in Appendix~\ref{sec:related work}.

Despite the benefits of each approach, there is an inherent trade-off between training stability and final model performance. Pre-Norm structures typically stabilize training but may underperform in terms of generalization, while Post-Norm architectures provide better performance but can be more difficult to train, especially in deep models. To reconcile these trade-offs, we propose a hybrid normalization method that applies QKV normalization in the attention mechanism and Post-Norm in the feed-forward network (FFN), which is named as HybridNorm. The QKV normalization in the attention mechanism stabilizes the flow of information between layers by normalizing the query, key, and value components, while Post-Norm in the FFN ensures the effective depth of the transformer.

Through extensive experiments on large-scale models, we validate the effectiveness of our approach. Our results show that the hybrid normalization method significantly outperforms both Pre-Norm and Post-Norm across multiple benchmarks, providing a stable training process and improved model performance. We believe that this hybrid approach offers a promising solution for enhancing the training stability and performance of deep transformer architectures, particularly in the rapidly evolving field of LLMs.
The main contributions of this paper can be summarized as follows:
\begin{itemize}
    \item We propose HybridNorm, a novel hybrid normalization structure that combines the advantages of Pre-Norm and Post-Norm, offering a simple yet effective solution to enhancing performance in large transformer models. Our method is designed to exploit the strengths of both normalization approaches, ensuring robust convergence during training and superior final performance.
    \item We present both theoretical and empirical analyses of HybridNorm, demonstrating its advantages in enhancing gradient flow stability and improving model robustness. Our findings underscore the method’s effectiveness in mitigating core challenges inherent to deep transformer architectures.
    \item Through extensive experiments on large-scale models, we empirically validate the effectiveness of our approach. Our results show that hybrid normalization significantly outperforms both Pre-Norm and Post-Norm across a variety of tasks, leading to more stable training and improved model performance, particularly in the context of LLMs.
\end{itemize}

\section{Preliminaries}
\begin{figure}[t]
\begin{center}
\centerline{\includegraphics[width=\linewidth]{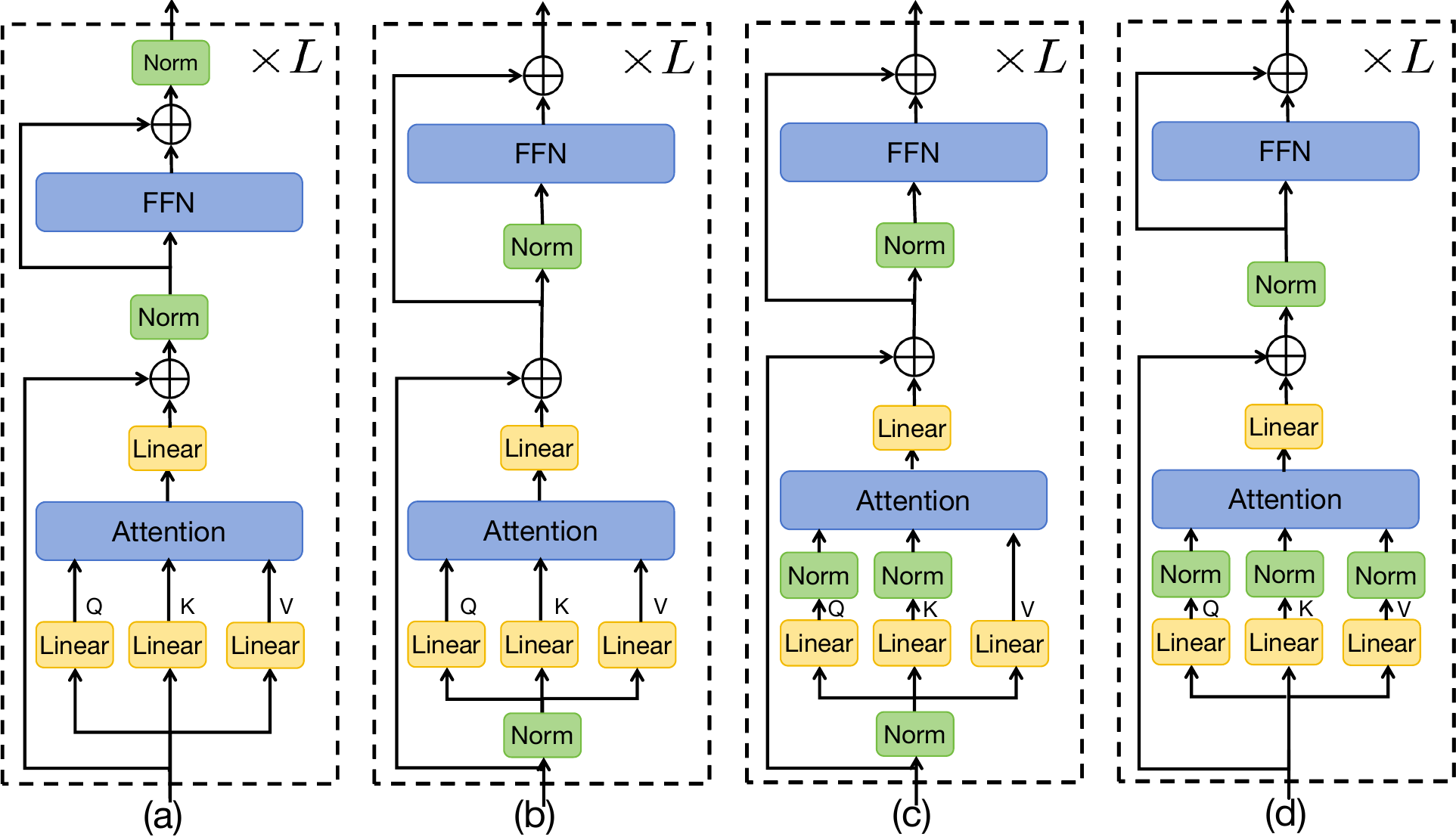}}
\caption{Illustrations of different transformer layer structures: (a) Post-Norm architecture; (b) Pre-Norm architecture; (c) Pre-Norm with QK-Norm architecture; (d) HybridNorm architecture.}
\label{fig:norm}
\end{center}
\end{figure}

\paragraph{\textbf{Scaled Dot-Product Attention}}  
The scaled dot-product attention computes the attention scores between the Query (Q) and Key (K) matrices, scaled by the square root of the key dimension $d_k$, and applies these scores to the Value (V) matrix. The formulation is expressed as  
\begin{align}\label{eq:attn}
    \mathrm{attn}(Q, K, V) = \mathrm{softmax}\left(\frac{QK^\top}{\sqrt{d_k}}\right)V,
\end{align}  
where $Q, K, V \in \mathbb{R}^{n \times d_k}$ represent the query, key, and value matrices respectively, and $n$ is the sequence length. 

\paragraph{\textbf{Multi-Head Attention}}  
Multi-head attention (MHA) extends the scaled dot-product attention mechanism by splitting the query, key, and value matrices into $h$ heads, each of size $d_k=d/h$. Each head independently computes attention scores, and the outputs are concatenated and linearly projected to the original dimension,
\begin{equation}
    \mathrm{MHA}(X) = \mathrm{Concat}(\mathrm{head}_1, \dots, \mathrm{head}_h) W^O,
\end{equation}
where $\mathrm{head}_i = \mathrm{attn}(Q_i, K_i, V_i)$ for $i = 1, 2, \dots, h$, $\{\bullet_i\}_{i=1}^{h}=\mathrm{Split}(XW_{\bullet})$ for $\bullet \in \{Q,K,V\}$, and learnable parameters $W_Q,W_K,W_V,W_O \in \mathbb{R}^{d \times d}$ . By enabling the model to focus on different subspaces of the input representation, MHA enhances the transformer’s capacity to capture diverse patterns in the input sequence.

\subsection{Post-Norm and Pre-Norm}
The transformer architecture is composed of a stack of L blocks, each consisting of two key components: MHA and FFN. Residual connections and normalization layers are applied around both the MHA and FFN in each block to facilitate effective training and improve model stability. Figure \ref{fig:norm} (a)\&(b) illustrate Post-Norm and Pre-Norm, respectively.

\paragraph{\textbf{Post-Norm}} Post-Norm applies the normalization layer after the residual connection in each transformer sub-layer. Formally, the output of Post-Norm can be expressed as  
\begin{align}
    Y^l &= \mathrm{Norm}(\mathrm{MHA}(X^l) +X^l), \quad
    X^{l+1}=\mathrm{Norm}(\mathrm{FFN}(Y^l) +Y^l),
\end{align}
where $\mathrm{Norm}$ denotes RMSNorm \cite{zhang2019root} or LayerNorm \cite{ba2016layer}.

\paragraph{\textbf{Pre-Norm}} In contrast, Pre-Norm normalizes the input to the sub-layer, which allows for a more prominent identity path. The output of Pre-Norm is given by 
\begin{align}
    Y^l &= \mathrm{MHA}(\mathrm{Norm}(X^l)) +X^l,\quad
    X^{l+1}=\mathrm{FFN}(\mathrm{Norm}(Y^l)) +Y^l.
\end{align}
This structure facilitates better gradient flow and stable convergence, particularly for deep models. However, its reliance on normalization before the residual connection can lead to suboptimal performance compared to Post-Norm, as the normalization does not account for the interaction between the residual connection and the sub-layers output. An analysis of the fundamental differences between the two approaches is provided in the Appendix \ref{sec:Essential Differences}.

\section{HybridNorm}

To address the trade-offs between Post-Norm and Pre-Norm, we propose \textbf{HybridNorm}, a hybrid normalization strategy that integrates their strengths. Specifically, HybridNorm combines \textbf{QKV-Norm} \cite{menary2024transformer,rybakov2024methods} in MHA and  \textbf{Post-Norm} in FFN.

\paragraph{\textbf{QKV Normalization in Attention}} In the attention mechanism, the query, key, and value matrices are normalized individually before computing the attention output. The normalized QKV matrices are then used in the scaled dot-product attention. QKV-Norm enhances the stability of model training and leads to improved downstream performance. Formally, attention with QKV-Norm is defined as 
\begin{equation}\label{eq:attn_qkv}
\begin{aligned}
    &\mathrm{attn}_{QKV}(Q, K, V) = \mathrm{softmax}\left(\frac{\mathrm{Norm}(Q)\mathrm{Norm}(K)^\top}{\sqrt{d_k}}\right)\mathrm{Norm}(V).
\end{aligned}
\end{equation}
And we denote the multi-head attention with $\mathrm{attn}_{QKV}$ as $\mathrm{MHA}_{QKV}$.

\paragraph{\textbf{HybridNorm Architecture}} Combining the above, the overall output of a transformer block with HybridNorm can be expressed as
\begin{align}
    Y^l &= \mathrm{MHA}_{QKV}(X^l) +X^l,\quad
    X^{l+1}=\mathrm{FFN}(\mathrm{Norm}(Y^l)) +\mathrm{Norm}(Y^l).
\end{align}
The architecture illustration can be found in Figure \ref{fig:norm}(d) and the pseudocode is shown in Algorithm \ref{alg:transformer_layer}.
By integrating QKV normalization in the attention mechanism and Post-Norm in the FFN, HybridNorm achieves stable training dynamics and enhanced final performance. The theoretical gradient analysis can be found in Appendix \ref{sec:theoretical gradient analysis}.

\begin{remark}
The method most closely related to ours is Mix-LN \cite{li2025mixln}, which applies Post-Norm to the earlier layers and Pre-Norm to the deeper layers, resulting in enhanced training stability and performance. In contrast, our proposed HybridNorm integrates Pre-Norm and Post-Norm within each transformer layer, thereby providing a more uniform approach across different layers to leverage the benefits of both normalization strategies. Moreover, experiments demonstrate that HybridNorm achieves superior downstream performance compared to Mix-LN (see Table \ref{table:normalization position} \& Table \ref{table:more comporation for dense results}).
\end{remark}

\paragraph{\textbf{Special Treatment of First Block}}
Inspired by prior work \cite{deepseekv2}, which employs the Mixture of Experts (MoE) architecture with specialized handling of the first layer, we explore the impact of introducing specialized normalization to the first transformer block. In our approach, the first layer of the transformer is treated differently by applying Pre-Norm on MHA and FFN, while maintaining QKV-Norm. Specifically, the structure of our first layer is defined as
\begin{align}
    Y^0 &= \mathrm{MHA}_{QKV}(\mathrm{Norm}(X^0)) +X^0,\quad
    X^{1}=\mathrm{FFN}(\mathrm{Norm}(Y^0)) +Y^0.
\end{align}
We refer to this variation of HybridNorm, which incorporates the specialized first block treatment, as HybridNorm$^*$. This design aims to stabilize the training of the first transformer block and boost overall performance by improving the flow of gradients in the early stages of training.

\begin{figure}[t]
\centering
\begin{minipage}[t]{0.49\linewidth}
\centering
\includegraphics[width=\linewidth]{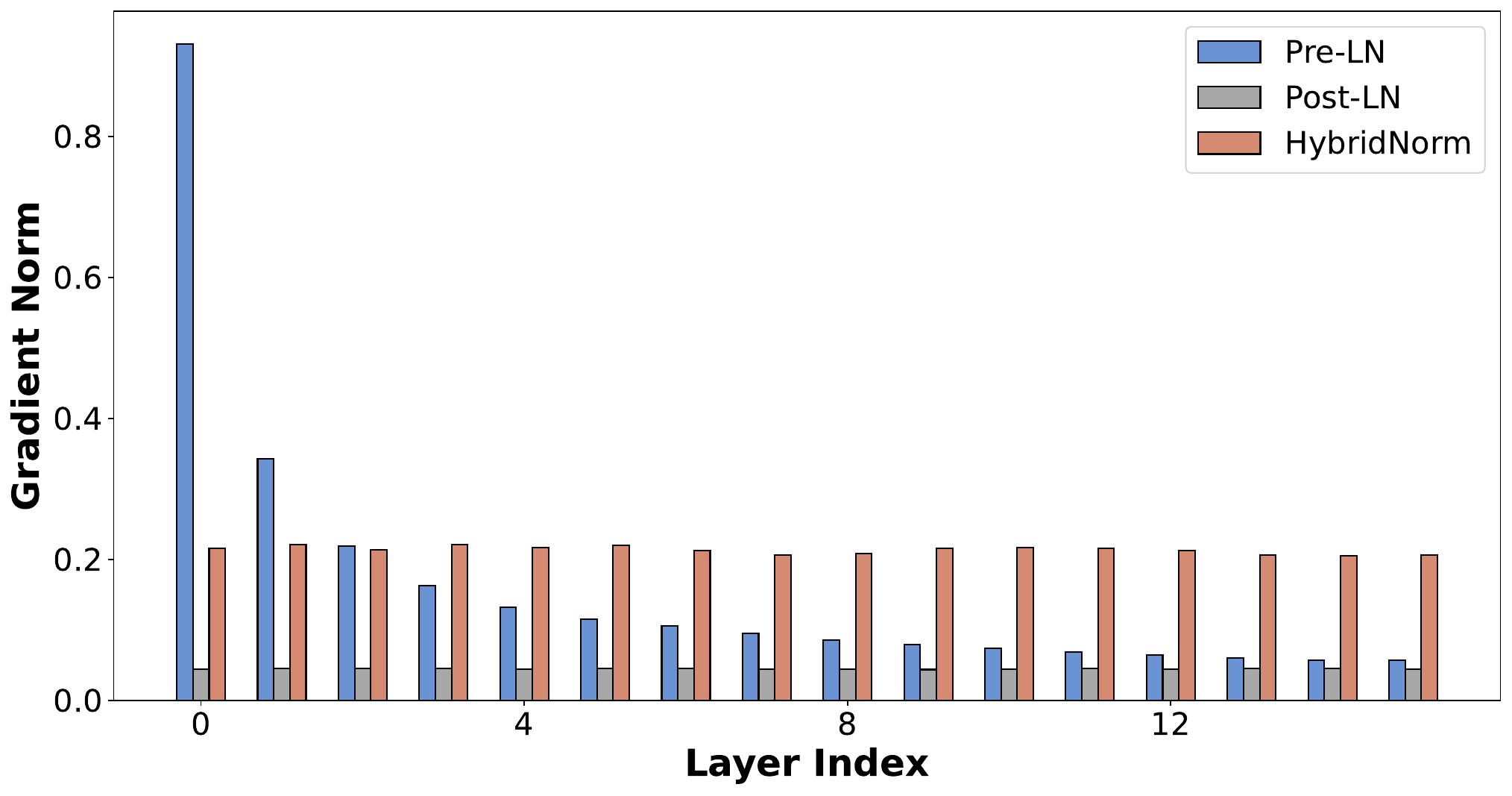}
\subcaption{ Layer gradient norm at step 1.}
\end{minipage}
\hfill
\begin{minipage}[t]{0.49\linewidth}
\centering
\includegraphics[width=\linewidth]{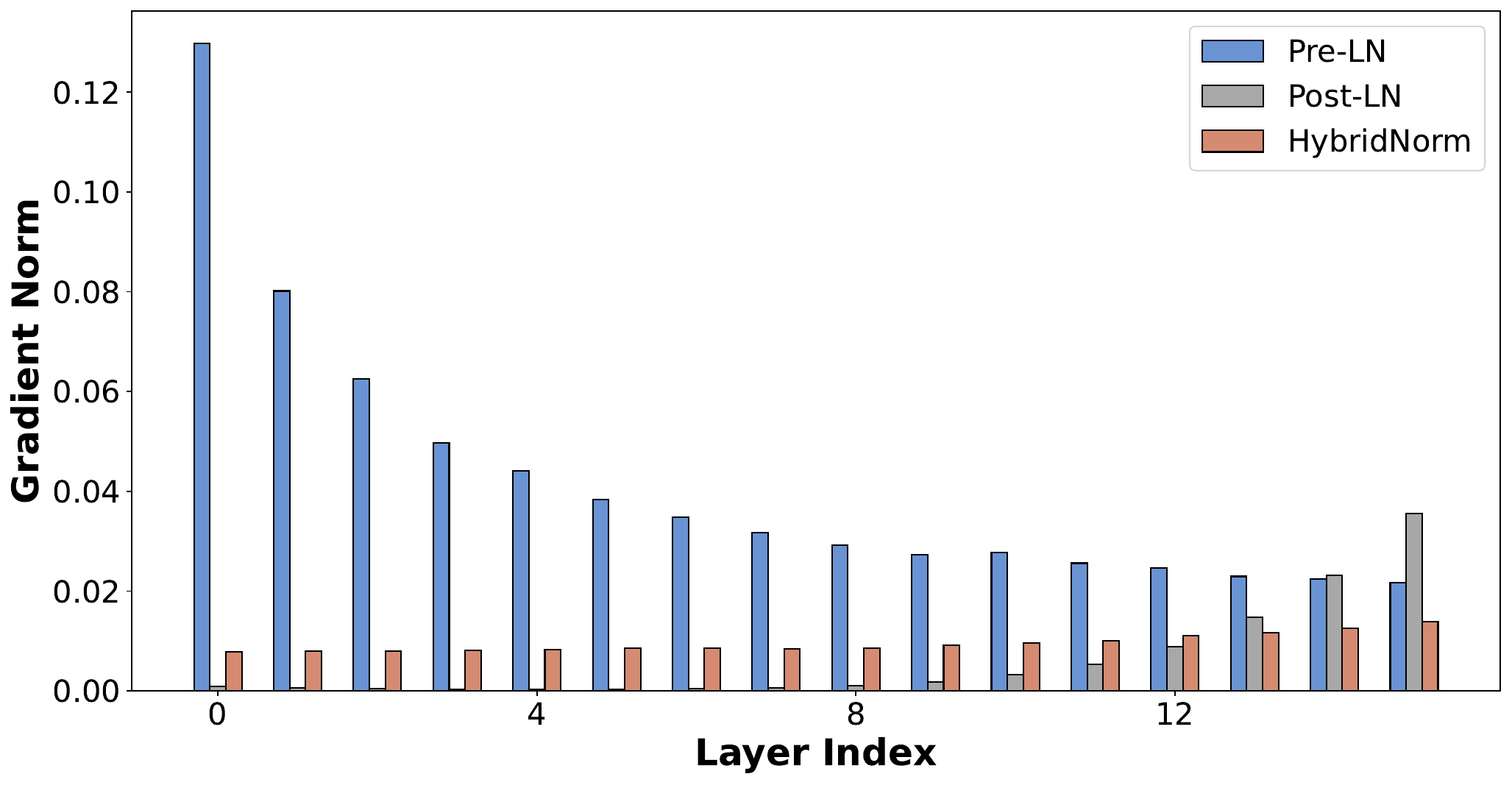}
\subcaption{ Layer gradient norm at step 100.}
\end{minipage}
\caption{Gradient norm of Pre-Norm, Post-Norm, and HybridNorm at different training steps.}
\label{fig:gradient_norm}
\end{figure}
\section{Theoretical Analysis}\label{sec:Theoretical Analysis}

\subsection{Benefits of Hybrid Method}
To gain deeper insights into the stability introduced by HybridNorm, we follow the approach of \citep{takase2022b2t,li2025mixln} and analyze the evolution of gradient norms throughout training iterations.
Suppose $x$ and $F$ are the input and the sublayer of Transformer, respectively. The output of Post-Norm is $y_{Post}=\Norm(x+F(x))$ and the output of Pre-Norm is $y_{Pre}=x+F(\Norm(x))$. Then we have
\begin{align}
    \frac{\partial y_{Post}}{\partial x}&=\frac{\partial \Norm(x+F(x))}{\partial(x+F(x))}\left(I + \frac{\partial F(x)}{\partial x}\right), \\
    \frac{\partial y_{Pre}}{\partial x}&=I+\frac{\partial F(\Norm(x))}{\partial \Norm(x)}\frac{\partial \Norm(x)}{\partial x},
\end{align}

where the gradient of normalization is $\frac{\partial \Norm(x)}{\partial x}=\alpha\odot\frac{\sqrt{d}}{||x||_2}\left(I-\frac{xx^\top}{||x||_2^2}\right)$.
The gradient of Post-Norm is the product of two gradients, one of which is the normalization. If the spectral radius of the normalization gradient is less than 1, it causes gradient vanishing. In Pre-Norm, the residual connection is isolated from the normalization, ensuring that gradients retain a lower bound, thereby preventing gradient vanishing.
To ensure relatively stable gradients, a natural idea is to use both types of normalization within a Transformer block, leading to Pre-Post (Pre-Norm in MHA and Post-Norm in FFN) and Post-Pre. From Table~\ref{table:normalization position}, we find that Pre-Post achieves the best performance, which is why we adopt this. And placing Pre-Norm in MHA with QKV-Norm further enhances performance.

In Figure~\ref{fig:gradient_norm}, we compare the gradient norms of Pre-Norm, Post-Norm, and HybridNorm at steps 1 and 100. The results indicate that Pre-Norm tends to exhibit gradient explosion in deeper models, while Post-Norm suffers from vanishing gradients, both of which hinder effective optimization. In contrast, HybridNorm maintains a well-balanced gradient flow throughout training, effectively mitigating these issues. An intuitive understanding is that Pre-Norm tends to amplify gradients, while Post-Norm diminishes them. HybridNorm alternates between these two normalization strategies, leading to more stable gradient propagation during backpropagation and effectively preventing gradient explosion or vanishing. This balanced gradient propagation contributes to smoother optimization dynamics and faster convergence, further reinforcing the effectiveness of HybridNorm in stabilizing training.

\subsection{Benefits of QKV-Norm}
Theoretically, we study how QKV-Norm affects the gradient flow during backpropagation, which is crucial for training stability in deep transformer models. Our analysis reveals that QKV-Norm helps decouple gradients between different weight matrices, thereby stabilizing training. 

\begin{theorem}[Informal version of Theorem \ref{thm:gradient}] Suppose the the output of the attention is $S$, the input $X\in\mathbb{R}^{s\times d}$, parameters $W_Q,W_K,W_V,W_O^\top\in\mathbb{R}^{d\times d_k}$.
For the attention with \textbf{Pre-Norm}, we have
\begin{align*}
    \left\| \frac{\partial S}{\partial W_O} \right\|_F &= \mathcal{O}\left( \|W_V\|_2\right), 
    \left\| \frac{\partial S}{\partial W_V} \right\|_F =\mathcal{O}\left(\|W_O\|_F\right), \\
    \left\| \frac{\partial S}{\partial W_Q} \right\|_F&=\mathcal{O}\left(\|W_K\|_2\|W_V\|_2 \|W_O\|_2\right),\left\| \frac{\partial S}{\partial W_K} \right\|_F=\mathcal{O}\left(\|W_Q\|_2\|W_V\|_2 \|W_O\|_2\right).
\end{align*}
For the attention with \textbf{Pre-Norm and QK-Norm}, we have
\begin{align*}
    \left\| \frac{\partial S}{\partial W_O} \right\|_F &
=
 \mathcal{O}
\left( \|W_V\|_2\right), 
    \left\| \frac{\partial S}{\partial W_V} \right\|_F 
=
\mathcal{O}
\left(\|W_O\|_F\right), 
    \left\| \frac{\partial S}{\partial W_Q} \right\|_F
=
\left\| \frac{\partial S}{\partial W_K} \right\|_F
=
\mathcal{O}
\left(\|W_V\|_2 \|W_O\|_2\right).
\end{align*}
For the attention with \textbf{QKV-Norm}, we have
\begin{align*}
    \left\| \frac{\partial S}{\partial W_O} \right\|_F &= \mathcal{O}\left( 1\right), 
    \left\| \frac{\partial S}{\partial W_V} \right\|_F =\mathcal{O}\left(\|W_O\|_2\right), 
    \left\| \frac{\partial S}{\partial W_Q} \right\|_F=\left\| \frac{\partial S}{\partial W_K} \right\|_F=\mathcal{O}\left( \|W_O\|_2\right) .
\end{align*}

\end{theorem}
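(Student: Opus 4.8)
The theorem compares Frobenius-norm bounds on the derivative of the attention output $S = \mathrm{softmax}\!\left(QK^\top/\sqrt{d_k}\right)V W_O^\top$ (viewed column-wise after concatenation over heads) with respect to each of $W_Q,W_K,W_V,W_O$, under three normalization regimes. The unifying idea is the chain rule together with the standard observation that softmax rows are probability vectors, so the attention matrix $A=\mathrm{softmax}(QK^\top/\sqrt{d_k})$ has operator norm $\|A\|_2 \le 1$ (each row sums to one, and in fact $\|A\|_2\le\sqrt{\|A\|_1\|A\|_\infty}=1$). I would first fix notation: write $P = QK^\top/\sqrt{d_k}$, $A=\mathrm{softmax}(P)$, $V = X W_V$ (or $\Norm(X)W_V$, etc.), $S = A V W_O^\top$, and record the Jacobian of row-wise softmax, $\partial A_{ij}/\partial P_{ik} = A_{ij}(\delta_{jk}-A_{ik})$, which gives $\|\partial \mathrm{softmax}(P)/\partial P\|\le \mathcal{O}(1)$ uniformly because $A$'s entries lie in $[0,1]$.

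**Key steps, in order.** (1) \emph{The easy two ($W_O$ and $W_V$).} Since $S = (AV)W_O^\top$ with $AV$ not depending on $W_O$, we get $\partial S/\partial W_O$ expressed through $AV$, and $\|AV\|_F \le \|A\|_2\|V\|_F$; under Pre-Norm $V=\Norm(X)W_V$ so $\|V\|_F\le\|W_V\|_2\cdot\|\Norm(X)\|_F = \mathcal{O}(\|W_V\|_2)$, while under QKV-Norm $V=\Norm(XW_V)$ has rows of fixed norm $\sqrt{d_k}$, hence $\|V\|_F = \mathcal{O}(1)$ — this is exactly the source of the improved bounds. Symmetrically, $\partial S/\partial W_V$ factors through $W_O$ and the (bounded) input to the $V$-projection, giving $\mathcal{O}(\|W_O\|_F)$ in the first two cases and $\mathcal{O}(\|W_O\|_2)$ once the $V$-normalization replaces an $\|X\|$ factor by a constant. (2) \emph{The $W_Q,W_K$ terms.} Here the dependence runs $W_Q \to Q \to P \to A \to S$, so by the chain rule $\|\partial S/\partial W_Q\|_F \le \|\partial S/\partial A\|\cdot\|\partial A/\partial P\|\cdot\|\partial P/\partial Q\|\cdot\|\partial Q/\partial W_Q\|$. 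We bound $\|\partial S/\partial A\| = \|(\cdot)VW_O^\top\| \le \|V\|_2\|W_O\|_2$, the softmax Jacobian by $\mathcal{O}(1)$, $\|\partial P/\partial Q\| \le \|K\|_2/\sqrt{d_k}$, and $\|\partial Q/\partial W_Q\| \le \|(\text{input})\|_2$. Under Pre-Norm this yields $\mathcal{O}(\|W_K\|_2\|W_V\|_2\|W_O\|_2)$; under QK-Norm the normalization of $Q$ makes $\|\partial Q/\partial W_Q\|$ contribute only a constant and forces $\|K\|_2 = \mathcal{O}(1)$, killing the $\|W_K\|_2$ factor and leaving $\mathcal{O}(\|W_V\|_2\|W_O\|_2)$; under full QKV-Norm the additional $V$-normalization replaces $\|W_V\|_2$ by a constant, giving $\mathcal{O}(\|W_O\|_2)$. (3) Assemble the three displays, being careful to track which norm ($\|\cdot\|_F$ vs $\|\cdot\|_2$) appears — the $\|W_O\|_F$ versus $\|W_O\|_2$ distinction comes from whether $W_O$ enters linearly as the outermost factor (Frobenius) or only through an operator-norm bound on an intermediate map.

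**Main obstacle.** The routine part is the chain-rule bookkeeping; the genuine subtlety is making the "$\mathcal{O}$" statements precise when derivatives are taken with respect to matrices — i.e. choosing consistent conventions so that $\partial S/\partial W_Q$ is a well-defined object whose Frobenius norm is what we bound, and ensuring the factor $\|\partial A/\partial P\| = \mathcal{O}(1)$ is legitimate as an operator norm on the appropriate tensor space (the per-row softmax Jacobians $\diag(a)-aa^\top$ each have spectral norm $\le 1$, and they act block-diagonally across rows, so the global bound is $1$). A secondary point requiring care is justifying that the normalized quantities have $\mathcal{O}(1)$ Frobenius norm: $\Norm$ applied row-wise produces each row with norm exactly $\sqrt{d}$ (or $\sqrt{d_k}$ per head), so an $s$-row matrix has Frobenius norm $\sqrt{s\,d_k}$, which is treated as a constant (absorbed into $\mathcal{O}$) because $s$ and $d_k$ are fixed architectural quantities — this convention should be stated explicitly, since otherwise the Pre-Norm bounds would also "look" constant in $\|X\|$. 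Once these conventions are pinned down, all three cases follow by the same three-line estimate with different factors switched on or off.
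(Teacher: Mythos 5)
Your overall strategy is the same as the paper's: write each derivative via the chain rule, bound the softmax Jacobian blocks $\diag(a)-aa^\top$ by an absolute constant, and observe that normalizing a quantity removes the corresponding weight norm from the bound. Two points in your argument are not right as stated, and one of them hides a genuine issue.

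First, the minor one: your claim $\|A\|_2\le\sqrt{\|A\|_1\|A\|_\infty}=1$ is false for a row-stochastic matrix, since $\|A\|_1$ is the maximum \emph{column} sum, which can be as large as $s$ (take $A=\mathbf{1}_s e_1^\top$, for which $\|A\|_2=\sqrt{s}$). The correct bound, which the paper proves and shows is tight, is $\|A\|_2\le\|A\|_F\le\sqrt{s}$. This does not change the conclusion because the informal statement suppresses all dependence on $s$, $d$, $d_k$, but the identity you cite is wrong.

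Second, and more substantively: when you differentiate through a normalization that sits \emph{downstream} of the weight being differentiated (e.g.\ $W_V\mapsto XW_V\mapsto\Norm(XW_V)$ in the QKV-Norm case, or $W_Q\mapsto \Norm(XW_Q)$), the relevant Jacobian block is $\frac{\sqrt{d_k}}{\|V_{i,:}\|_2}\bigl(I_{d_k}-\tfrac{V_{i,:}V_{i,:}^\top}{\|V_{i,:}\|_2^2}\bigr)$ with $V_{i,:}=W_V^\top X_{i,:}$, and composing it with $\partial V/\partial W_V$ produces factors $\|X_{i,:}\|_2/\|W_V^\top X_{i,:}\|_2$, which are \emph{not} $\mathcal{O}(1)$: they are only controlled by $1/\sigma_{\min}(W_V)$. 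Your assertion that the normalization makes $\|\partial Q/\partial W_Q\|$ "contribute only a constant" and "replaces $\|W_V\|_2$ by a constant" is therefore unjustified; the paper's formal Theorem~\ref{thm:gradient} carries these $1/\sigma_{\min}^Q,1/\sigma_{\min}^K,1/\sigma_{\min}^V$ factors explicitly, and they are weight-dependent quantities (potentially unbounded for near-singular weights), so silently absorbing them into $\mathcal{O}(1)$ undercuts the very purpose of the theorem, which is to track how each gradient depends on the weight matrices. To close this gap you need to compute the normalization Jacobian explicitly, as in Eq.~(\ref{eq:gradient of QKV-Norm}), and bound $\sum_i\|X_{i,:}\|_2^2/\|W_\bullet^\top X_{i,:}\|_2^2$ by $s/(\sigma_{\min}^\bullet)^2$. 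The remaining bookkeeping in your outline (Kronecker/operator-norm factorizations for the $W_O$, $W_V$ cases and the three-factor chain for $W_Q$, $W_K$) matches the paper's proof.
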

The above theorem is an informal version of Theorem~\ref{thm:gradient}, with a more precise statement and proof provided in Appendix~\ref{sec:theoretical gradient analysis}.
In attention with Pre-Norm, gradients of weights exhibit strong dependencies on other weights; for instance, $W_Q$ and $W_K$ are influenced by all three other weights but not by themselves. In contrast, with QKV-Norm, the gradient of each weight depends at most on itself and $W_O$. This indicates that the gradient in Pre-Norm is more tightly coupled with other weights compared to QKV-Norm, while Pre-Norm with QK-Norm lies between the two. Consequently, during training, if the norm of a certain weight becomes excessively large, it is harder to control in Pre-Norm, leading to increased gradient magnitude. This creates a vicious cycle that may cause model collapse. In contrast, QKV-Norm alleviates this issue and significantly improves training stability. In summary, the degree of gradient coupling follows: Pre-Norm $>$ Pre-Norm with QK-Norm $>$ QKV-Norm; whereas training stability follows the reverse: Pre-Norm $<$ Pre-Norm with QK-Norm $<$ QKV-Norm.

\section{Experiments} \label{sec:experiments}

\subsection{Experiment Settings}\label{sec:Experiment Settings}

\paragraph{\textbf{Baseline}} We evaluate HybridNorm across two series of models: dense models and Mixture of Experts (MoE) models. The dense models include two scales: 550M and 1B, with the latter containing approximately 1.27 billion parameters and utilizing an architecture similar to LLaMA 3.2 \cite{dubey2024llama}. All analytical experiments are conducted on the 550M dense models. For the MoE model, we use the OLMoE framework \citep{muennighoff2025olmoe}, which activates 1.3B parameters out of a total of 6.9B parameters (MoE-1B-7B). Both models are trained from scratch on the OLMoE Mix dataset \citep{muennighoff2025olmoe}.

\paragraph{\textbf{Model Configuration}} 
The 550M dense model has a model dimension of 1536, an FFN dimension of 4096, and utilizes 16 attention heads with 4 key/value heads per attention head.
The 1.2B model features a larger model dimension of 2048 and an FFN dimension of 9192, with 32 attention heads and 8 key/value heads per attention head. The MoE-1B-7B model employs 16 attention heads, a model dimension of 2048, and an FFN dimension of 1024. Notably, it features 8 experts out of 64, providing a more fine-grained distribution of computational resources. All models consist of 16 layers and are trained with a consistent context length of 4096. More details can be found in Appendix \ref{sec:Details of Experiments}.

\paragraph{\textbf{Hyperparameters}}
Model weights are initialized using Megatron initialization \cite{shoeybi2019megatron} (See Section \ref{sec:ablation} for more details). For the optimization, we apply the AdamW optimizer with $\beta_1=0.9$ and $\beta_2=0.95$. All models are trained on sequences of 4096 tokens. For the dense model, we set the learning rate to 3e-4, decaying to 3e-5 using a cosine scheduler. The MoE model starts with a learning rate of 4e-4, decaying with a cosine schedule. We summarize the hyperparameters in Table \ref{table:hyperparameters}.

\paragraph{\textbf{Evaluation Metrics}} To evaluate the performance of LLMs with HybridNorm, we employ a diverse set of open benchmarks, including ARC-Easy (ARC-E) \citep{clark2018think}, ARC-Challenge (ARC-C) \citep{clark2018think}, HellaSwag \citep{zellers2019hellaswag}, PIQA \citep{bisk2020piqa}, SciQ \citep{welbl2017crowdsourcing}, CoQA \citep{reddy2019coqa}, Winogrande \citep{sakaguchi2021winogrande}, MMLU \citep{hendrycksmeasuring}, BoolQ \citep{clark2019boolq}, COPA \citep{gordon2012semeval}, CSQA \citep{talmor2019commonsenseqa}, OBQA \citep{mihaylov2018can}, and SocialIQA \citep{sap2019socialiqa}. 
We leverage the LM Eval Harness \citep{eval-harness} for standardized performance evaluation.

\begin{figure}[t]
\begin{center}
\centerline{\includegraphics[width=\linewidth]{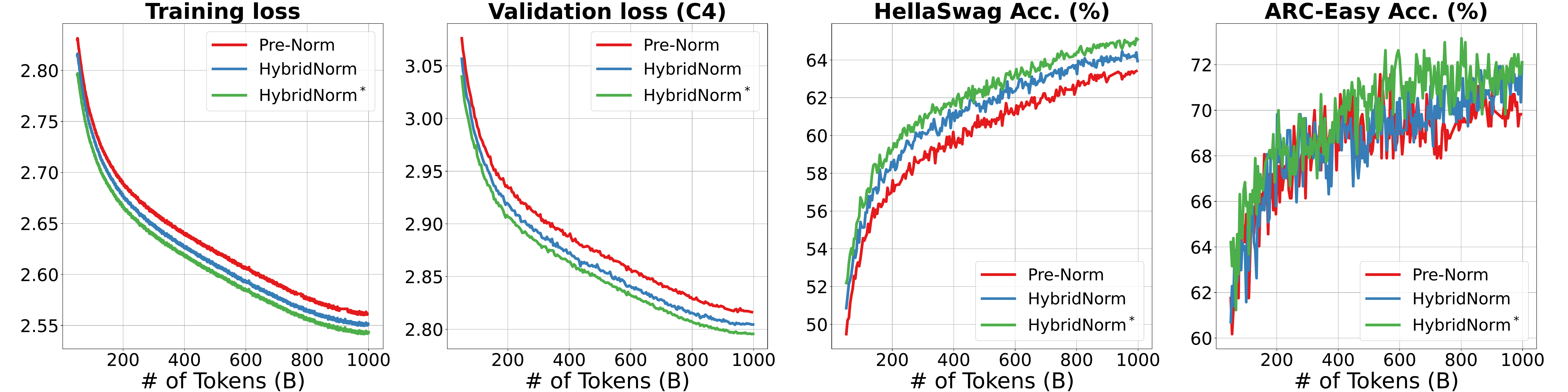}}
\caption{Training dynamics for 1.2B dense models with Pre-Norm, HybridNorm and HybridNorm$^*$ under 1T training tokens. We present the training loss, validation loss, and downstream performance on HellaSwag and ARC-Easy, demonstrating that HybridNorm$^*$ achieves superior performance.}
\label{fig:training dynamics for dense models}
\end{center}
\end{figure}

\begin{table}[t]
\caption{Downstream evaluation results of 1.2B dense models with Pre-Norm, HybridNorm, and HybridNorm$^*$ under 1T training tokens. OQA refers to OpenbookQA.}
\label{table:dense_results}
\begin{center}
\setlength{\tabcolsep}{5.5pt}
\begin{tabular}{lrrrrrrrrr}
\toprule
\textbf{Methods} & \textbf{BasicArithmetic} & \textbf{HellaSwag} & \textbf{SciQ}  & \textbf{ARC-C} & \textbf{ARC-E} & \textbf{PIQA}  & \textbf{OQA} & \textbf{COPA}  & \textbf{Avg.}$\uparrow$  \\
\midrule
Pre-Norm         & 44.10                    & 63.41              & 91.80          & \textbf{39.20} & 69.82          & 75.19          & 38.40               & 82.00          & 62.99          \\
HybridNorm       & 44.12                    & 64.22              & \textbf{91.88} & 39.13          & 71.05          & 74.72          & 38.88               & 82.00          & 63.25          \\
HybridNorm$^*$   & \textbf{47.21}           & \textbf{65.12}     & 91.38          & 37.06          & \textbf{71.79} & \textbf{75.72} & \textbf{39.16}      & \textbf{85.78} & \textbf{64.15} \\
\bottomrule
\end{tabular}
\end{center}
\end{table}

\newpage
\subsection{Main Results}
\paragraph{\textbf{Dense Models}}
We evaluate the performance of HybridNorm and HybridNorm$^*$ on 1.2B dense transformer models. Figure \ref{fig:training dynamics for dense models} compares the training dynamics of dense models with different normalization methods. As shown in the figure, models with HybridNorm and HybridNorm$^*$ exhibit consistently lower training loss and validation perplexity throughout training compared to Pre-Norm, highlighting their effectiveness in enhancing training stability and convergence.
Table \ref{table:dense_results} presents the downstream evaluation results. HybridNorm$^*$ consistently outperforms Pre-Norm in most tasks, achieving the highest average score. Notably, it demonstrates substantial improvements in tasks such as BasicArithmetic (+3.11), HellaSwag (+1.71), and COPA (+3.78), indicating enhanced generalization and robustness. These results underscore the scalability of HybridNorm$^*$ in larger transformer models, further validating its effectiveness in improving both training stability and downstream performance. More results can be found in Figure \ref{fig:overall results for dense models}. In Appendix~\ref{sec:Comparison with Other Methods}, we present additional comparisons with other approaches, such as Post-Norm and Mix-LN. We further provide analyses of signal propagation and entropy dynamics across different methods, as detailed in Appendix~\ref{sec:Signal Propagation} \& \ref{sec:Entropy Dynamics}.

\paragraph{\textbf{MoE Models}}
For MoE models, we conduct experiments on MoE-1B-7B with 8 experts selected from a pool of 64. Figure \ref{fig:training dynamics for moe models2} presents the training dynamics of MoE models under different normalization strategies. Throughout the training, HybridNorm$^*$ consistently achieves lower training loss and validation perplexity compared to Pre-Norm. These findings indicate that HybridNorm$^*$ effectively alleviates optimization difficulties in large-scale MoE models, resulting in more stable training and enhanced downstream performance. Further, as shown in Table~\ref{table:moe_results}, HybridNorm$^*$ consistently outperforms Pre-Norm across various downstream tasks, achieving the highest average score. Notably, it demonstrates significant improvements in ARC-C (+2.35), ARC-E (+2.40), and OpenbookQA (+0.81), highlighting its ability to enhance generalization across diverse benchmarks.

\subsection{More Experiment in 7B Dense Model}
The experimental setup primarily follows the 7B model configuration outlined in \citep{olmo20242} and the number of training tokens is 150B. To further enhance model stability, we introduce an additional normalization layer to the output of the attention module in 7B model, with its weight initialized to $1/\sqrt{2L}$ \citep{wang2025scale}, where $L$ denotes the number of model layers.
Table \ref{table:training PPL of 7B dense}  presents the training loss and validation perplexity (PPL), while Table \ref{table:downstream results of 7B dense} reports the downstream evaluation metrics.

The experimental results demonstrate the clear superiority of HybridNorm$^*$ over the traditional Pre-Norm approach in the 7B model. 
Firstly, HybridNorm$^*$ achieves a lower training loss (2.430 vs.\ 2.469), indicating more efficient optimization during training. 
This improvement in training dynamics translates into consistently better performance across a range of language modeling benchmarks. 
Specifically, HybridNorm$^*$ yields lower perplexity scores on all evaluated datasets, including C4, Books, Common Crawl, Wiki, and Wikitext~103. 
For instance, perplexity on the C4 dataset drops from 15.32 to 14.83, suggesting stronger generalization across both structured and unstructured corpora.

Moreover, HybridNorm$^*$ shows consistent improvements on all downstream tasks, covering various domains such as arithmetic reasoning, commonsense QA, and natural language inference. 
Notably, performance on Basic Arithmetic improves significantly from 43.50\% to 50.67\%. 
Across the full suite of tasks, including ARC, PIQA, COPA, BoolQ, and WinoGrande, HybridNorm$^*$ outperforms Pre-Norm in every case, 
leading to an overall increase in average accuracy from 60.61\% to 63.06\%.

\begin{figure}[t]
\begin{center}
\centerline{\includegraphics[width=\linewidth]{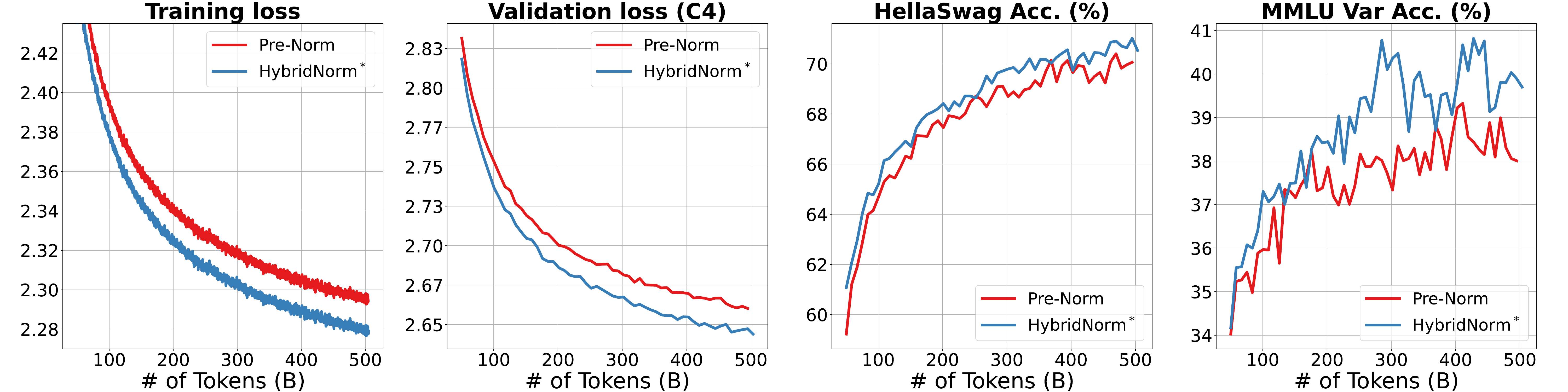}}
\caption{Training dynamics for MoE-1B-7B models with Pre-Norm and HybridNorm$^*$ under 500B training tokens. We present the training loss, validation loss, and downstream performance on HellaSwag and MMLU Var, demonstrating that HybridNorm$^*$ achieves superior performance.}
\label{fig:training dynamics for moe models2}
\end{center}
\end{figure}

\begin{table}[t]
\caption{Downstream evaluation results of MoE-1B-7B with Pre-Norm and HybridNorm$^*$ under 500B training tokens. OQA refers to OpenbookQA.}
\label{table:moe_results}
\begin{center}
\setlength{\tabcolsep}{6pt}
\begin{tabular}{lrrrrrrrrr}
\toprule
\textbf{Methods} & \textbf{HellaSwag} & \textbf{ARC-C} & \textbf{ARC-E} & \textbf{PIQA} & \textbf{WinoGrande} & \textbf{OQA} & \textbf{BoolQ} & \textbf{COPA} & \textbf{Avg.}$\uparrow$ \\
\midrule
Pre-Norm         &  69.94                                                                                               & 39.92                              & 73.37                              & 77.82                             & 63.34                                                                                                & 42.37                            & 67.47                              & 85.40                             & 64.95                             \\
HybridNorm$^*$   & \textbf{70.71}                                                                                      & \textbf{42.27}                     & \textbf{75.77}                     & \textbf{78.06}                    & \textbf{64.58}                                                                                       & \textbf{43.18}                   & \textbf{68.41}                     & \textbf{86.00}                    & \textbf{66.12}\\
\bottomrule
\end{tabular}
\end{center}
\end{table}

\begin{table*}[t]
\caption{Training loss and perplexity (PPL) comparison of Pre-Norm and HybridNorm$^*$ for 7B dense mdoel across multiple datasets. CC means Common Crawl. }
\label{table:training PPL of 7B dense}
\begin{center}
\setlength{\tabcolsep}{6.5pt}
\begin{tabular}{lcccccccccc}
\toprule
\textbf{Methods} &\textbf{ Loss} &	\textbf{C4} &	\textbf{Books} &	\textbf{CC} &	\textbf{peS2o} &	\textbf{Reddit} &	\textbf{Stack} &	\textbf{Wiki} &	\textbf{Pile} &	\textbf{Wikitext} \\
\midrule
Pre-Norm      & 2.469 & 15.32 & 13.37 & 17.10 & 8.07 & 20.31 & 3.57 & 9.96 & 7.85 & 10.09 \\
HybridNorm$^*$ & \textbf{2.430} & \textbf{14.83} & \textbf{12.77} & \textbf{16.77} & \textbf{7.67} & \textbf{19.65} & \textbf{3.40} & \textbf{9.34} & \textbf{7.81} & \textbf{9.16} \\
\bottomrule
\end{tabular}
\end{center}
\end{table*}

\begin{table*}[t]
\caption{Downstream evaluation results of 7B dense models trained with Pre-Norm and HybridNorm$^*$. 
All numbers denote task accuracies (\%). BA and OAQ mean Basic Arithmetic and Openbook QA, respectively.}
\label{table:downstream results of 7B dense}
\begin{center}
\setlength{\tabcolsep}{5.5pt}
\begin{tabular}{lccccccccccc}
\toprule
\textbf{Methods} 
& \textbf{BA}
& \begin{tabular}[c]{@{}c@{}}\textbf{Hella}\\ \textbf{Swag}\end{tabular}
& \textbf{SciQ}
& \textbf{ARC-C}
& \textbf{ARC-E}
& \textbf{PIQA}
& \textbf{OQA}
& \textbf{COPA}
& \begin{tabular}[c]{@{}c@{}}\textbf{Wino}\\ \textbf{Grande}\end{tabular}
& \textbf{BoolQ}
& \textbf{Avg.}$\uparrow$ \\
\midrule
Pre-Norm      & 43.50 & 69.03 & 46.57 & 41.47 & 74.95 & 76.71 & 39.40 & 84.00 & 63.00 & 67.43 & 60.61 \\
HybridNorm$^*$ & \textbf{50.67} & \textbf{70.77} & \textbf{47.44} & \textbf{43.82} & \textbf{75.82} & \textbf{78.93} & \textbf{43.77} & \textbf{86.01} & \textbf{63.32} & \textbf{70.06} & \textbf{63.06} \\
\bottomrule
\end{tabular}
\end{center}
\end{table*}

\subsection{Ablation Studies}\label{sec:ablation}
\begin{table}[t]
\caption{Training loss and validation perplexity of 550M dense models with Pre-Norm and HybridNorm under various initialization methods and 400B training tokens.}
\label{table:initialization}
\begin{center}
\setlength{\tabcolsep}{6pt}
\begin{tabular}{clrr|clrr}
\toprule
\textbf{Method}             & \textbf{Initialization} & \textbf{Loss}$\downarrow$ & \textbf{PPL on C4}$\downarrow$ &\textbf{Method}             & \textbf{Initialization} & \textbf{Loss}$\downarrow$ & \textbf{PPL on C4}$\downarrow$\\
\midrule
\multirow{3}{*}{Pre-Norm}   & Normal                  & \textbf{2.75} & \textbf{20.29}   &\multirow{3}{*}{HybridNorm} & Normal                  & 2.76          & 20.44   \\
                            & Depth-Scaled            & 2.76          & 20.49     && Depth-Scaled            & 2.76          & 20.40           \\
                            & Megatron                & 2.76          & 20.44     && Megatron                & \textbf{2.74} & \textbf{20.00}           \\
\bottomrule
\end{tabular}
\end{center}
\end{table}

\paragraph{\textbf{Initialization}}
To evaluate the sensitivity of Pre-Norm and HybridNorm to initialization schemes, we conduct ablation studies comparing three widely used initialization strategies: \textit{Normal initialization} \cite{nguyen2019transformers}, \textit{Depth-Scaled initialization} \cite{zhang2019improving,open_lm}, and \textit{Megatron initialization} \cite{shoeybi2019megatron}.  
Normal initialization initializes all weights of linear layers using a truncated normal distribution with mean zero and standard deviation $1/\sqrt{2.5d}$, where $d$ is the hidden dimension. Depth-Scaled initialization and Megatron initialization introduce scaling factors to stabilize training in deep architectures. Specifically, Depth-Scaled initialization scales down the output projections of the attention and FFN by a factor of $\sqrt{2l}$, where $l$ is the layer index. In contrast, Megatron initialization scales down these projections by $\sqrt{2L}$, where $L$ is the total number of layers, mitigating gradient variance accumulation in very deep transformers.  As shown in Table~\ref{table:initialization}, Pre-Norm and HybridNorm exhibit sensitivity across different initialization methods, achieving the lowest training loss and perplexity under Normal initialization and Megatron initialization, respectively. Therefore, we set the default initialization method for Pre-Norm to Normal initialization and for HybridNorm to Megatron initialization in all experiments, respectively, which ensures that even under settings that may be more favorable to baseline models, the superiority of our approach is effectively demonstrated.

\begin{table}[t]
\caption{Abalation study of the position of normalization layers on 550M dense models with 400B tokens. We report the training loss, the perplexity on C4 and Pile, and the accuracy on HS (HellaSwag).}
\label{table:normalization position}
\begin{minipage}{0.5\linewidth}
\centering
\begin{center}
\setlength{\tabcolsep}{6pt}
\begin{tabular}{lr|rrr}
\toprule
\textbf{Methods} & \textbf{Loss}$\downarrow$ & \textbf{C4}$\downarrow$ & \textbf{Pile}$\downarrow$ & \textbf{\makecell{HS}}$\uparrow$ \\
\midrule
QKVC-Post        & 2.74          & 20.05       & 10.34         & 52.68              \\
QKC-Post	     &2.73 	         & 20.00 	   & 10.31 	       & 52.26              \\
QK-Post          &- &\multicolumn{3}{c}{diverge}                                      \\
KV-Post          & 2.74          & 20.11       & 10.38         & 52.10              \\
KC-Post          & 2.75          & 20.34       & 10.47         & 51.15              \\
\midrule
Pre-QKV-Post     & 2.74          & 20.13       & 10.37         & 52.65              \\
Pre-QKV-Pre      & 2.74          & 19.97       & 10.33         & 53.05              \\
Pre-QK-Pre       & 2.75          & 20.22       & 10.43         & 52.29              \\
QKV-Pre          & 2.74          & 19.96       & 10.33         & 52.57              \\
\bottomrule
\end{tabular}
\end{center}
\end{minipage}
\hfill
\begin{minipage}{0.5\linewidth}
\centering
\begin{center}
\setlength{\tabcolsep}{6pt}
\renewcommand{\arraystretch}{1.11}
\begin{tabular}{lr|rrr}
\toprule
\textbf{Methods} & \textbf{Loss}$\downarrow$ & \textbf{C4}$\downarrow$ & \textbf{Pile}$\downarrow$ & \textbf{\makecell{HS}}$\uparrow$ \\
\midrule
Post-Norm        & 2.76          & 20.43       & 10.57         & 51.20              \\
Pre-Norm         & 2.75          & 20.30       & 10.48         & 51.97              \\
QK-norm & 2.75          & 20.22       & 10.43         & 52.29              \\
Mix-LN           & 2.76          & 20.43       & 10.56         & 51.29              \\
Post-Pre         & 2.75          & 20.26       & 10.46         & 51.19              \\
Pre-Post         & 2.74          & 20.15       & 10.40         & 52.42              \\
\midrule
HybridNorm       & 2.74          & 20.00       & 10.29         & 53.35              \\
HybridNorm$^*$   & \textbf{2.73}          & \textbf{19.85}       & \textbf{10.25}         & \textbf{53.36}\\
\bottomrule
\end{tabular}
\end{center}
\end{minipage}
\end{table}

\paragraph{\textbf{Normalization Position}}
We investigate the impact of the position of normalization layers within the transformer block. \textbf{First}, we examine the effect of varying the placement of QKV normalization (e.g., normalization setting in attention). We extend the normalization setting by considering not only the Query (Q), Key (K), and Value (V) components but also the Context (C), which refers to the output of the attention mechanism. For instance, QKVC-Norm applies normalization to all four components: Query, Key, Value, and Context, while KV-Norm and KC-Norm focus on the normalization of the Key-Value and Key-Context pairs, respectively. QKVC-Post refers to transformer blocks that employ QKVC-Norm in the MHA while using Post-Norm in the FFN.
\textbf{Second}, we explore the effect of integrating QKV-Norm into different transformer architectures. For instance, Pre-QKV-Post refers to a configuration where QKV-Norm is applied with Pre-Norm in the MHA layer, while the FFN layer utilizes Post-Norm. Other configurations follow similar definitions. \textbf{Finally}, we compare various hybrid combinations of Pre-Norm and Post-Norm. Pre-Post refers to transformer blocks that apply Pre-Norm in the MHA and Post-Norm in the FFN, whereas Post-Pre adopts the opposite configuration. Mathematical formulas for methods mentioned above can be found in Appendix \ref{sec:formulas for normalization position}.

As shown in Table~\ref{table:normalization position}, HybridNorm (a.k.a. QKV-Post) and its variant HybridNorm$^*$ consistently surpass other methods. Notably, HybridNorm$^*$ achieves the lowest training loss and perplexity while attaining the highest accuracy on HellaSwag. Specifically, by comparing HybridNorm with the left first block in Table~\ref{table:normalization position}, we find that QKV-Norm is the most effective normalization. Similarly, comparing HybridNorm with the left second block, we observe that combining QKV-Norm with Post-Norm in the FFN yields superior performance. From the right table, one can see that the Pre-Post configuration indeed leads to improved performance, while replacing Pre-Norm in the MHA with QKV-Norm to form HybridNorm further enhances performance, achieving the best results.

\begin{figure}[t]
    \centering
	\begin{minipage}{0.58\linewidth}
		\centering
		\includegraphics[width=\linewidth]{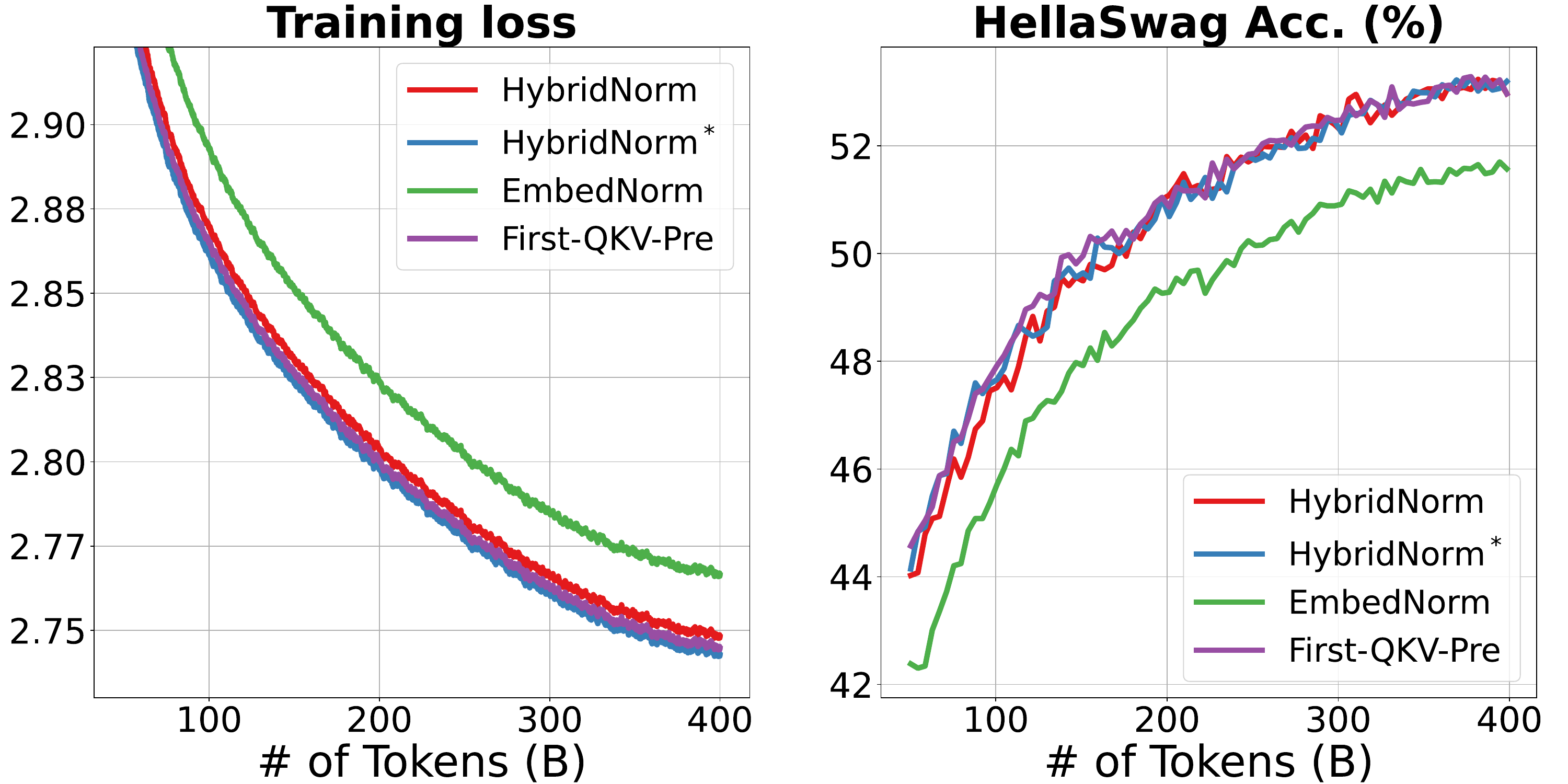}
		\caption{Training loss and accuracy on HellaSwag of 550M dense models with different normalization methods for the first block.}
            \label{fig:special treat of first block}
	\end{minipage}
        \hskip 0.17in
        \begin{minipage}{0.38\linewidth}
		\centering
		\includegraphics[width=\linewidth]{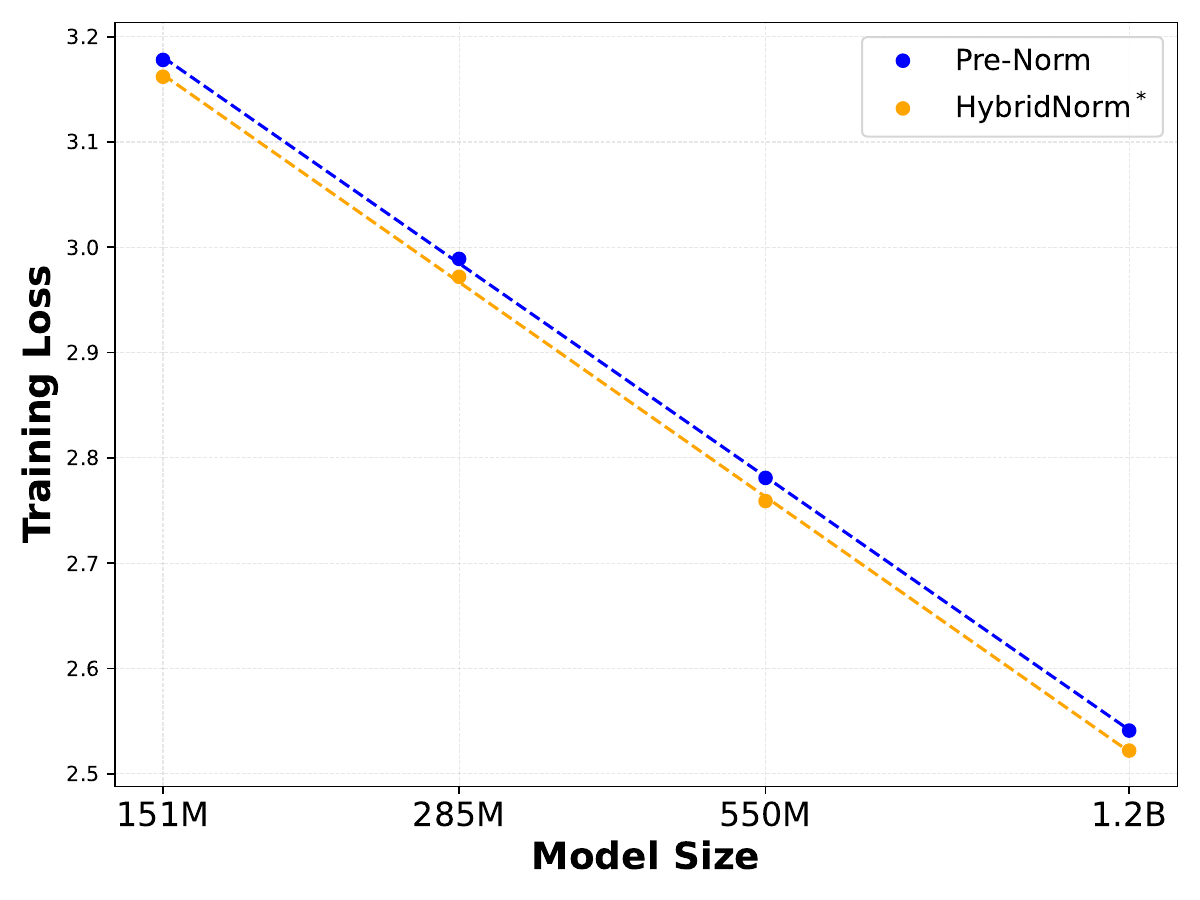}
		\caption{Scaling law curves of Pre-Norm and HybridNorm$^*$.}
            \label{fig:scaling_curve_with_regression}
	\end{minipage}
\end{figure}

\paragraph{\textbf{Special Treatment of First Block}}
For the special treatment of the first block, we test different architectures, such as adding a normalization layer after embedding (call EmbedNorm) and armed the first block with QKV-norm and Pre-Norm in FFN (call First-QKV-Pre), which formulations are:
\begin{align}
    Y^0 &= \mathrm{MHA}_{QKV}(X^0) +X^0,\quad
    X^{1}=\mathrm{FFN}(\mathrm{Norm}(Y^0)) +Y^0.
\end{align}
As shown in Figure \ref{fig:special treat of first block}, we can see that, except for EmbedNorm, the special treatment of the first block effectively reduces training loss and improves downstream performance.

\begin{table}[t]
\caption{Performance of dense models with different depths under 400B training tokens. We report the training loss, the perplexity on C4 and Pile, and the accuracy on HellaSwag and PIQA.}
\label{table:deeper_models}
\begin{center}
\setlength{\tabcolsep}{5pt}
\begin{tabular}{lrrrrr|rrrrr}
\toprule
\multirow{2}{*}{\textbf{Models}} & \multicolumn{5}{c}{\textbf{550M, 16 Layers}}                & \multicolumn{5}{c}{\textbf{543M, 29 Layers}}                      \\
                        & \textbf{Loss}$\downarrow$ & \textbf{C4}$\downarrow$ & \textbf{Pile}$\downarrow$ & \textbf{HellaSwag}$\uparrow$ & \textbf{PIQA}$\uparrow$  & \textbf{Loss}$\downarrow$ & \textbf{C4}$\downarrow$ & \textbf{Pile}$\downarrow$ & \textbf{HellaSwag}$\uparrow$ & \textbf{PIQA}$\uparrow$  \\
\midrule
Post-Norm                 & 2.76 & 20.43     & 10.57       & 51.20     & 71.80 & \multicolumn{5}{c}{diverge}                        \\
Pre-Norm                  & 2.75 & 20.30     & 10.48       & 51.97     & 71.14 & 2.73 & 19.88     & 10.31       & 53.86     & 71.63 \\
HybridNorm              & 2.74 & 20.00     & 10.29       & 53.35     & \textbf{71.96} & 2.72 & 19.67     & 10.18       & \textbf{54.89}     & \textbf{72.62} \\
HybridNorm$^*$          & \textbf{2.73} & \textbf{19.85}     & \textbf{10.25}      & \textbf{53.36}     & 71.15 & \textbf{2.71} &\textbf{ 19.52}     & \textbf{10.10}      & 54.54     & 72.01 \\                      
\bottomrule
\end{tabular}
\end{center}
\end{table}

\subsection{Scaling Laws Experiments}
We compare the loss scaling curves between Pre-Norm and HybridNorm$^*$ across a range of dense model sizes, from 151M to 1.2B parameters. The model sizes used for the scaling law experiments are detailed in Table \ref{table:model architecture}, and all models are trained using the same setting and hyperparameters for fair comparison, as specified in Table \ref{table:hyperparameters}. Models with 151M, 285M, 550M, and 1.2B parameters are trained on 200B, 200B, 300B, and 1T tokens, respectively. As shown in Figure \ref{fig:scaling_curve_with_regression}, HybridNorm$^*$ exhibits superior scaling properties, demonstrating lower training loss as the model size increases. This highlights its capacity to maintain both training stability and performance, even for extremely large models, thereby making it highly suitable for scaling to billion-parameter regimes.

\subsection{Deeper Models}
To further evaluate the robustness of HybridNorm and HybridNorm$^*$ in deeper architectures, we conduct experiments on transformers with depths ranging from 16 to 29 layers while maintaining a comparable parameter budget. This setup allows for a fair comparison of different normalization strategies in deep transformer architectures. As shown in Table~\ref{table:deeper_models}, both HybridNorm and HybridNorm$^*$ consistently outperform Pre-Norm and Post-Norm across various depths, demonstrating their effectiveness in stabilizing deep model training. A particularly striking observation is that Post-Norm fails to converge at 29 layers, reinforcing its well-documented instability in deeper architectures. In contrast, HybridNorm and HybridNorm$^*$ not only ensure stable training across all depths but also achieve significantly lower training loss and perplexity on both the C4 and Pile datasets. These improvements indicate that HybridNorm-based normalization strategies mitigate optimization difficulties that commonly arise in deep transformers.
Furthermore, HybridNorm$^*$ achieves the highest accuracy on challenging downstream benchmarks such as HellaSwag and PIQA, suggesting that its benefits extend beyond mere training stability to enhanced generalization on real-world tasks. These results provide strong empirical evidence that HybridNorm-based normalization schemes enable deeper transformer training while preserving superior optimization efficiency and downstream task performance.

\section{Conclusion} \label{sec:conclusion}
In this paper, we have introduced HybridNorm, a novel hybrid normalization strategy that has seamlessly integrated the advantages of both Pre-Norm and Post-Norm, thereby addressing the longstanding trade-offs in transformer training. We have provided both comprehensive theoretical and empirical analyses to demonstrate how HybridNorm has stabilized gradient propagation while preserving strong regularization effects, ultimately improving both convergence speed and final model performance. Extensive experiments across diverse benchmarks have substantiated the effectiveness of our approach, consistently showing that HybridNorm has outperformed conventional normalization schemes in terms of stability and accuracy. These findings have highlighted the importance of re-examining the role and placement of normalization within transformer architectures, paving the way for further exploration of hybrid normalization paradigms. We believe that HybridNorm has marked a significant step forward in the development of more robust and efficient transformer models, offering practical advantages for training next-generation large-scale neural networks.

\clearpage

\bibliographystyle{plainnat}
\bibliography{main}

\begin{thebibliography}{49}
\providecommand{\natexlab}[1]{#1}
\providecommand{\url}[1]{\texttt{#1}}
\expandafter\ifx\csname urlstyle\endcsname\relax
  \providecommand{\doi}[1]{doi: #1}\else
  \providecommand{\doi}{doi: \begingroup \urlstyle{rm}\Url}\fi

\bibitem[Ba et~al.(2016)Ba, Kiros, and Hinton]{ba2016layer}
Jimmy~Lei Ba, Jamie~Ryan Kiros, and Geoffrey~E Hinton.
\newblock Layer normalization.
\newblock \emph{arXiv preprint arXiv:1607.06450}, 2016.

\bibitem[Bisk et~al.(2020)Bisk, Zellers, Gao, Choi, et~al.]{bisk2020piqa}
Yonatan Bisk, Rowan Zellers, Jianfeng Gao, Yejin Choi, et~al.
\newblock Piqa: Reasoning about physical commonsense in natural language.
\newblock In \emph{Proceedings of the AAAI conference on artificial intelligence}, volume~34, pages 7432--7439, 2020.

\bibitem[Brown et~al.(2020)Brown, Mann, Ryder, Subbiah, Kaplan, Dhariwal, Neelakantan, Shyam, Sastry, Askell, Agarwal, Herbert-Voss, Krueger, Henighan, Child, Ramesh, Ziegler, Wu, Winter, Hesse, Chen, Sigler, Litwin, Gray, Chess, Clark, Berner, McCandlish, Radford, Sutskever, and Amodei]{gpt3}
Tom~B Brown, Benjamin Mann, Nick Ryder, Melanie Subbiah, Jared Kaplan, Prafulla Dhariwal, Arvind Neelakantan, Pranav Shyam, Girish Sastry, Amanda Askell, Sandhini Agarwal, Ariel Herbert-Voss, Gretchen Krueger, Tom Henighan, Rewon Child, Aditya Ramesh, Daniel~M. Ziegler, Jeffrey Wu, Clemens Winter, Christopher Hesse, Mark Chen, Eric Sigler, Mateusz Litwin, Scott Gray, Benjamin Chess, Jack Clark, Christopher Berner, Sam McCandlish, Alec Radford, Ilya Sutskever, and Dario Amodei.
\newblock Language models are few-shot learners.
\newblock In \emph{NeurIPS}, 2020.

\bibitem[Clark et~al.(2019)Clark, Lee, Chang, Kwiatkowski, Collins, and Toutanova]{clark2019boolq}
Christopher Clark, Kenton Lee, Ming-Wei Chang, Tom Kwiatkowski, Michael Collins, and Kristina Toutanova.
\newblock Boolq: Exploring the surprising difficulty of natural yes/no questions.
\newblock In \emph{Proceedings of the 2019 Conference of the North American Chapter of the Association for Computational Linguistics: Human Language Technologies, Volume 1 (Long and Short Papers)}, pages 2924--2936, 2019.

\bibitem[Clark et~al.(2018)Clark, Cowhey, Etzioni, Khot, Sabharwal, Schoenick, and Tafjord]{clark2018think}
Peter Clark, Isaac Cowhey, Oren Etzioni, Tushar Khot, Ashish Sabharwal, Carissa Schoenick, and Oyvind Tafjord.
\newblock Think you have solved question answering? try arc, the ai2 reasoning challenge.
\newblock \emph{arXiv preprint arXiv:1803.05457}, 2018.

\bibitem[DeepSeek-AI(2024)]{deepseekv2}
DeepSeek-AI.
\newblock Deepseek-v2: A strong, economical, and efficient mixture-of-experts language model, 2024.

\bibitem[Dehghani et~al.(2023)Dehghani, Djolonga, Mustafa, Padlewski, Heek, Gilmer, Steiner, Caron, Geirhos, Alabdulmohsin, et~al.]{dehghani2023scaling}
Mostafa Dehghani, Josip Djolonga, Basil Mustafa, Piotr Padlewski, Jonathan Heek, Justin Gilmer, Andreas~Peter Steiner, Mathilde Caron, Robert Geirhos, Ibrahim Alabdulmohsin, et~al.
\newblock Scaling vision transformers to 22 billion parameters.
\newblock In \emph{ICML}, 2023.

\bibitem[Devlin et~al.(2019)Devlin, Chang, Lee, and Toutanova]{bert}
Jacob Devlin, Ming-Wei Chang, Kenton Lee, and Kristina Toutanova.
\newblock Bert: Pre-training of deep bidirectional transformers for language understanding.
\newblock In \emph{Proceedings of the 2019 Conference of the North American Chapter of the Association for Computational Linguistics: Human Language Technologies}, 2019.

\bibitem[Ding et~al.(2021)Ding, Yang, Hong, Zheng, Zhou, Yin, Lin, Zou, Shao, Yang, et~al.]{ding2021cogview}
Ming Ding, Zhuoyi Yang, Wenyi Hong, Wendi Zheng, Chang Zhou, Da~Yin, Junyang Lin, Xu~Zou, Zhou Shao, Hongxia Yang, et~al.
\newblock Cogview: Mastering text-to-image generation via transformers.
\newblock In \emph{NeurIPS}, 2021.

\bibitem[Dosovitskiy et~al.(2021)Dosovitskiy, Beyer, Kolesnikov, Weissenborn, Zhai, Unterthiner, Dehghani, Minderer, Heigold, Gelly, Uszkoreit, and Houlsby]{dosovitskiy2021vit}
Alexey Dosovitskiy, Lucas Beyer, Alexander Kolesnikov, Dirk Weissenborn, Xiaohua Zhai, Thomas Unterthiner, Mostafa Dehghani, Matthias Minderer, Georg Heigold, Sylvain Gelly, Jakob Uszkoreit, and Neil Houlsby.
\newblock An image is worth 16x16 words: Transformers for image recognition at scale.
\newblock In \emph{International Conference on Learning Representations}, 2021.

\bibitem[Dubey et~al.(2024)Dubey, Jauhri, Pandey, Kadian, Al-Dahle, Letman, Mathur, Schelten, Yang, Fan, et~al.]{dubey2024llama}
Abhimanyu Dubey, Abhinav Jauhri, Abhinav Pandey, Abhishek Kadian, Ahmad Al-Dahle, Aiesha Letman, Akhil Mathur, Alan Schelten, Amy Yang, Angela Fan, et~al.
\newblock The llama 3 herd of models.
\newblock \emph{arXiv preprint arXiv:2407.21783}, 2024.

\bibitem[Gao et~al.(2023)Gao, Tow, Abbasi, Biderman, Black, DiPofi, Foster, Golding, Hsu, Le~Noac'h, Li, McDonell, Muennighoff, Ociepa, Phang, Reynolds, Schoelkopf, Skowron, Sutawika, Tang, Thite, Wang, Wang, and Zou]{eval-harness}
Leo Gao, Jonathan Tow, Baber Abbasi, Stella Biderman, Sid Black, Anthony DiPofi, Charles Foster, Laurence Golding, Jeffrey Hsu, Alain Le~Noac'h, Haonan Li, Kyle McDonell, Niklas Muennighoff, Chris Ociepa, Jason Phang, Laria Reynolds, Hailey Schoelkopf, Aviya Skowron, Lintang Sutawika, Eric Tang, Anish Thite, Ben Wang, Kevin Wang, and Andy Zou.
\newblock A framework for few-shot language model evaluation, 12 2023.
\newblock URL \url{https://zenodo.org/records/10256836}.

\bibitem[Gordon et~al.(2012)Gordon, Kozareva, and Roemmele]{gordon2012semeval}
Andrew Gordon, Zornitsa Kozareva, and Melissa Roemmele.
\newblock Semeval-2012 task 7: Choice of plausible alternatives: An evaluation of commonsense causal reasoning.
\newblock In \emph{* SEM 2012: The First Joint Conference on Lexical and Computational Semantics--Volume 1: Proceedings of the main conference and the shared task, and Volume 2: Proceedings of the Sixth International Workshop on Semantic Evaluation (SemEval 2012)}, pages 394--398, 2012.

\bibitem[Gururangan et~al.(2023)Gururangan, Wortsman, Gadre, Dave, Kilian, Shi, Mercat, Smyrnis, Ilharco, Jordan, Heckel, Dimakis, Farhadi, Shankar, and Schmidt]{open_lm}
Suchin Gururangan, Mitchell Wortsman, Samir~Yitzhak Gadre, Achal Dave, Maciej Kilian, Weijia Shi, Jean Mercat, Georgios Smyrnis, Gabriel Ilharco, Matt Jordan, Reinhard Heckel, Alex Dimakis, Ali Farhadi, Vaishaal Shankar, and Ludwig Schmidt.
\newblock Openlm: a minimal but performative language modeling (lm) repository, 2023.
\newblock URL \url{https://github.com/mlfoundations/open_lm/}.
\newblock GitHub repository.

\bibitem[Hendrycks et~al.(2021)Hendrycks, Burns, Basart, Zou, Mazeika, Song, and Steinhardt]{hendrycksmeasuring}
Dan Hendrycks, Collin Burns, Steven Basart, Andy Zou, Mantas Mazeika, Dawn Song, and Jacob Steinhardt.
\newblock Measuring massive multitask language understanding.
\newblock In \emph{International Conference on Learning Representations}, 2021.

\bibitem[Henry et~al.(2020)Henry, Dachapally, Pawar, and Chen]{henry2020query}
Alex Henry, Prudhvi~Raj Dachapally, Shubham~Shantaram Pawar, and Yuxuan Chen.
\newblock Query-key normalization for transformers.
\newblock In \emph{Findings of the Association for Computational Linguistics: EMNLP 2020}, pages 4246--4253, 2020.

\bibitem[Horn and Johnson(2012)]{horn2012matrix}
Roger~A Horn and Charles~R Johnson.
\newblock \emph{Matrix analysis}.
\newblock Cambridge university press, 2012.

\bibitem[Jha and Reagen(2024)]{jha2024relu}
Nandan~Kumar Jha and Brandon Reagen.
\newblock Relu's revival: On the entropic overload in normalization-free large language models.
\newblock \emph{arXiv preprint arXiv:2410.09637}, 2024.

\bibitem[Klein et~al.(2017)Klein, Kim, Deng, Senellart, and Rush]{klein2017opennmt}
Guillaume Klein, Yoon Kim, Yuntian Deng, Jean Senellart, and Alexander~M Rush.
\newblock Opennmt: Open-source toolkit for neural machine translation.
\newblock In \emph{Proceedings of ACL 2017, System Demonstrations}, pages 67--72, 2017.

\bibitem[Li et~al.(2025)Li, Yin, and Liu]{li2025mixln}
Pengxiang Li, Lu~Yin, and Shiwei Liu.
\newblock Mix-{LN}: Unleashing the power of deeper layers by combining pre-{LN} and post-{LN}.
\newblock In \emph{The Thirteenth International Conference on Learning Representations}, 2025.

\bibitem[Liu et~al.(2020)Liu, Liu, Gao, Chen, and Han]{liu2020understanding}
Liyuan Liu, Xiaodong Liu, Jianfeng Gao, Weizhu Chen, and Jiawei Han.
\newblock Understanding the difficulty of training transformers.
\newblock In \emph{Proceedings of the 2020 Conference on Empirical Methods in Natural Language Processing (EMNLP)}, pages 5747--5763, 2020.

\bibitem[Menary et~al.(2024)Menary, Kaski, and Freitas]{menary2024transformer}
Stephen Menary, Samuel Kaski, and Andre Freitas.
\newblock Transformer normalisation layers and the independence of semantic subspaces.
\newblock \emph{arXiv preprint arXiv:2406.17837}, 2024.

\bibitem[Mihaylov et~al.(2018)Mihaylov, Clark, Khot, and Sabharwal]{mihaylov2018can}
Todor Mihaylov, Peter Clark, Tushar Khot, and Ashish Sabharwal.
\newblock Can a suit of armor conduct electricity? a new dataset for open book question answering.
\newblock In \emph{Proceedings of the 2018 Conference on Empirical Methods in Natural Language Processing}, pages 2381--2391, 2018.

\bibitem[Muennighoff et~al.(2025)Muennighoff, Soldaini, Groeneveld, Lo, Morrison, Min, Shi, Walsh, Tafjord, Lambert, Gu, Arora, Bhagia, Schwenk, Wadden, Wettig, Hui, Dettmers, Kiela, Farhadi, Smith, Koh, Singh, and Hajishirzi]{muennighoff2025olmoe}
Niklas Muennighoff, Luca Soldaini, Dirk Groeneveld, Kyle Lo, Jacob Morrison, Sewon Min, Weijia Shi, Evan~Pete Walsh, Oyvind Tafjord, Nathan Lambert, Yuling Gu, Shane Arora, Akshita Bhagia, Dustin Schwenk, David Wadden, Alexander Wettig, Binyuan Hui, Tim Dettmers, Douwe Kiela, Ali Farhadi, Noah~A. Smith, Pang~Wei Koh, Amanpreet Singh, and Hannaneh Hajishirzi.
\newblock {OLM}oe: Open mixture-of-experts language models.
\newblock In \emph{The Thirteenth International Conference on Learning Representations}, 2025.

\bibitem[Nguyen and Salazar(2019)]{nguyen2019transformers}
Toan~Q Nguyen and Julian Salazar.
\newblock Transformers without tears: Improving the normalization of self-attention.
\newblock In \emph{Proceedings of the 16th International Conference on Spoken Language Translation}, 2019.

\bibitem[Noci et~al.(2022)Noci, Anagnostidis, Biggio, Orvieto, Singh, and Lucchi]{noci2022signal}
Lorenzo Noci, Sotiris Anagnostidis, Luca Biggio, Antonio Orvieto, Sidak~Pal Singh, and Aurelien Lucchi.
\newblock Signal propagation in transformers: Theoretical perspectives and the role of rank collapse.
\newblock In \emph{NeurIPS}, 2022.

\bibitem[OLMo et~al.(2024)OLMo, Walsh, Soldaini, Groeneveld, Lo, Arora, Bhagia, Gu, Huang, Jordan, Lambert, Schwenk, Tafjord, Anderson, Atkinson, Brahman, Clark, Dasigi, Dziri, Guerquin, Ivison, Koh, Liu, Malik, Merrill, Miranda, Morrison, Murray, Nam, Pyatkin, Rangapur, Schmitz, Skjonsberg, Wadden, Wilhelm, Wilson, Zettlemoyer, Farhadi, Smith, and Hajishirzi]{olmo20242}
Team OLMo, Pete Walsh, Luca Soldaini, Dirk Groeneveld, Kyle Lo, Shane Arora, Akshita Bhagia, Yuling Gu, Shengyi Huang, Matt Jordan, Nathan Lambert, Dustin Schwenk, Oyvind Tafjord, Taira Anderson, David Atkinson, Faeze Brahman, Christopher Clark, Pradeep Dasigi, Nouha Dziri, Michal Guerquin, Hamish Ivison, Pang~Wei Koh, Jiacheng Liu, Saumya Malik, William Merrill, Lester James~V. Miranda, Jacob Morrison, Tyler Murray, Crystal Nam, Valentina Pyatkin, Aman Rangapur, Michael Schmitz, Sam Skjonsberg, David Wadden, Christopher Wilhelm, Michael Wilson, Luke Zettlemoyer, Ali Farhadi, Noah~A. Smith, and Hannaneh Hajishirzi.
\newblock 2 olmo 2 furious.
\newblock \emph{arXiv preprint arXiv:2501.00656}, 2024.

\bibitem[Ormaniec et~al.(2025)Ormaniec, Dangel, and Singh]{ormaniec2025what}
Weronika Ormaniec, Felix Dangel, and Sidak~Pal Singh.
\newblock What does it mean to be a transformer? insights from a theoretical hessian analysis.
\newblock In \emph{ICLR}, 2025.

\bibitem[Reddy et~al.(2019)Reddy, Chen, and Manning]{reddy2019coqa}
Siva Reddy, Danqi Chen, and Christopher~D Manning.
\newblock Coqa: A conversational question answering challenge.
\newblock \emph{Transactions of the Association for Computational Linguistics}, 7:\penalty0 249--266, 2019.

\bibitem[Rybakov et~al.(2024)Rybakov, Chrzanowski, Dykas, Xue, and Lanir]{rybakov2024methods}
Oleg Rybakov, Mike Chrzanowski, Peter Dykas, Jinze Xue, and Ben Lanir.
\newblock Methods of improving llm training stability.
\newblock \emph{arXiv preprint arXiv:2410.16682}, 2024.

\bibitem[Sakaguchi et~al.(2021)Sakaguchi, Bras, Bhagavatula, and Choi]{sakaguchi2021winogrande}
Keisuke Sakaguchi, Ronan~Le Bras, Chandra Bhagavatula, and Yejin Choi.
\newblock Winogrande: An adversarial winograd schema challenge at scale.
\newblock \emph{Communications of the ACM}, 64\penalty0 (9):\penalty0 99--106, 2021.

\bibitem[Sap et~al.(2019)Sap, Rashkin, Chen, LeBras, and Choi]{sap2019socialiqa}
Maarten Sap, Hannah Rashkin, Derek Chen, Ronan LeBras, and Yejin Choi.
\newblock Socialiqa: Commonsense reasoning about social interactions.
\newblock \emph{arXiv preprint arXiv:1904.09728}, 2019.

\bibitem[Shoeybi et~al.(2019)Shoeybi, Patwary, Puri, LeGresley, Casper, and Catanzaro]{shoeybi2019megatron}
Mohammad Shoeybi, Mostofa Patwary, Raul Puri, Patrick LeGresley, Jared Casper, and Bryan Catanzaro.
\newblock Megatron-lm: Training multi-billion parameter language models using model parallelism.
\newblock \emph{arXiv preprint arXiv:1909.08053}, 2019.

\bibitem[Singh et~al.(2021)Singh, Bachmann, and Hofmann]{singh2021analytic}
Sidak~Pal Singh, Gregor Bachmann, and Thomas Hofmann.
\newblock Analytic insights into structure and rank of neural network hessian maps.
\newblock In \emph{NeurIPS}, 2021.

\bibitem[Takase et~al.(2023)Takase, Kiyono, Kobayashi, and Suzuki]{takase2022b2t}
Sho Takase, Shun Kiyono, Sosuke Kobayashi, and Jun Suzuki.
\newblock {B}2{T} connection: Serving stability and performance in deep transformers.
\newblock In \emph{Findings of the Association for Computational Linguistics: ACL}, 2023.

\bibitem[Talmor et~al.(2019)Talmor, Herzig, Lourie, and Berant]{talmor2019commonsenseqa}
Alon Talmor, Jonathan Herzig, Nicholas Lourie, and Jonathan Berant.
\newblock Commonsenseqa: A question answering challenge targeting commonsense knowledge.
\newblock In \emph{Proceedings of the 2019 Conference of the North American Chapter of the Association for Computational Linguistics: Human Language Technologies, Volume 1 (Long and Short Papers)}, pages 4149--4158, 2019.

\bibitem[Team et~al.(2024)Team, Riviere, Pathak, Sessa, Hardin, Bhupatiraju, Hussenot, Mesnard, Shahriari, Ram{\'e}, et~al.]{team2024gemma}
Gemma Team, Morgane Riviere, Shreya Pathak, Pier~Giuseppe Sessa, Cassidy Hardin, Surya Bhupatiraju, L{\'e}onard Hussenot, Thomas Mesnard, Bobak Shahriari, Alexandre Ram{\'e}, et~al.
\newblock Gemma 2: Improving open language models at a practical size.
\newblock \emph{arXiv preprint arXiv:2408.00118}, 2024.

\bibitem[Vaswani et~al.(2017)Vaswani, Shazeer, Parmar, Uszkoreit, Jones, Gomez, Kaiser, and Polosukhin]{vaswani2017attention}
Ashish Vaswani, Noam Shazeer, Niki Parmar, Jakob Uszkoreit, Llion Jones, Aidan~N Gomez, {\L}ukasz Kaiser, and Illia Polosukhin.
\newblock Attention is all you need.
\newblock \emph{Advances in Neural Information Processing Systems}, 2017.

\bibitem[Wang et~al.(2024)Wang, Ma, Dong, Huang, Zhang, and Wei]{wang2024deepnet}
Hongyu Wang, Shuming Ma, Li~Dong, Shaohan Huang, Dongdong Zhang, and Furu Wei.
\newblock Deepnet: Scaling transformers to 1,000 layers.
\newblock \emph{IEEE Transactions on Pattern Analysis and Machine Intelligence}, 2024.

\bibitem[Wang et~al.(2019)Wang, Li, Xiao, Zhu, Li, Wong, and Chao]{wang2019learning}
Qiang Wang, Bei Li, Tong Xiao, Jingbo Zhu, Changliang Li, Derek~F Wong, and Lidia~S Chao.
\newblock Learning deep transformer models for machine translation.
\newblock In \emph{ACL}, 2019.

\bibitem[Wang et~al.(2025)Wang, Zhuo, Zeng, Zhou, Yang, and Li]{wang2025scale}
Ya~Wang, Zhijian Zhuo, Yutao Zeng, Xun Zhou, Jian Yang, and Xiaoqing Li.
\newblock Scale-distribution decoupling: Enabling stable and effective training of large language models.
\newblock \emph{arXiv preprint arXiv:2502.15499}, 2025.

\bibitem[Welbl et~al.(2017)Welbl, Liu, and Gardner]{welbl2017crowdsourcing}
Johannes Welbl, Nelson~F Liu, and Matt Gardner.
\newblock Crowdsourcing multiple choice science questions.
\newblock In \emph{Proceedings of the 3rd Workshop on Noisy User-generated Text}, pages 94--106, 2017.

\bibitem[Xie et~al.(2023)Xie, Zhang, Guo, Tan, Bian, Awadalla, Menezes, Qin, and Yan]{xie2023residual}
Shufang Xie, Huishuai Zhang, Junliang Guo, Xu~Tan, Jiang Bian, Hany~Hassan Awadalla, Arul Menezes, Tao Qin, and Rui Yan.
\newblock Residual: Transformer with dual residual connections.
\newblock \emph{arXiv preprint arXiv:2304.14802}, 2023.

\bibitem[Xiong et~al.(2020)Xiong, Yang, He, Zheng, Zheng, Xing, Zhang, Lan, Wang, and Liu]{xiong2020layer}
Ruibin Xiong, Yunchang Yang, Di~He, Kai Zheng, Shuxin Zheng, Chen Xing, Huishuai Zhang, Yanyan Lan, Liwei Wang, and Tieyan Liu.
\newblock On layer normalization in the transformer architecture.
\newblock In \emph{ICML}, 2020.

\bibitem[Yao et~al.(2021)Yao, Cao, Lin, Liu, Zhang, and Hu]{yao2021leveraging}
Zhuliang Yao, Yue Cao, Yutong Lin, Ze~Liu, Zheng Zhang, and Han Hu.
\newblock Leveraging batch normalization for vision transformers.
\newblock In \emph{Proceedings of the IEEE/CVF International Conference on Computer Vision (ICCV) Workshops}, 2021.

\bibitem[Yuksel et~al.(2012)Yuksel, Wilson, and Gader]{yuksel2012twenty}
Seniha~Esen Yuksel, Joseph~N Wilson, and Paul~D Gader.
\newblock Twenty years of mixture of experts.
\newblock \emph{IEEE transactions on neural networks and learning systems}, 23\penalty0 (8):\penalty0 1177--1193, 2012.

\bibitem[Zellers et~al.(2019)Zellers, Holtzman, Bisk, Farhadi, and Choi]{zellers2019hellaswag}
Rowan Zellers, Ari Holtzman, Yonatan Bisk, Ali Farhadi, and Yejin Choi.
\newblock Hellaswag: Can a machine really finish your sentence?
\newblock In \emph{Proceedings of the 57th Annual Meeting of the Association for Computational Linguistics}, pages 4791--4800, 2019.

\bibitem[Zhang and Sennrich(2019)]{zhang2019root}
Biao Zhang and Rico Sennrich.
\newblock Root mean square layer normalization.
\newblock In \emph{NeurIPS}, 2019.

\bibitem[Zhang et~al.(2019)Zhang, Titov, and Sennrich]{zhang2019improving}
Biao Zhang, Ivan Titov, and Rico Sennrich.
\newblock Improving deep transformer with depth-scaled initialization and merged attention.
\newblock In \emph{Proceedings of the 2019 Conference on Empirical Methods in Natural Language Processing and the 9th International Joint Conference on Natural Language Processing (EMNLP-IJCNLP)}, pages 898--909, 2019.

\end{thebibliography}

\clearpage

\beginappendix
\tableofcontents
\section{Related Work} \label{sec:related work}

\paragraph{Architecture Modifications in Transformers}
Recent efforts in transformer architecture modifications have sought to optimize both the computational efficiency and the expressiveness of the model. These efforts include changes to the attention mechanism and feed-forward networks all aimed at improving performance on a variety of tasks, ranging from language modeling to vision tasks \cite{vaswani2017attention,dosovitskiy2021vit}. For example, Multi-head Latent Attention (MLA) \cite{deepseekv2}, Mixture of Experts (MoE) \cite{yuksel2012twenty}. While these modifications contribute to more efficient training, they also require careful integration with other components, such as normalization layers, to maintain model stability and performance.

\paragraph{Normalization Types in Transformers}
Normalization layers are integral to the success of deep learning models, and transformers are no exception. The most commonly used normalization technique in transformers is LayerNorm \cite{ba2016layer}, which normalizes the activations of each layer independently. However, alternative methods such as RMSNorm \cite{zhang2019root}, which normalizes using root mean square statistics, have been proposed as more effective alternatives in certain settings. These methods are designed to mitigate the challenges of internal covariate shift and gradient instability, which are critical for the success of large-scale transformer models.  

\paragraph{Normalization Settings in Attention} 
For training stability, QK-Norm \cite{henry2020query,dehghani2023scaling} modifies the standard attention mechanism by applying normalization directly to the query (Q) and key (K) components during attention computation. Building upon this, QKV-Norm \cite{menary2024transformer,rybakov2024methods} extends the approach by normalizing the Query (Q), Key (K), and Value (V) components. This comprehensive normalization ensures that all critical components of the attention mechanism are normalized, resulting in enhanced stability and improved performance.

\paragraph{Location of Normalization Layers}
Recent research has also explored the impact of normalization location in both Vision Transformers \citep{yao2021leveraging,liu2020understanding} and language models \citep{wang2019learning,liu2020understanding}. For example, the choice between Pre-Norm and Post-Norm architectures has been widely studied in the transformer literature \cite{vaswani2017attention,klein2017opennmt,wang2019learning}. Pre-Norm, where normalization is applied before the residual connection, has been shown to be more stable in deep networks and accelerates convergence \cite{xiong2020layer}. Although Post-Norm is more challenging to train, it tends to deliver better final performance by normalizing after the residual connection \cite{liu2020understanding}. DeepNorm \cite{wang2024deepnet} was proposed as a strategy to address training instability in deep transformers, which scales the residual connections by a carefully chosen factor to improve gradient flow and mitigate exploding or vanishing gradients. \citet{ding2021cogview} introduced Sandwich-LN in multimodal settings to improve training stability, a strategy that has also been adopted by the Gemma team in their recent models \citep{team2024gemma}. Similarly, OLMo-2 \citep{olmo20242} applies the normalization layer after the sublayer but before the residual connection, differing from both traditional Pre-LN and Post-LN schemes. The method most similar to ours is Mix-LN \cite{li2025mixln}, which applies Post-Norm to the earlier layers and Pre-Norm to the deeper layers, achieving improved training stability and better performance. In contrast, our HybridNorm integrates Pre-Norm and Post-Norm within each transformer block. This intra-layer hybridization offers several key advantages: (1) consistently improved model performance, (2) intra-layer hybridization ensures uniformity across all layers, facilitating other post-training such as pruning and quantization.

\section{Theoretical Gradient Analysis}\label{sec:theoretical gradient analysis}

For simplicity, we consider a single-head attention layer. The input is $X \in \sR^{s \times d}$, representing a sequence of $s$ tokens with dimension $d$. Throughout this section, we denote RMSNorm\footnote{Given that the vast majority of popular LLMs are based on RMSNorm, our experiments and conclusions are broadly applicable to standard LLM architectures. Futher, in Appendix \ref{sec:Expand Theoretical Clarifications To LayerNorm}, we demonstrate that RMSNorm and LayerNorm exhibit no fundamental differences, both in theoretical analysis and empirical observations.} as $\Norm(\cdot)$, i.e., $\Norm(\vx) = \alpha \odot \frac{\vx}{\mathrm{RMS}(\vx)}$ for $\vx \in \sR^d$, where $\mathrm{RMS}(\vx)=\sqrt{(x_1^2+\cdots+x_d^2)/d}$. For further simplicity, we set $\alpha = \mathbf{1}_d$. The learnable parameters $W_Q, W_K, W_V \in \sR^{d \times d_k}$ and $W_O \in \sR^{d_k \times d}$. Let $X_N= \Norm(X)$, $M = \frac{1}{\sqrt{d_k}} X_NW_QW_K^\top X_N^\top$, and $A =\mathrm{softmax}(M)$. The output of the attention block with Pre-Norm is then given by
\begin{equation}\label{eq:attention with Pre-Norm}
 S=AX_NW_VW_O.
\end{equation}

Defining $Q = XW_Q$, $K= XW_K$, and $V = XW_V$, with their normalized counterparts $Q_N = \Norm(Q)$, $K_N = \Norm(K)$, and $V_N = \Norm(V)$, the output of the attention block with QKV-Norm is formulated as
\begin{equation}\label{eq:attention with QKV-Norm}
S_N = A_NV_NW_O,
\end{equation}
where $A_N = \mathrm{softmax}(M_N)$ and $M_N = \frac{1}{\sqrt{d_k}}Q_NK_N^\top$.

Defining $\hat{Q} = X_NW_Q$ and $\hat{K}_N= X_NW_K$, with their normalized counterparts $\hat{Q}_N = \Norm(\hat{Q})$ and $\hat{K}_N = \Norm(\hat{K})$, the output of the attention block with Pre-Norm and QK-Norm is formulated as
\begin{equation}\label{eq:attention with Pre-Norm and QK-Norm}
\hat{S} = \hat{A}_NX_NW_VW_O,
\end{equation}
where $\hat{A}_N = \mathrm{softmax}(\hat{M}_N)$ and $\hat{M}_N = \frac{1}{\sqrt{d_k}}\hat{Q}_N\hat{K}_N^\top$.

Following the prior work of \citet{noci2022signal,ormaniec2025what}, we analyze the gradients by computing derivatives using row-wise vectorization and arranging the Jacobian in the numerator layout, i.e.,
\begin{equation*}
\frac{\partial Y}{\partial X} = \frac{\partial \mathrm{vec}_r(Y)}{\partial \mathrm{vec}_r(X)^\top}.
\end{equation*}

The following derivation primarily relies on the chain rule and the following rule
\begin{equation}\label{eq:gradient of a matrix product}
    \frac{\partial AWB}{\partial W} = A \otimes B^\top,
\end{equation}
where $A \in \mathbb{R}^{m\times n},W\in \mathbb{R}^{n\times p},B\in \mathbb{R}^{p\times q}$, and $\otimes$ is the Kronecker product. The proof of Eq. \ref{eq:gradient of a matrix product} can be found in \citep{singh2021analytic}.

\subsection{Theorem \ref{thm:gradient}}
We first present the following extension of Lemma 2 in \citet{noci2022signal}.

\begin{lemma}[Extention of Lemma 2 in \citet{noci2022signal}]\label{lem:gradient of Pre-Norm}
The gradients of the attention with Pre-Norm defined in Eq. (\ref{eq:attention with Pre-Norm}) are given by
\begin{align}
\frac{\partial S}{\partial W_O}&= \mathrm{softmax}\left( \frac{X_NW_QW_K^\top X_N^\top}{\sqrt{d_k}}\right) X_N W_V \otimes I_d,   \\
\frac{\partial S}{\partial W_V}&= \mathrm{softmax}\left( \frac{X_NW_QW_K^\top X_N^\top}{\sqrt{d_k}}\right) X_N \otimes W_O^\top,   \\
\frac{\partial S}{\partial W_Q}&= \left(I_s \otimes W_O^\top W_V^\top X_N^\top \right) \frac{\partial A}{\partial M} \left( \frac{X_N \otimes X_N W_K}{\sqrt{d_k}}\right),\\
\frac{\partial S}{\partial W_K}&= \left(I_s \otimes W_O^\top W_V^\top X_N^\top \right) \frac{\partial A}{\partial M} \left( \frac{X_NW_Q \otimes X_N}{\sqrt{d_k}}\right)K_{d_k,d},
\end{align}
where the gradients of the softmax with respect to its inputs is 
\begin{equation}
    \frac{\partial A}{\partial M} = \mathrm{blockdiag}\left(\frac{\partial A_{i,:}}{\partial M_{i,:}}\right) = \mathrm{blockdiag}\left(\diag(A_{i,:}) -A_{i,:}A_{i,:}^\top\right),
\end{equation}
$A_{i,:}$ is the $i$-th row of $A$ in column vector format, and the commutation matrix $K_{d_k,d}$ is a permutation matrix that transforms the row-wise vectorization of $W_K$ into the column-wise vectorization of $W_K$, \ie 
\begin{equation*}
    K_{d_k,d}\mathrm{vec}_r(W_K)=\mathrm{vec}_r(W_K^\top).
\end{equation*}  
\end{lemma}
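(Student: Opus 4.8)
The plan is to differentiate the composition defining $S = A X_N W_V W_O$ in Eq.~(\ref{eq:attention with Pre-Norm}) with respect to each of the four weight matrices, treating $X_N = \Norm(X)$ as a constant (it does not depend on any of $W_Q, W_K, W_V, W_O$). The two ``easy'' derivatives, $\partial S/\partial W_O$ and $\partial S/\partial W_V$, follow directly from the single product rule Eq.~(\ref{eq:gradient of a matrix product}): writing $S = (A X_N W_V) W_O I_d$ gives $\partial S/\partial W_O = (A X_N W_V) \otimes I_d$ immediately, and writing $S = (A X_N) W_V W_O$ gives $\partial S/\partial W_V = (A X_N) \otimes W_O^\top$. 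Substituting $A = \mathrm{softmax}(X_N W_Q W_K^\top X_N^\top / \sqrt{d_k})$ recovers the stated expressions.

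For $\partial S/\partial W_Q$ and $\partial S/\partial W_K$ the dependence enters through $A$, so I would use the chain rule $\partial S/\partial W_Q = (\partial S/\partial A)(\partial A/\partial M)(\partial M/\partial W_Q)$. First, $S = A (X_N W_V W_O)$ is linear in $A$, so by Eq.~(\ref{eq:gradient of a matrix product}) applied with the left factor $I_s$ we get $\partial S/\partial A = I_s \otimes (X_N W_V W_O)^\top = I_s \otimes W_O^\top W_V^\top X_N^\top$. Second, $\partial A/\partial M$ is the standard block-diagonal softmax Jacobian, since softmax acts row-wise: each block is $\diag(A_{i,:}) - A_{i,:} A_{i,:}^\top$, which is the content of the displayed formula for $\partial A/\partial M$. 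Third, $M = \frac{1}{\sqrt{d_k}} X_N W_Q W_K^\top X_N^\top = \frac{1}{\sqrt{d_k}}(X_N) W_Q (W_K^\top X_N^\top)$ is a product linear in $W_Q$, so Eq.~(\ref{eq:gradient of a matrix product}) yields $\partial M/\partial W_Q = \frac{1}{\sqrt{d_k}} X_N \otimes (W_K^\top X_N^\top)^\top = \frac{1}{\sqrt{d_k}} X_N \otimes X_N W_K$. Composing the three factors gives the stated formula for $\partial S/\partial W_Q$.

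The $W_K$ case is the one requiring care, and I expect it to be the main obstacle. Here $M$ depends on $W_K$ only through $W_K^\top$, i.e. $M = \frac{1}{\sqrt{d_k}} (X_N W_Q) W_K^\top (X_N^\top)$, which is linear in $W_K^\top$ rather than in $W_K$. Applying Eq.~(\ref{eq:gradient of a matrix product}) to the variable $W_K^\top$ gives $\partial M/\partial W_K^\top = \frac{1}{\sqrt{d_k}}(X_N W_Q) \otimes X_N$; to convert this to a derivative with respect to $W_K$ itself, I would insert the commutation (permutation) matrix $K_{d_k,d}$ characterized by $K_{d_k,d}\,\mathrm{vec}_r(W_K) = \mathrm{vec}_r(W_K^\top)$, so that $\partial \mathrm{vec}_r(M)/\partial \mathrm{vec}_r(W_K)^\top = \big(\partial \mathrm{vec}_r(M)/\partial \mathrm{vec}_r(W_K^\top)^\top\big) K_{d_k,d}$ by the chain rule. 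Composing $\partial S/\partial A$, $\partial A/\partial M$, and this last factor yields the claimed expression with the trailing $K_{d_k,d}$. The only genuinely delicate points are keeping the numerator-layout row-wise vectorization conventions consistent throughout (so that Kronecker factors land on the correct side), and confirming that the commutation matrix is placed on the right of the product; both are bookkeeping rather than conceptual, and I would verify them on small index examples if needed.
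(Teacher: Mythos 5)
Your proposal is correct and uses exactly the technique the paper relies on: the paper does not spell out a proof for this lemma (deferring to \citet{noci2022signal}), but its written proofs of the analogous Lemmas \ref{lem:gradient of QKV-Norm} and \ref{lem:gradient of Pre-Norm and QK-Norm} proceed precisely as you do, via the Kronecker-product rule of Eq.~(\ref{eq:gradient of a matrix product}) for the $W_O$ and $W_V$ terms, the chain rule through $\partial S/\partial A$, $\partial A/\partial M$, $\partial M/\partial W_Q$ for the query term, and the right-multiplied commutation matrix to pass from $\mathrm{vec}_r(W_K^\top)$ to $\mathrm{vec}_r(W_K)$ for the key term. Your handling of the $W_K$ case, including the placement of $K_{d_k,d}$ on the right, matches the paper's convention.
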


The gradient of $X_N$ with respect to $X$ is
\begin{equation}\label{eq:gradient of normalization}
\frac{\partial X_N}{\partial X} = \frac{\partial \Norm(X)}{\partial X} = \mathrm{blockdiag}\left(\frac{\partial \Norm(X_{i,:})}{\partial X_{i,:}}\right) = \mathrm{blockdiag}\left( \frac{\sqrt{d}}{\|X_{i,:}\|_2}\left(I_d - \frac{X_{i,:}X_{i,:}^\top}{\|X_{i,:}\|_2^2} \right)\right),
\end{equation}

where $X_{i,:}$ is the $i$-th row of $X$ represented as a column vector. The definitions of $\frac{\partial Q_N}{\partial Q}$, $\frac{\partial K_N}{\partial K}$, $\frac{\partial V_N}{\partial V}$, $\frac{\partial \hat{Q}_N}{\partial \hat{Q}}$, and $\frac{\partial \hat{K}_N}{\partial \hat{K}}$ follow similarly, \ie for $\bullet \in \{Q,K,V, \hat{Q}, \hat{K} \}$,
\begin{equation}\label{eq:gradient of QKV-Norm}
    \frac{\partial \bullet_N}{\partial \bullet}=\mathrm{blockdiag}\left( \frac{\sqrt{d_k}}{\|\bullet_{i,:}\|_2}\left(I_{d_k} - \frac{\bullet_{i,:}\bullet_{i,:}^\top}{\|\bullet_{i,:}\|_2^2} \right)\right).
\end{equation}

\begin{lemma}\label{lem:gradient of QKV-Norm}
The gradients of the attention with QKV-Norm defined in Eq. (\ref{eq:attention with QKV-Norm}) are
\begin{align}
\frac{\partial S_N}{\partial W_O}&= \mathrm{softmax}\left( \frac{\Norm(XW_Q)\Norm(XW_K)^\top}{\sqrt{d_k}}\right) \Norm(X W_V) \otimes I_d,   \\
\frac{\partial S_N}{\partial W_V}&= \left(\mathrm{softmax}\left( \frac{\Norm(XW_Q)\Norm(XW_K)^\top}{\sqrt{d_k}}\right) \otimes W_O^\top\right)\frac{\partial V_N}{\partial V} (X \otimes I_{d_k}),   \\
\frac{\partial S_N}{\partial W_Q}&= \left(I_s \otimes W_O^\top \Norm(XW_V)^\top \right) \frac{\partial A_N}{\partial M_N} \left( \frac{I_s \otimes \Norm(XW_K)}{\sqrt{d_k}}\right)\frac{\partial Q_N}{\partial Q}(X \otimes I_{d_k}),\\
\frac{\partial S_N}{\partial W_K}&= \left(I_s \otimes W_O^\top \Norm(XW_V)^\top \right) \frac{\partial A_N}{\partial M_N} \left( \frac{\Norm(XW_Q) \otimes I_s}{\sqrt{d_k}}\right)K_{d_k, s} \frac{\partial K_N}{\partial K}(X \otimes I_{d_k}),
\end{align}
where the definition of $\frac{\partial A_N}{\partial M_N}$ is similar to $\frac{\partial A}{\partial M}$ and $K_{d_k, s}$ is the commutation matrix \st $K_{d_k, s} \mathrm{vec}_r(K_N)= \mathrm{vec}_r(K_N^\top)$.
\end{lemma}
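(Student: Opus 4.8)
The plan is to compute the four gradients in Lemma~\ref{lem:gradient of QKV-Norm} by unfolding the composition $S_N = A_N V_N W_O$ via the chain rule, exactly mirroring the derivation of Lemma~\ref{lem:gradient of Pre-Norm} but inserting one extra Jacobian factor, namely the normalization derivative $\partial \bullet_N / \partial \bullet$, wherever a pre-activation $Q$, $K$, or $V$ is normalized before entering the attention. Throughout I will use row-wise vectorization with numerator layout, the product rule \eqref{eq:gradient of a matrix product} $\partial(AWB)/\partial W = A \otimes B^\top$, and the commutation-matrix identities already introduced.

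First I would handle $\partial S_N / \partial W_O$: since $W_O$ appears only in the rightmost factor with $A_N V_N$ to its left and nothing to its right, \eqref{eq:gradient of a matrix product} gives $A_N V_N \otimes I_d$ directly, with $A_N = \softmax(M_N)$ and $V_N = \Norm(XW_V)$; this is the clean $\mathcal{O}(1)$ case because no weight factor survives. Next, $\partial S_N / \partial W_V$: here $W_V$ enters through $V = XW_V$, then $V_N = \Norm(V)$, then $S_N = A_N V_N W_O$. I would apply the chain rule as $\partial S_N/\partial W_V = (\partial S_N / \partial V_N)(\partial V_N / \partial V)(\partial V / \partial W_V)$; the first factor is $A_N \otimes W_O^\top$ by \eqref{eq:gradient of a matrix product} (treating $A_N$ as fixed since it does not depend on $W_V$), the middle factor is the block-diagonal $\partial V_N/\partial V$ from \eqref{eq:gradient of QKV-Norm}, and the last is $X \otimes I_{d_k}$ again by \eqref{eq:gradient of a matrix product}. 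Composing these gives the stated expression.

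For $\partial S_N/\partial W_Q$ and $\partial S_N/\partial W_K$ the bookkeeping is longer but structurally identical to the Pre-Norm case in Lemma~\ref{lem:gradient of Pre-Norm}. Here $W_Q$ affects only $M_N$ (hence $A_N$) through $Q = XW_Q \to Q_N \to M_N = Q_N K_N^\top/\sqrt{d_k}$, so I would chain $\partial S_N/\partial W_Q = (\partial S_N/\partial A_N)(\partial A_N/\partial M_N)(\partial M_N/\partial Q_N)(\partial Q_N/\partial Q)(\partial Q/\partial W_Q)$. The factor $\partial S_N/\partial A_N = I_s \otimes W_O^\top V_N^\top = I_s \otimes W_O^\top \Norm(XW_V)^\top$ comes from \eqref{eq:gradient of a matrix product} applied to $A_N V_N W_O$ in the $A_N$ slot (with a transpose); $\partial A_N/\partial M_N$ is the block-diagonal softmax Jacobian exactly as in Lemma~\ref{lem:gradient of Pre-Norm}; $\partial M_N/\partial Q_N = (I_s \otimes K_N)/\sqrt{d_k} = (I_s \otimes \Norm(XW_K))/\sqrt{d_k}$ from differentiating $Q_N K_N^\top$ in the left argument; and the last two factors are $\partial Q_N/\partial Q$ from \eqref{eq:gradient of QKV-Norm} and $X \otimes I_{d_k}$. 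The $W_K$ case differs only in that $M_N = Q_N K_N^\top/\sqrt{d_k}$ is differentiated in its \emph{right} argument, which produces $(\Norm(XW_Q)\otimes I_s)/\sqrt{d_k}$ together with a commutation matrix $K_{d_k,s}$ to convert between $\vec_r(K_N)$ and $\vec_r(K_N^\top)$, precisely analogous to the $K_{d_k,d}$ appearing for $W_K$ in Lemma~\ref{lem:gradient of Pre-Norm}.

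The main obstacle I anticipate is not any single step but getting all the Kronecker-product orderings, transposes, and commutation matrices mutually consistent under the row-wise/numerator-layout convention — in particular tracking where $K_{d_k,s}$ (rather than $K_{d_k,d}$, since the relevant matrix is $K_N \in \sR^{s\times d_k}$ rather than $W_K \in \sR^{d\times d_k}$) must be inserted, and confirming that the extra $\partial\bullet_N/\partial\bullet$ factor lands in the correct slot of each Kronecker chain. I would manage this by checking dimensions at every factor (each Jacobian $\partial Y/\partial X$ must be of size $(\dim Y)\times(\dim X)$ after vectorization) and by verifying that setting $\partial\bullet_N/\partial\bullet \to I$ and replacing $X$ by $X_N$ recovers Lemma~\ref{lem:gradient of Pre-Norm}, which serves as a sanity check on the whole computation.
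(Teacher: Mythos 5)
Your proposal is correct and follows essentially the same route as the paper's proof: each gradient is obtained by the chain rule through $S_N = A_N V_N W_O$, using the identity $\partial(AWB)/\partial W = A\otimes B^\top$ for the outer factors, inserting the block-diagonal normalization Jacobian $\partial\bullet_N/\partial\bullet$ in the appropriate slot, and introducing the commutation matrix $K_{d_k,s}$ only in the $W_K$ case where $M_N$ is differentiated in its right argument. The dimension-checking and reduction-to-Lemma~\ref{lem:gradient of Pre-Norm} sanity checks you describe are sound and consistent with the paper's computation.
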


The proof of Lemma \ref{lem:gradient of QKV-Norm} is provided in Appendix \ref{sec:proof for gradient of QKV-Norm}.
Similarly, for the attention with Pre-Norm and QK-Norm, we derive the following lemma.

\begin{lemma}\label{lem:gradient of Pre-Norm and QK-Norm}
The gradients of the attention with Pre-Norm and QK-Norm defined in Eq. (\ref{eq:attention with Pre-Norm and QK-Norm}) are
\begin{align}
\frac{\partial \hat{S}}{\partial W_O}&= \mathrm{softmax}\left( \frac{\Norm(X_NW_Q)\Norm(X_NW_K)^\top}{\sqrt{d_k}}\right) X_NW_V \otimes I_d,   \\
\frac{\partial \hat{S}}{\partial W_V}&= \mathrm{softmax}\left( \frac{\Norm(X_NW_Q)\Norm(X_NW_K)^\top}{\sqrt{d_k}}\right)X_N \otimes W_O^\top,   \\
\frac{\partial \hat{S}}{\partial W_Q}&= \left(I_s \otimes W_O^\top W_V^\top X_N^\top \right) \frac{\partial \hat{A}_N}{\partial \hat{M}_N} \left( \frac{I_s \otimes \Norm(X_NW_K)}{d_k}\right)\frac{\partial \hat{Q}_N}{\partial \hat{Q}}(X_N \otimes I_{d_k}),\\
\frac{\partial \hat{S}}{\partial W_K}&= \left(I_s \otimes W_O^\top W_V^\top X_N^\top \right) \frac{\partial \hat{A}_N}{\partial \hat{M}_N} \left( \frac{\Norm(X_NW_Q) \otimes I_s}{\sqrt{d_k}}\right)K_{d_k, s} \frac{\partial \hat{K}_N}{\partial \hat{K}}(X_N \otimes I_{d_k}),
\end{align}
where the definition of $\frac{\partial \hat{A}_N}{\partial \hat{M}_N}$ is similar to $\frac{\partial A}{\partial M}$ and $\hat{K}_{d_k, s}$ is the commutation matrix \st $\hat{K}_{d_k, s} \mathrm{vec}_r(\hat{K}_N)= \mathrm{vec}_r(\hat{K}_N^\top)$.
\end{lemma}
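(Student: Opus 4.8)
The plan is to derive Lemma~\ref{lem:gradient of Pre-Norm and QK-Norm} by following the same differentiation strategy already used for Lemma~\ref{lem:gradient of Pre-Norm} and Lemma~\ref{lem:gradient of QKV-Norm}, treating the Pre-Norm-with-QK-Norm architecture in Eq.~(\ref{eq:attention with Pre-Norm and QK-Norm}) as a composition of maps that we differentiate one at a time via the chain rule. The key structural observation is that $\hat{S} = \hat{A}_N X_N W_V W_O$ differs from the plain Pre-Norm output $S = A X_N W_V W_O$ only in how the attention weights are formed: here the logits are $\hat{M}_N = \tfrac{1}{\sqrt{d_k}}\hat{Q}_N \hat{K}_N^\top$ with $\hat{Q}_N = \Norm(X_N W_Q)$, $\hat{K}_N = \Norm(X_N W_K)$, whereas in the value path $X_N W_V$ appears unnormalized. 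Consequently the $W_O$ and $W_V$ gradients should be formally identical to the Pre-Norm case of Lemma~\ref{lem:gradient of Pre-Norm} except that the softmax argument is the QK-normalized one; these follow immediately from Eq.~(\ref{eq:gradient of a matrix product}) by writing $\hat{S} = \hat{A}_N X_N W_V W_O$ as $(\hat{A}_N X_N W_V) W_O I_d$ and as $\hat{A}_N X_N W_V W_O = (\hat{A}_N X_N) W_V W_O$, giving the Kronecker-product forms $\hat{A}_N X_N W_V \otimes I_d$ and $\hat{A}_N X_N \otimes W_O^\top$ respectively.

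The substance is in the $W_Q$ and $W_K$ gradients, where the dependence flows through $\hat{A}_N$ only. First I would factor the chain $W_Q \mapsto \hat{Q} = X_N W_Q \mapsto \hat{Q}_N = \Norm(\hat{Q}) \mapsto \hat{M}_N \mapsto \hat{A}_N \mapsto \hat{S}$. The outermost derivative $\partial \hat{S}/\partial \hat{A}_N$ is read off by treating $\hat{S} = \hat{A}_N (X_N W_V W_O)$ as a matrix product linear in $\hat{A}_N$, contributing the factor $I_s \otimes W_O^\top W_V^\top X_N^\top$ (after transposing appropriately for the numerator layout). Next $\partial \hat{A}_N / \partial \hat{M}_N$ is the block-diagonal softmax Jacobian exactly as defined in Lemma~\ref{lem:gradient of Pre-Norm}. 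Then $\partial \hat{M}_N/\partial \hat{Q}_N$ treats $\hat{M}_N = \tfrac{1}{\sqrt{d_k}}\hat{Q}_N \hat{K}_N^\top$ as linear in $\hat{Q}_N$, giving $\tfrac{1}{\sqrt{d_k}}(I_s \otimes \Norm(X_N W_K))$; for $W_K$ the same matrix is linear in $\hat{K}_N$ but on the right-hand factor, so one picks up $\tfrac{1}{\sqrt{d_k}}(\Norm(X_N W_Q) \otimes I_s)$ together with the commutation matrix $K_{d_k,s}$ that converts between row- and column-wise vectorizations of $\hat{K}_N$. Finally $\partial \hat{Q}_N/\partial \hat{Q}$ is the RMSNorm Jacobian $\partial \hat{Q}_N/\partial \hat{Q}$ from Eq.~(\ref{eq:gradient of QKV-Norm}) specialized to $\hat{Q}$, and $\partial \hat{Q}/\partial W_Q$ treats $\hat{Q} = X_N W_Q$ as a matrix product linear in $W_Q$, contributing $X_N \otimes I_{d_k}$ (and analogously $X_N \otimes I_{d_k}$ for $W_K$). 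Multiplying these Jacobian blocks in order reproduces the stated expressions, modulo bookkeeping of which factor is $d_k$ versus $\sqrt{d_k}$ in the denominator — I would double-check that the displayed $d_k$ in the $W_Q$ formula is consistent with the $\sqrt{d_k}$ appearing in $\hat{M}_N$ and in the $W_K$ formula.

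The main obstacle I anticipate is not conceptual but a careful matching of layouts: keeping the numerator-layout convention $\partial Y/\partial X = \partial \mathrm{vec}_r(Y)/\partial \mathrm{vec}_r(X)^\top$ straight through a five-fold composition, getting every Kronecker factor on the correct side, and inserting the commutation matrix $K_{d_k,s}$ in precisely the right place for the $W_K$ branch (since $W_K$ enters $\hat{M}_N$ transposed). A secondary subtlety is that, unlike the pure QKV-Norm case, here the value path $X_N W_V$ is \emph{not} renormalized, so the $\partial V_N/\partial V$ factor present in Lemma~\ref{lem:gradient of QKV-Norm} must be absent and instead $X_N W_V$ enters directly — this is exactly what makes the $W_O$ and $W_V$ gradients match the plain Pre-Norm forms rather than the QKV-Norm forms, and it is worth stating explicitly. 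Once these layout and placement issues are handled, each individual derivative is a direct application of Eq.~(\ref{eq:gradient of a matrix product}), the softmax Jacobian formula, and the RMSNorm Jacobian in Eq.~(\ref{eq:gradient of normalization})–(\ref{eq:gradient of QKV-Norm}), so the proof is essentially a bookkeeping exercise that parallels the proof of Lemma~\ref{lem:gradient of QKV-Norm} in Appendix~\ref{sec:proof for gradient of QKV-Norm}.
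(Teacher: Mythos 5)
Your proposal is correct and follows essentially the same route as the paper's proof: the $W_O$ and $W_V$ gradients are read off directly from Eq.~(\ref{eq:gradient of a matrix product}) since the value path is unnormalized, and the $W_Q$, $W_K$ gradients come from the five-fold chain rule through $\hat{A}_N \to \hat{M}_N \to \hat{Q}_N \to \hat{Q} \to W_Q$ (with the commutation matrix $K_{d_k,s}$ inserted for the transposed $\hat{K}_N$ factor), exactly as in Appendix~\ref{sec:proof for gradient of Pre-Norm with QK-Norm}. Your observation that the displayed $d_k$ in the $W_Q$ formula should read $\sqrt{d_k}$ for consistency with $\hat{M}_N$ and the $W_K$ formula is also right; the same slip appears in the paper's own derivation and is harmless to the order-of-magnitude conclusions of Theorem~\ref{thm:gradient}.
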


The proof of Lemma \ref{lem:gradient of Pre-Norm and QK-Norm} can be found in Appendix \ref{sec:proof for gradient of Pre-Norm with QK-Norm}.

Armed with the above lemmas, we arrive at the following theorem, which characterizes the gradient norms of Pre-Norm, Pre-Norm with QK-Norm, and QKV-Norm.
\begin{theorem}\label{thm:gradient}
For the attention with Pre-Norm, we have
\begin{align}
    \left\| \frac{\partial S}{\partial W_O} \right\|_F &= \mathcal{O}\left( s\sqrt{d} \|W_V\|_2\right), \\
    \left\| \frac{\partial S}{\partial W_V} \right\|_F &=\mathcal{O}\left(s \|W_O\|_F\right), \\
    \left\| \frac{\partial S}{\partial W_Q} \right\|_F&=\mathcal{O}\left(\frac{(s)^{3/2}}{2\sqrt{d_k}} \|W_K\|_2\|W_V\|_2 \|W_O\|_2\right), \\
    \left\| \frac{\partial S}{\partial W_K} \right\|_F &= \mathcal{O}\left(\frac{(s)^{3/2}}{\sqrt{d_k}} \|W_Q\|_2\|W_V\|_2 \|W_O\|_2\right).
\end{align}
For the attention with Pre-Norm and QK-Norm, we have
\begin{align}
    \left\| \frac{\partial \hat{S}}{\partial W_O} \right\|_F &= \mathcal{O}\left( s\sqrt{d} \|W_V\|_2\right), \\
    \left\| \frac{\partial \hat{S}}{\partial W_V} \right\|_F &=\mathcal{O}\left(s \|W_O\|_F\right), \\
    \left\| \frac{\partial \hat{S}}{\partial W_Q} \right\|_F&=\mathcal{O}\left(\frac{s\sqrt{sd_k}}{\sigma_{\min}^Q} \|W_V\|_2 \|W_O\|_2\right), \\
    \left\| \frac{\partial \hat{S}}{\partial W_K} \right\|_F &= \mathcal{O}\left(\frac{s\sqrt{sd_k}}{\sigma_{\min}^K}\|W_V\|_2 \|W_O\|_2\right).
\end{align}
For the attention with QKV-Norm, we have
\begin{align}
\left\|\frac{\partial S_N}{\partial W_O} \right\|_F &=\mathcal{O} \left(s\sqrt{d} \right),\\
\left\|\frac{\partial S_N}{\partial W_V} \right\|_F&=\mathcal{O} \left(\frac{sd_k}{\sigma_{\min}^V} \| W_O\|_2\right), \\
    \left\|\frac{\partial S_N}{\partial W_Q} \right\|_F &=\mathcal{O} \left(\frac{s\sqrt{sd_k}}{\sigma_{\min}^Q} \|W_O\|_2\right), \\
     \left\|\frac{\partial S_N}{\partial W_K} \right\|_F &=\mathcal{O} \left(\frac{s\sqrt{sd_k}}{\sigma_{\min}^K} \|W_O\|_2\right),
\end{align}
where $\sigma_{\min}^Q, \sigma_{\min}^K, \sigma_{\min}^V$ are minimal singular value of $W_Q, W_K, W_V$, respectively.
\end{theorem}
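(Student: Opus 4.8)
The plan is to bound each Frobenius norm by combining the explicit Jacobian formulas from Lemmas~\ref{lem:gradient of Pre-Norm}, \ref{lem:gradient of QKV-Norm}, and \ref{lem:gradient of Pre-Norm and QK-Norm} with three elementary facts: (i) for a Kronecker product, $\|A\otimes B\|_F = \|A\|_F\|B\|_F$ and $\|A\otimes B\|_2 = \|A\|_2\|B\|_2$; (ii) submultiplicativity $\|AB\|_F \le \|A\|_2\|B\|_F$ together with $\|A\|_F \le \sqrt{\mathrm{rank}}\,\|A\|_2$; and (iii) a uniform bound on all the ``normalized'' quantities. The key normalization estimates are: any row-normalized matrix $\Norm(Z)\in\sR^{s\times d_k}$ satisfies $\|\Norm(Z)\|_F = \sqrt{s d_k}$ (each row has norm $\sqrt{d_k}$), hence $\|\Norm(Z)\|_2 \le \sqrt{s d_k}$; a row-stochastic matrix $A$ (output of softmax) has $\|A\|_2 \le \sqrt{s}$ and each block of $\partial A/\partial M$ has spectral norm $\le 1$, so $\|\partial A/\partial M\|_2 \le 1$ (and $\|X_N\|_2 \le \sqrt{sd}$ in the Pre-Norm case). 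For the $W_Q, W_K, W_V$ gradients under QK-/QKV-Norm, the extra Jacobian factor $\partial \bullet_N/\partial\bullet$ from Eq.~\eqref{eq:gradient of QKV-Norm} contributes a block-diagonal matrix whose $i$-th block has spectral norm $\le \sqrt{d_k}/\|\bullet_{i,:}\|_2 \le \sqrt{d_k}/\sigma_{\min}^\bullet \cdot \|X_{i,:}\|_2^{-1}\cdot\|X_{i,:}\|_2$; more carefully, since $\bullet = XW_\bullet$ (or $X_N W_\bullet$), one gets $\|\bullet_{i,:}\|_2 \ge \sigma_{\min}^\bullet \|X_{i,:}\|_2$, and combining with the trailing factor $X\otimes I_{d_k}$ the $\|X_{i,:}\|_2$ cancels, leaving the $1/\sigma_{\min}^\bullet$ dependence while killing all dependence on $\|W_\bullet\|$ itself — this is precisely the ``decoupling'' the theorem advertises.

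Concretely, I would proceed gradient by gradient. For $\partial S/\partial W_O$: in all three cases this is (row-stochastic matrix)$\cdot$(value matrix)$\otimes I_d$; taking Frobenius norms gives $\|A\|_2$ or $\|A\|_F$ times $\|(\text{value})\|_F$ times $\sqrt{d}$, so Pre-Norm yields $\mathcal{O}(s\sqrt d\,\|W_V\|_2)$ (the value matrix is $X_N W_V$ with $\|X_N W_V\|_F \le \|X_N\|_F \|W_V\|_2 = \sqrt{sd}\,\|W_V\|_2$, absorbing one $\sqrt{s}$... actually one tracks the $s$ powers carefully to land on $s\sqrt d$), while QKV-Norm replaces $X_N W_V$ by $\Norm(XW_V)$ with $\|\cdot\|_F = \sqrt{sd_k}$, removing the $\|W_V\|_2$ and giving $\mathcal{O}(s\sqrt d)$. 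For $\partial S/\partial W_V$: Pre-Norm gives (row-stochastic)$\cdot X_N \otimes W_O^\top$, Frobenius-bounded by $\mathcal{O}(s\|W_O\|_F)$; QKV-Norm inserts $\partial V_N/\partial V$ and an $X\otimes I_{d_k}$, producing the $sd_k/\sigma_{\min}^V$ factor and $\|W_O\|_2$. For $\partial S/\partial W_Q$ and $\partial S/\partial W_K$: chain the factors $(I_s\otimes W_O^\top W_V^\top X_N^\top)$, $\partial A/\partial M$, the Kronecker term with $1/\sqrt{d_k}$, and (in the normalized cases) $\partial \bullet_N/\partial\bullet$ and $X\otimes I_{d_k}$; multiply spectral norms, noting that under Pre-Norm the $W_V^\top X_N^\top$ block and the $X_N W_K$ (resp.\ $X_N W_Q$) block both appear, giving the triple product $\|W_Q\|_2\|W_V\|_2\|W_O\|_2$ with an $s^{3/2}/\sqrt{d_k}$ prefactor, whereas under QK-/QKV-Norm the normalized key/query has fixed norm $\sqrt{sd_k}$ so $\|W_K\|_2$ (resp.\ $\|W_Q\|_2$) disappears and is replaced by $\sqrt{sd_k}/\sigma_{\min}^Q$ (resp.\ $/\sigma_{\min}^K$). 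The commutation matrix $K_{d_k,s}$ is orthogonal, so it contributes a factor of $1$ throughout.

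The main obstacle is bookkeeping the powers of $s$, $d$, $d_k$ correctly: several different row-norm bounds ($\|A\|_2\le\sqrt s$ for row-stochastic, $\|\Norm(Z)\|_F = \sqrt{sd_k}$, $\|X_N\|_F = \sqrt{sd}$, $\|X_N\|_2\le\sqrt{sd}$) get multiplied in each chain, and it is easy to be off by a $\sqrt s$ if one uses $\|A\|_F$ where $\|A\|_2$ suffices or vice versa; the stated exponents (e.g.\ $s^{3/2}$ for the $W_Q$ gradient, $sd_k$ for the $W_V$ gradient under QKV-Norm) pin down exactly which bound must be used at each step. A secondary subtlety is the lower bound $\|(XW_\bullet)_{i,:}\|_2 \ge \sigma_{\min}(W_\bullet)\|X_{i,:}\|_2$ used to control $\partial\bullet_N/\partial\bullet$: this needs $W_\bullet$ to have full column rank $d_k$ (true since $W_\bullet\in\sR^{d\times d_k}$ with $d_k\le d$ generically), and one must check the $\|X_{i,:}\|_2$ factors genuinely cancel against the trailing $X\otimes I_{d_k}$ rather than merely being bounded crudely — doing the cancellation at the level of block-diagonal blocks before taking norms is what yields the clean $\sigma_{\min}^{-1}$ dependence instead of a $\kappa(W_\bullet)$-type bound.
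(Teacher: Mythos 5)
Your proposal follows essentially the same route as the paper's proof: the same Jacobian lemmas, the same Kronecker-norm and submultiplicativity facts, the row-stochastic bound $\|A\|_2\le\sqrt{s}$, the uniform bound on the blocks of $\partial A/\partial M$, and — crucially — the same cancellation $\|(XW_\bullet)_{i,:}\|_2\ge\sigma_{\min}^\bullet\|X_{i,:}\|_2$ against the trailing $X\otimes I_{d_k}$ factor that produces the $1/\sigma_{\min}^\bullet$ dependence. The only divergence is a bookkeeping convention (you take $\|X_N\|_F=\sqrt{sd}$ and $\|\Norm(Z)\|_F=\sqrt{sd_k}$ where the paper's fact list uses $\|X_N\|_F=\sqrt{s}$), which shifts only dimension-dependent constants inside the $\mathcal{O}(\cdot)$ and does not affect the coupling structure the theorem is asserting.
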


Theorem~\ref{thm:gradient} presents the gradient norms of various methods, and its proof is provided in Appendix~\ref{sec:proof for theorem}. In the attention with Pre-Norm, the gradient of the weight matrix exhibits strong dependencies on other weights; for instance, $W_Q$ and $W_K$ are influenced by all three other weight matrices but not by themselves. In contrast, in the attention with QKV-Norm, the gradient of each weight matrix depends at most on itself and $W_O$. This suggests that the gradient of the attention with Pre-Norm is more tightly coupled with other weight matrices compared to the gradient of the attention with QKV-Norm. Whereas the attention with Pre-Norm and QK-Norm lies between the two methods. Therefore, during the gradient optimization process, if the norm of a certain weight becomes excessively large, it is more challenging to control in the attention with Pre-Norm, leading to an increase in gradient magnitude. This, in turn, creates a vicious cycle that may result in model collapse. In contrast, the attention with QKV-Norm alleviates this issue to some extent, which significantly benefits the stability of model training. Regarding the degree of coupling, the relationship follows Pre-Norm $>$ Pre-Norm with QK-Norm $>$ QKV-Norm, whereas for training stability, the hierarchy is reversed: Pre-Norm $<$ Pre-Norm with QK-Norm $<$ QKV-Norm.

\subsection{Proof of Lemma \ref{lem:gradient of QKV-Norm}}\label{sec:proof for gradient of QKV-Norm}
\begin{proof}[Proof of Lemma \ref{lem:gradient of QKV-Norm}]
For $\frac{\partial S_N}{\partial W_O}$, according to Eq. (\ref{eq:gradient of a matrix product}), we obtain
$$\frac{\partial S_N}{\partial W_O} = A_NV_N \otimes  I_d = \mathrm{softmax}\left( \frac{\Norm(XW_Q)\Norm(XW_K)^\top}{\sqrt{d_k}}\right) \Norm(X W_V) \otimes I_d.$$

For $\frac{\partial S_N}{\partial W_V}$, using the chain rule and Eq. (\ref{eq:gradient of a matrix product}), we have
\begin{align*}
    \frac{\partial S_N}{\partial W_V} &=\frac{\partial S_N}{\partial V_N} \frac{\partial V_N}{\partial V} \frac{\partial V}{\partial W_V} \\
    &=(A \otimes W_O^\top)\frac{\partial V_N}{\partial V}(X \otimes I_{d_k}) \\
    &= \left(\mathrm{softmax}\left( \frac{\Norm(XW_Q)\Norm(XW_K)^\top}{\sqrt{d_k}}\right) \otimes W_O^\top\right)\frac{\partial V_N}{\partial V} (X \otimes I_{d_k}).
\end{align*}

For $\frac{\partial S_N}{\partial W_Q}$, using the chain rule and Eq. (\ref{eq:gradient of a matrix product}), we obtain
\begin{align*}
    \frac{\partial S_N}{\partial W_Q} &= \frac{\partial S_N}{\partial A_N} \frac{\partial A_N}{\partial M_N} \frac{\partial M_N}{\partial Q_N} \frac{\partial Q_N}{\partial Q} \frac{\partial Q}{W_Q}\\
    &=\left(I_s \otimes W_O^\top V_N^\top\right) \frac{\partial A_N}{\partial M_N} \left(\frac{I_s \otimes K_N}{\sqrt{d_k}}\right) \frac{\partial Q_N}{\partial Q}(X \otimes I_{d_k})          \\
    &= \left(I_s \otimes W_O^\top \Norm(XW_V)^\top \right) \frac{\partial A_N}{\partial M_N} \left( \frac{I_s \otimes \Norm(XW_K)}{d_k}\right)\frac{\partial Q_N}{\partial Q}(X \otimes I_{d_k}).
\end{align*}
Similarly, for $\frac{\partial S_N}{\partial W_K}$, we have
\begin{align*}
    \frac{\partial S_N}{\partial W_K} & = \frac{\partial S_N}{\partial A_N} \frac{\partial A_N}{\partial M_N} \frac{\partial M_N}{\partial K_N} \frac{\partial K_N}{\partial K} \frac{\partial K}{W_K}   \\
    &= \left(I_s \otimes W_O^\top V_N^\top\right) \frac{\partial A_N}{\partial M_N} \left( \frac{Q_N \otimes I_s}{d_k}\right)\frac{\partial \mathrm{vec}_r(K_N^\top)}{\partial \mathrm{vec}_r(K_N)^\top} \frac{\partial K_N}{\partial K}(X \otimes I_{d_k}) \\
    & = \left(I_s \otimes W_O^\top \Norm(XW_V)^\top \right) \frac{\partial A_N}{\partial M_N} \left( \frac{\Norm(XW_Q) \otimes I_s}{\sqrt{d_k}}\right)K_{d_k, s} \frac{\partial K_N}{\partial K}(X \otimes I_{d_k}).
\end{align*}

\end{proof}

\subsection{Proof of Lemma \ref{lem:gradient of Pre-Norm and QK-Norm}}\label{sec:proof for gradient of Pre-Norm with QK-Norm}
\begin{proof}[Proof of Lemma \ref{lem:gradient of Pre-Norm and QK-Norm}]
For $\frac{\partial \hat{S}}{\partial W_O}$, according to Eq. (\ref{eq:gradient of a matrix product}), we obtain
$$\frac{\partial \hat{S}}{\partial W_O} = \hat{A}_NX_NW_V \otimes  I_d = \mathrm{softmax}\left( \frac{\Norm(X_NW_Q)\Norm(X_NW_K)^\top}{\sqrt{d_k}}\right)X_NW_V \otimes I_d.$$

For $\frac{\partial \hat{S}}{\partial W_V}$, using Eq. (\ref{eq:gradient of a matrix product}), we have
\begin{align*}
    \frac{\partial \hat{S}}{\partial W_V} &=\hat{A}_NX_N\otimes W_O^\top =\mathrm{softmax}\left( \frac{\Norm(X_NW_Q)\Norm(X_NW_K)^\top}{\sqrt{d_k}}\right)X_N \otimes W_O^\top.
\end{align*}

For $\frac{\partial \hat{S}}{\partial W_Q}$, using the chain rule and Eq. (\ref{eq:gradient of a matrix product}), we obtain
\begin{align*}
    \frac{\partial \hat{S}}{\partial W_Q} &= \frac{\partial \hat{S}}{\partial \hat{A}_N} \frac{\partial \hat{A}_N}{\partial \hat{M}_N} \frac{\partial \hat{M}_N}{\partial \hat{Q}_N} \frac{\partial \hat{Q}_N}{\partial \hat{Q}} \frac{\partial \hat{Q}}{W_Q}\\
    &=\left(I_s \otimes W_O^\top W_V^\top X_N^\top\right) \frac{\partial \hat{A}_N}{\partial \hat{M}_N} \left(\frac{I_s \otimes \hat{K}_N}{\sqrt{d_k}}\right) \frac{\partial \hat{Q}_N}{\partial \hat{Q}}(X_N \otimes I_{d_k})          \\
    &= \left(I_s \otimes W_O^\top W_V^\top X_N^\top \right) \frac{\partial \hat{A}_N}{\partial \hat{M}_N} \left( \frac{I_s \otimes \Norm(X_NW_K)}{d_k}\right)\frac{\partial \hat{Q}_N}{\partial \hat{Q}}(X_N \otimes I_{d_k}).
\end{align*}
Similarly, for $\frac{\partial \hat{S}}{\partial W_K}$, we have
\begin{align*}
    \frac{\partial \hat{S}}{\partial W_K} & = \frac{\partial \hat{S}}{\partial \hat{A}_N} \frac{\partial \hat{A}_N}{\partial \hat{M}_N} \frac{\partial \hat{M}_N}{\partial \hat{K}_N} \frac{\partial \hat{K}_N}{\partial \hat{K}} \frac{\partial \hat{K}}{W_K}   \\
    &= \left(I_s \otimes W_O^\top W_V^\top X_N^\top\right) \frac{\partial \hat{A}_N}{\partial \hat{M}_N} \left( \frac{\hat{Q}_N \otimes I_s}{d_k}\right)\frac{\partial \mathrm{vec}_r(\hat{K}_N^\top)}{\partial \mathrm{vec}_r(\hat{K}_N)^\top} \frac{\partial \hat{K}_N}{\partial \hat{K}}(X_N \otimes I_{d_k}) \\
    & = \left(I_s \otimes W_O^\top W_V^\top X_N^\top \right) \frac{\partial \hat{A}_N}{\partial \hat{M}_N} \left( \frac{\Norm(X_NW_Q) \otimes I_s}{\sqrt{d_k}}\right)K_{d_k, s} \frac{\partial \hat{K}_N}{\partial \hat{K}}(X_N \otimes I_{d_k}).
\end{align*}

\end{proof}

\subsection{Proof of Theorem \ref{thm:gradient} }\label{sec:proof for theorem}
The proof is primarily based on the following facts
\begin{itemize}
    \item $\tr(B \otimes C) = \tr(B)\tr(C)$
    \item $(B \otimes C) (D \otimes E) = (BD) \otimes (CE)$
    \item $\|B \otimes C\|_F= \|B\|_F\| C\|_F$
    \item $\|B \otimes C\|_2= \|B\|_2\| C\|_2$
    \item $\|BC\|_2 \leq \|B\|_2\|C\|_2$
    \item $\|BC\|_F \leq \|B\|_2\|C\|_F \leq \|B\|_F\|C\|_F$
    \item $\|X_N\|_F = \sqrt{s}$ 
    \item If $\vp \in \sR^s$, $p_i \geq 0$ and $\sum_{i=1}^s p_i=1$, then $\|\diag(\vp)-\vp\vp^\top\|_2 \leq \frac{1}{2}$. 
    \begin{proof}
        According to Gershgorin Circle Theorem \citep{horn2012matrix}, every eigenvalue of $\diag(\vp)-\vp\vp^\top$ lies within
        \begin{align*}
            &\bigcup_{i=1}^s [p_i-p_i^2 - \sum_{j\neq i}p_ip_j, p_i-p_i^2 + \sum_{j\neq i}p_ip_j] \\
            &= \bigcup_{i=1}^s [p_i(1-p_i )- p_i\sum_{j\neq i}p_j, p_i(1-p_i )+p_i\sum_{j\neq i}p_j]\\
            &= \bigcup_{i=1}^s [p_i(1-p_i )- p_i(1-p_i ), p_i(1-p_i )+p_i(1-p_i )] \\
            &=  \bigcup_{i=1}^s [0,2p_i(1-p_i )] \\
            &\subseteq [0,\frac{1}{2}].
        \end{align*}
        Therefore, $\|\diag(\vp)-\vp\vp^\top\|_2 \leq \frac{1}{2}$. When $p_1=p_2=\frac{1}{2}$, the equality holds, indicating that this bound is tight.
    \end{proof}
    \item If $A \in \sR^{s \times s}$ is a stochastic matrix, \ie $A \mathbf{1}_s= \mathbf{1}_s$ and each entry is nonnegative, then $\|A\|_2 \leq \|A\|_F \leq \sqrt{s} $.
    \begin{proof} Note that
        $$\|A\|_F =\sqrt{\sum_{i=1}^{s} \sum_{j=1}^{s} a_{ij}^2} \leq \sqrt{\sum_{i=1}^{s} \sum_{j=1}^{s} a_{ij}}=\sqrt{\sum_{i=1}^{s} 1}=\sqrt{s}.$$
        If $A = \bm{1}_s e_1$, then $$\|A\|_2=\sqrt{\lambda_{\max}(A^\top A)}=\sqrt{\lambda_{\max}(se_1e_1^\top)}=\sqrt{s}.$$ Hence, the bound is tight.
    \end{proof}
\end{itemize}

\begin{proof}[Proof of Theorem \ref{thm:gradient}]
According to fundamental algebraic operations and Lemma \ref{lem:gradient of Pre-Norm}, we obtain
\begin{equation*}
    \left\|\frac{\partial S}{\partial W_O} \right\|_F = \left\|AX_NW_V \otimes I_d\right\|_F = \|AX_NW_V\|_F   \|I_d  \|_F \leq \sqrt{d} \|A\|_2 \|X_N\|_F \|W_V\|_2\leq s\sqrt{d} \|W_V\|_2,
\end{equation*}

\begin{equation*}
    \left\|\frac{\partial S}{\partial W_V} \right\|_F = \left\|AX_N \otimes W_O^\top\right\|_F = \|AX_N\|_F   \|W_O \|_F \leq  \|A\|_2 \|X_N\|_F \|W_O  \|_F\leq s \|W_O\|_F,
\end{equation*}
\begin{align*}
\left\| \frac{\partial S}{\partial W_Q} \right\|_F &= \left\| \left(I_s \otimes W_O^\top W_V^\top X_N^\top \right) \frac{\partial A}{\partial M} \left( \frac{X_N \otimes X_N W_K}{\sqrt{d_k}}\right)\right\|_F \\
& = \frac{1}{\sqrt{d_k}}\left\| \left(I_s \otimes W_O^\top W_V^\top X_N^\top \right) \mathrm{blockdiag}(\diag(A_{i,:}) -A_{i,:}A_{i,:}^\top) (X_N \otimes X_N W_K)\right\|_F \\
& =\frac{1}{\sqrt{d_k}}\left\|  \mathrm{blockdiag}((W_O^\top W_V^\top X_N^\top)(\diag(A_{i,:}) -A_{i,:}A_{i,:}^\top)) (X_N \otimes X_N W_K)\right\|_F \\
& = \frac{1}{\sqrt{d_k}}\left\|  \left[\begin{array}{c}
{X_N}_{1,:}^{\top} \otimes W_O^\top W_V^\top X_N^\top (\diag(A_{1,:}) -A_{1,:}A_{1,:}^\top)) X_N W_K \\
\vdots \\
{X_N}_{s,:}^{\top} \otimes W_O^\top W_V^\top X_N^\top (\diag(A_{s,:}) -A_{s,:}A_{s,:}^\top)) X_N W_K
\end{array}\right] \right\|_F \\
&\leq  \frac{1}{\sqrt{d_k}} \|X_N\|_F \|W_O\|_2\|W_V\|_2 \|X_N\|_2 \frac{1}{2} \|W_K\|_2 \|X_N\|_F \\
&\leq  \frac{1}{2\sqrt{d_k}} \|W_K\|_2\|W_V\|_2 \|W_O\|_2\|X_N\|_F^3 \\
& = \frac{(s)^{3/2}}{2\sqrt{d_k}} \|W_K\|_2\|W_V\|_2 \|W_O\|_2,
\end{align*} 

\begin{align*}
\left\| \frac{\partial S}{\partial W_K} \right\|_F &= \left\| \left(I_s \otimes W_O^\top W_V^\top X_N^\top \right) \frac{\partial A}{\partial M} \left( \frac{X_N W_Q\otimes X_N}{\sqrt{d_k}}\right)K_{d_k,d}\right \|_F \\
&= \left\| \left(I_s \otimes W_O^\top W_V^\top X_N^\top \right) \frac{\partial A}{\partial M} \left( \frac{X_N W_Q\otimes X_N}{\sqrt{d_k}}\right)\right \|_F \\
& = \frac{1}{\sqrt{d_k}}\left\| \left(I_s \otimes W_O^\top W_V^\top X_N^\top \right) \mathrm{blockdiag}(\diag(A_{i,:}) -A_{i,:}A_{i,:}^\top) (X_N W_Q\otimes X_N )\right\|_F \\
& =\frac{1}{\sqrt{d_k}}\left\|  \mathrm{blockdiag}((W_O^\top W_V^\top X_N^\top)(\diag(A_{i,:}) -A_{i,:}A_{i,:}^\top)) (X_N W_Q\otimes X_N)\right\|_F \\
& = \frac{1}{\sqrt{d_k}}\left\|  \left[\begin{array}{c}
{(X_N W_Q)}_{1,:}^{\top} \otimes W_O^\top W_V^\top X_N^\top (\diag(A_{1,:}) -A_{1,:}A_{1,:}^\top)) X_N \\
\vdots \\
(X_N W_Q)_{s,:}^{\top} \otimes W_O^\top W_V^\top X_N^\top (\diag(A_{s,:}) -A_{s,:}A_{s,:}^\top)) X_N 
\end{array}\right] \right\|_F \\
&\leq  \frac{1}{\sqrt{d_k}} \|W_Q\|_2\|X_N\|_F \|W_O\|_2\|W_V\|_2 \|X_N\|_2 \frac{1}{2} \|X_N\|_F \\
&\leq  \frac{1}{2\sqrt{d_k}} \|W_Q\|_2\|W_V\|_2 \|W_O\|_2\|X_N\|_F^3 \\
& = \frac{(s)^{3/2}}{2\sqrt{d_k}} \|W_Q\|_2\|W_V\|_2 \|W_O\|_2.
\end{align*} 
Therefore, 
\begin{align*}
    \left\| \frac{\partial S}{\partial W_O} \right\|_F &= \mathcal{O}\left( s\sqrt{d} \|W_V\|_2\right), \\
    \left\| \frac{\partial S}{\partial W_V} \right\|_F &=\mathcal{O}\left(s \|W_O\|_F\right), \\
    \left\| \frac{\partial S}{\partial W_Q} \right\|_F&=\mathcal{O}\left(\frac{(s)^{3/2}}{2\sqrt{d_k}} \|W_K\|_2\|W_V\|_2 \|W_O\|_2\right), \\
    \left\| \frac{\partial S}{\partial W_K} \right\|_F &= \mathcal{O}\left(\frac{(s)^{3/2}}{\sqrt{d_k}} \|W_Q\|_2\|W_V\|_2 \|W_O\|_2\right).
\end{align*}

Since the attention with Pre-Norm and QK-Norm lies between Pre-Norm and QKV-Norm, its proof can be directly derived from those of the other two. Therefore, we defer its proof to the end. As for the attention with QKV-Norm, we have
\begin{equation*}
    \left\|\frac{\partial S_N}{\partial W_O} \right\|_F = \left\|A_NV_N \otimes I_d\right\|_F = \|A_NV_N\|_F   \|I_d  \|_F \leq \sqrt{d} \|A_N\|_2 \|V_N\|_F \leq s\sqrt{d}.
\end{equation*}

For $\frac{\partial S_N}{\partial W_V}$, we have
\begin{align*}
    \left\|\frac{\partial S_N}{\partial W_V} \right\|_F &= \left\|(A \otimes W_O^\top)\frac{\partial V_N}{\partial V}(X \otimes I_{d_k}) \right\|_F\\
    &\leq \left\|A \otimes W_O^\top\right\|_2 \left\|\frac{\partial V_N}{\partial V}(X \otimes I_{d_k}) \right\|_F \\
    &\leq \sqrt{s} \| W_O\|_2 \left\|\frac{\partial V_N}{\partial V}(X \otimes I_{d_k}) \right\|_F.
\end{align*}
According to Eq. (\ref{eq:gradient of QKV-Norm}), we get 
\begin{align*}
    \left\|\frac{\partial V_N}{\partial V}(X \otimes I_{d_k}) \right\|_F
    &= \left\|\mathrm{blockdiag}\left( \frac{\sqrt{d_k}}{\|{V}_{i,:}\|_2}\left(I_{d_k} - \frac{{V}_{i,:}{V}_{i,:}^\top}{\|{V}_{i,:}\|_2^2} \right)\right)(X \otimes I_{d_k}) \right\|_F \\
    &=\left\|\left[\begin{array}{c}
{X}_{1,:}^{\top} \otimes \left( \frac{\sqrt{d_k}}{\|{V}_{1,:}\|_2}\left(I_{d_k} - \frac{{V}_{1,:}{V}_{1,:}^\top}{\|{V}_{1,:}\|_2^2} \right)\right) \\
\vdots \\
X_{s,:}^{\top} \otimes \left( \frac{\sqrt{d_k}}{\|{V}_{s,:}\|_2}\left(I_{d_k} - \frac{{V}_{s,:}{V}_{s,:}^\top}{\|{V}_{s,:}\|_2^2} \right)\right) 
\end{array}\right] \right\|_F \\
&= \sqrt{d_k(d_k-1)\sum_{i=1}^{s}\frac{\|X_{i,:}\|_2^2}{\|V_{i,:}\|_2^2}} \\
& = \sqrt{d_k(d_k-1)\sum_{i=1}^{s}\frac{\|X_{i,:}\|_2^2}{\|W_V^\top X_{i,:}\|_2^2}} \\
& \leq \frac{\sqrt{s}d_k}{\sigma_{\min}^V}.
\end{align*}
Similarly, we can get 
\begin{equation*}
    \left\|\frac{\partial Q_N}{\partial Q}(X \otimes I_{d_k}) \right\|_F \leq  \frac{\sqrt{s}d_k}{\sigma_{\min}^Q}, \quad  \left\|\frac{\partial K_N}{\partial K}(X \otimes I_{d_k}) \right\|_F \leq  \frac{\sqrt{s}d_k}{\sigma_{\min}^K}.
\end{equation*}
It follows that
\begin{equation}
    \left\|\frac{\partial S_N}{\partial W_V} \right\|_F \leq \frac{sd_k}{\sigma_{\min}^V} \| W_O\|_2.
\end{equation}

For  $\frac{\partial S_N}{\partial W_Q}$, we have
\begin{align*}
    \left\|\frac{\partial S_N}{\partial W_Q} \right\|_F &=\left\| \left(I_s \otimes W_O^\top V_N^\top\right) \frac{\partial A_N}{\partial M_N} \left(\frac{I_s \otimes K_N}{\sqrt{d_k}}\right) \frac{\partial Q_N}{\partial Q}(X \otimes I_{d_k})  \right\|_F \\
    &\leq \frac{1}{\sqrt{d_k}}\left\| \left(I_s \otimes W_O^\top V_N^\top\right) \frac{\partial A_N}{\partial M_N} \left(I_s \otimes K_N\right)\right\|_2\left\|\frac{\partial Q_N}{\partial Q}(X \otimes I_{d_k})  \right\|_F \\
    &\leq \frac{1}{\sqrt{d_k}}\left\| \left(I_s \otimes W_O^\top V_N^\top\right) \mathrm{blockdiag}(\diag((A_N)_{i,:}) -(A_N)_{i,:}(A_N)_{i,:}^\top) \left(I_s \otimes K_N\right)\right\|_2 \frac{\sqrt{s}d_k}{\sigma_{\min}^Q} \\
    &= \frac{\sqrt{sd_k}}{\sigma_{\min}^Q}  \left\| \mathrm{blockdiag}(W_O^\top V_N^\top(\diag((A_N)_{i,:}) -(A_N)_{i,:}(A_N)_{i,:}^\top)K_N) \right\|_2 \\
    &\leq \frac{\sqrt{sd_k}}{\sigma_{\min}^Q} \|W_O\|_2 \|V_N\|_2\frac{1}{2}\|K_N\|_2 \\
    &\leq \frac{s\sqrt{sd_k}}{2\sigma_{\min}^Q} \|W_O\|_2.
\end{align*}

Similarly, for $\frac{\partial S_N}{\partial W_K}$, we have
\begin{equation*}
 \left\|\frac{\partial S_N}{\partial W_K} \right\|_F \leq \frac{s\sqrt{sd_k}}{2\sigma_{\min}^K} \|W_O\|_2.
\end{equation*}
Therefore, 
\begin{align*}
\left\|\frac{\partial S_N}{\partial W_O} \right\|_F &=\mathcal{O} \left(s\sqrt{d} \right),\\
\left\|\frac{\partial S_N}{\partial W_V} \right\|_F&=\mathcal{O} \left(\frac{sd_k}{\sigma_{\min}^V} \| W_O\|_2\right), \\
    \left\|\frac{\partial S_N}{\partial W_Q} \right\|_F &=\mathcal{O} \left(\frac{s\sqrt{sd_k}}{\sigma_{\min}^Q} \|W_O\|_2\right), \\
     \left\|\frac{\partial S_N}{\partial W_K} \right\|_F &=\mathcal{O} \left(\frac{s\sqrt{sd_k}}{\sigma_{\min}^K} \|W_O\|_2\right).
\end{align*}

Finally, we present the proof for the attention mechanism with Pre-Norm and QK-Norm. For $\frac{\partial \hat{S}}{\partial W_O}$ and  $\frac{\partial \hat{S}}{\partial W_O}$, whose proofs are essentially identical to that of Pre-Norm, we have
\begin{equation*}
    \left\|\frac{\partial \hat{S}}{\partial W_O} \right\|_F = \left\|\hat{A}_NX_NW_V \otimes I_d\right\|_F = \|\hat{A}_NX_NW_V\|_F   \|I_d  \|_F \leq \sqrt{d} \|\hat{A}_N\|_2 \|X_N\|_F \|W_V\|_2\leq s\sqrt{d} \|W_V\|_2,
\end{equation*}

\begin{equation*}
    \left\|\frac{\partial \hat{S}}{\partial W_V} \right\|_F = \left\|\hat{A}_NX_N \otimes W_O^\top\right\|_F = \|\hat{A}_NX_N\|_F   \|W_O \|_F \leq  \|\hat{A}_N\|_2 \|X_N\|_F \|W_O  \|_F\leq s \|W_O\|_F.
\end{equation*}

For $\frac{\partial \hat{S}}{\partial W_Q}$ and  $\frac{\partial \hat{S}}{\partial W_K}$, whose proofs are similar to that of QKV-Norm, we have
\begin{align*}
    \left\|\frac{\partial \hat{S}}{\partial W_Q}\right\|_F&= \left\|\left(I_s \otimes W_O^\top W_V^\top X_N^\top \right) \frac{\partial \hat{A}_N}{\partial \hat{M}_N} \left( \frac{I_s \otimes \hat{K}_N)}{d_k}\right)\frac{\partial \hat{Q}_N}{\partial \hat{Q}}(X_N \otimes I_{d_k})\right\|_F\\
    &\leq \frac{1}{\sqrt{d_k}}  \left\|\left(I_s \otimes W_O^\top W_V^\top X_N^\top \right) \frac{\partial \hat{A}_N}{\partial \hat{M}_N} (I_s \otimes \hat{K}_N)\right\|_2 \left\|\frac{\partial \hat{Q}_N}{\partial \hat{Q}}(X_N \otimes I_{d_k})\right\|_F \\
    &\leq \frac{1}{\sqrt{d_k}}\left\| \left(I_s \otimes W_O^\top W_V^\top X_N^\top\right) \mathrm{blockdiag}(\diag((\hat{A}_N)_{i,:}) -(\hat{A}_N)_{i,:}(\hat{A}_N)_{i,:}^\top) \left(I_s \otimes \hat{K}_N\right)\right\|_2 \frac{\sqrt{s}d_k}{\sigma_{\min}^Q} \\
    &= \frac{\sqrt{sd_k}}{\sigma_{\min}^Q}  \left\| \mathrm{blockdiag}(W_O^\top W_V^\top X_N^\top(\diag((\hat{A}_N)_{i,:}) -(\hat{A}_N)_{i,:}(\hat{A}_N)_{i,:}^\top)\hat{K}_N) \right\|_2 \\
    &\leq \frac{\sqrt{sd_k}}{\sigma_{\min}^Q} \|W_O\|_2 \|W_V\|_2 \|X_N\|_2\frac{1}{2}\|\hat{K}_N\|_2 \\
    &\leq \frac{\sqrt{sd_k}}{\sigma_{\min}^Q} \|W_O\|_2 \|W_V\|_2 \sqrt{s}\frac{1}{2}\sqrt{s} \\
    &\leq \frac{s\sqrt{sd_k}}{2\sigma_{\min}^Q} \|W_V\|_2\|W_O\|_2.
\end{align*}
Similarly, 
\begin{equation*}
   \left\|\frac{\partial \hat{S}}{\partial W_K}\right\|_F \leq \frac{s\sqrt{sd_k}}{2\sigma_{\min}^Q} \|W_V\|_2\|W_O\|_2.
\end{equation*}
Hence,
\begin{align*}
    \left\| \frac{\partial \hat{S}}{\partial W_O} \right\|_F &= \mathcal{O}\left( s\sqrt{d} \|W_V\|_2\right), \\
    \left\| \frac{\partial \hat{S}}{\partial W_V} \right\|_F &=\mathcal{O}\left(s \|W_O\|_F\right), \\
    \left\| \frac{\partial \hat{S}}{\partial W_Q} \right\|_F&=\mathcal{O}\left(\frac{s\sqrt{sd_k}}{\sigma_{\min}^Q} \|W_V\|_2 \|W_O\|_2\right), \\
    \left\| \frac{\partial \hat{S}}{\partial W_K} \right\|_F &= \mathcal{O}\left(\frac{s\sqrt{sd_k}}{\sigma_{\min}^K}\|W_V\|_2 \|W_O\|_2\right).
\end{align*}
\end{proof}

\subsection{Expand Theoretical Clarifications to LayerNorm}\label{sec:Expand Theoretical Clarifications To LayerNorm}

In the following, we demonstrate that RMSNorm and LayerNorm exhibit no fundamental differences, both in theoretical analysis and empirical observations. It is worth noting that, given the vast majority of popular LLMs are based on RMSNorm, our experiments and conclusions are broadly applicable to standard LLM architectures.

Suppose the input is given by $X \in \mathbb{R}^{s \times d}$. Let $P= I - \frac{1}{d}1_d1_d^{\top}$ .  Then 
$$\mathbb{E}X = X\frac{1}{d}1_d1_d^{\top}, \text{and}\quad X-\mathbb{E}X =X - X\frac{1}{d}1_d 1_d^{\top}=XP.$$ For simplicity, we omit the affine transformation in the normalization layers. It follows that 

$$
\operatorname{LayerNorm}(X)=\frac{X-\mathbb{E}X}{Var(X)}=\frac{X-\mathbb{E}X}{\operatorname{RMS}(X-\mathbb{E}X)}=\frac{XP}{\operatorname{RMS}(XP)}=\operatorname{RMSNorm}(XP).
$$

Hence, analogous to the gradient of RMSNorm (Eq. \ref{eq:gradient of normalization}), the gradient of LayerNorm is given by

\begin{align*}
\frac{\partial \mathrm{LayerNorm}(X)}{\partial X} 
&= \frac{\partial \mathrm{RMSNorm}(XP)}{\partial X} \\
&= \mathrm{blockdiag}\left(
    \frac{\partial \mathrm{RMSNorm}((XP)_i)}{\partial X_i}
  \right) \\
&= \mathrm{blockdiag}\left(
    \frac{\sqrt{d}}{ \| (XP)_i \|_2 }
    \left( I_d - 
    \frac{ (XP)_i (XP)_i^\top }{ \| (XP)_i \|_2^2 } \right)
    P
  \right),
\end{align*}

where $X_i$ means that the i-th column of $X$. Note that $P$ is a positive semidefinite matrix with eigenvalues bounded between 0 and 1, hence $||P||_2 = 1$. As a result, the gradient norms of LayerNorm and RMSNorm differ only by a constant factor. This implies that the main result, Theorem \ref{thm:gradient}, also holds for LayerNorm.

On the experimental side, we conducted  controlled comparison using models of identical size (550M parameters), each trained on 400B tokens. Both models adopt the Pre-Norm architecture and employ the Megatron initialization scheme; the only difference lies in the normalization method—one uses RMSNorm, while the other uses LayerNorm. As shown in Table \ref{table:training loss comparison between RMSNorm and LayerNorm}, the training losses of RMSNorm and LayerNorm are nearly identical, with a marginal difference of only 0.0008. In the context of large-scale models, a loss difference smaller than 0.001 is typically considered negligible.

\begin{table}[h]
\caption{Training loss comparison between RMSNorm and LayerNorm under identical training settings.}
\label{table:training loss comparison between RMSNorm and LayerNorm}
\begin{center}
\begin{tabular}{l r}
\toprule
\textbf{Methods} & \textbf{Training Loss} \\
\midrule
RMSNorm    & 2.7631 \\
LayerNorm  & 2.7639 \\
\bottomrule
\end{tabular}
\end{center}
\end{table}

\section{Details of Experiments}\label{sec:Details of Experiments}

\subsection{Architectures of Different Models} \label{sec:Architectures of Different Models}
For dense models, we adopt a decoder-only transformer architecture akin to LLaMA 3.2 \citep{dubey2024llama}, with model sizes ranging from 151M to 1.2B parameters. For the MoE model, we follow the structure of OLMoE \cite{muennighoff2025olmoe}. The specific architecture of models is summarized in Table \ref{table:model architecture}. 
All experiments are conducted on NVIDIA A100-80G GPUs, utilizing 32 GPUs for dense models with fewer than 1B parameters, and 64 GPUs for the 1.2B dense model and MoE-1B-7B. Pretraining durations vary from one to ten days, depending on the model size and the number of training tokens.
\begin{table}[ht]
\setlength{\tabcolsep}{5pt}
\caption{Model architecture for dense models and MoE models.}
\label{table:model architecture}
\begin{center}
\begin{tabular}{lrrrrrr}
\toprule
                   & Dense-151M & Dense-285M & Dense-550M & Dense-543M & Dense-1.2B & MoE-1B-7B \\
\midrule
Model Dimension    & 768        & 1024       & 1536       & 1024       & 2048       & 2048      \\
FFN Dimension      & 2048       & 4096       & 4096       & 4096       & 9192       & 1024      \\
Attention heads    & 16         & 16         & 16         & 16         & 32         & 16        \\
Key/Value Heads    & 4          & 4          & 4          & 4          & 8          & 16        \\
Layers             & 12         & 12         & 16         & 29         & 16         & 16        \\
Vocabulary Size    & 100278     & 100278     & 100278     & 100278     & 100278     & 50280     \\
Weight Tying      & True       & True       & True       & True       & True       & False     \\
Context Length     & 4096       & 4096       & 4096       & 4096       & 4096       & 4096      \\
Expert Granularity & -          & -          & -          &  -         &  -         & 8 in 64   \\
\bottomrule
\end{tabular}
\end{center}
\end{table}

\subsection{Hyperparameters for Pretraining}
For the training hyperparameters, we primarily adopt the configuration outlined in OLMo 2 \cite{olmo20242} and OLMoE \cite{muennighoff2025olmoe}. The training hyperparameters for our models across different sizes are presented in Table \ref{table:hyperparameters}.
The model is trained using the AdamW optimizer with a learning rate (LR) of 3e-4 (4e-4 for MoE), which is scheduled to decay following a cosine function. The minimum LR is set to 3e-5 (5e-5 for MoE) to prevent excessively small updates in the later stages of training. A weight decay of 0.1 is applied to regularize the model and prevent overfitting. The AdamW optimizer employs $\beta_1 = 0.9$ and $\beta_2 = 0.95$ to control the first and second momentum estimates, respectively. Gradient clipping is utilized with a threshold of 1 to mitigate the impact of large gradients during optimization.
The model’s training also incorporates a warmup phase with a total of 8,388,608,000 tokens (10,485,760,000 for MoE). The initialization of the model’s parameters follows a normal distribution with a standard deviation defined as $1/\sqrt{2.5d}$, where $d$ is the model dimension. Furthermore, the initialization is truncated at 3 standard deviations to ensure a more stable starting point for training. The RoPE (Rotary Position Embedding) parameter $\theta$ is set to 500,000 (10000 for MoE), controlling the scale of position encodings. Finally, the activation function used in the model is SwiGLU, which has been shown to outperform traditional activation functions in various tasks. 

\begin{table*}[ht]
\caption{Hyperparameters for Pretraining.}
\label{table:hyperparameters}
\begin{center}
\begin{tabular}{lrr}
\toprule
                           & Dense Model     & MoE-1B-7B       \\
\midrule
Optimizer                  & AdamW           & AdamW           \\
Learning Rate (LR)         & 3e-4        & 4e-4        \\
Minimum LR                 & 3e-5        & 5e-5        \\
LR Schedule                & cosine          & cosine          \\
Weight Decay               & 0.1             & 0.1             \\
$\beta_1$                  & 0.9             & 0.9             \\
$\beta_2$                  & 0.95            & 0.95            \\
Gradient Clipping          & 1               & 1               \\
Batch Size                 & 1024            & 1024            \\
Warmup Tokens              & 8,388,608,000      & 10,485,760,000     \\
Init Distribution          & Megatron        & Megatron        \\
Init std                   & $1/\sqrt{2.5d}$ & $1/\sqrt{2.5d}$ \\
Init Truncation            & 3 std           & 3 std           \\
RoPE $\theta$              & 500000          & 10000           \\
Activation                 & SwiGLU          & SwiGLU          \\
Load Balancing Loss Weight &    -            & 0.01            \\
Router z-loss Weight       &    -            & 0.001          \\
\bottomrule
\end{tabular}
\end{center}
\end{table*}

\subsection{PyTorch Style Implementation of HybridNorm}
We provide a PyTorch-style implementation of HybridNorm below, and a more detailed implementation can be found at \url{https://github.com/BryceZhuo/HybridNorm}.
\begin{algorithm}[H]
\caption{PyTorch style pseudocode for a Transformer block with HybridNorm}
\label{alg:transformer_layer}
\algcomment{\fontsize{7.5pt}{0em}\selectfont
\vspace{-1.em}
}
\definecolor{codeblue}{rgb}{0.25,0.5,0.5}
\lstset{
  backgroundcolor=\color{white},
  basicstyle=\fontsize{9pt}{9pt}\ttfamily\selectfont,
  breaklines=true,
  captionpos=b,
  commentstyle=\fontsize{7.5pt}{7.5pt}\color{codeblue},
  keywordstyle=\fontsize{7.5pt}{7.5pt},
}
\begin{lstlisting}[language=Python,
    commentstyle=\color{green!40!black},
    keywordstyle=\color{magenta}]
# q_norm, k_norm, v_norm, ffn_norm are normalization layers
# attn_proj and attn_out are linear layers
# attn is the attention
# ffn is the feedforward network

def forward(x):
    # Attention block
    res = x                                          # shape (b, s, d)
    q, k, v = attn_proj(x).split((d, d, d), dim=-1)  # shape (b, s, d)
    # dk = d / h, h is the number of attention heads
    q, k, v = q.view(b, s, h, dk)                    
    q, k, v = q_norm(q), k_norm(k), v_norm(v)
    x = attn(q,k,v)                                  # shape (b, s, d)
    x = attn_out(x) + res

    # FFN block
    x = ffn_norm(x)
    x = ffn(x) + x

    return x
\end{lstlisting}    

\end{algorithm}

\section{Computational Overhead}
The direct contribution of RMSNorm to the overall parameter count and computational cost of a large Transformer is, in fact, negligible. Below, we provide a detailed analysis to support this claim. 

For simplicity, we consider only the MHA and the SwiGLU activation function, excluding the Embedding and Output layers. Suppose the hidden dimension is $d$, the intermediate FFN size is $\frac{8}{3}d$, the number of layers is $L$, and sequence length is $s$. As shown in the table below, RMSNorm constitutes only a negligible fraction of the actual runtime and memory consumption in Transformer under both Pre-Norm and HybridNorm configurations. And the ratio decreases as the model size increases. Moreover, when employing GQA, the relative overhead of RMSNorm in HybridNorm is further diminished.

\paragraph{Parameters Each RMSNorm}  layer introduces $d$ parameters. In a Pre-Norm architecture, there are two RMSNorm layers per Transformer layer, resulting in a total of $2dL$ parameters. In HybridNorm, there are four RMSNorm layers per Transformer layer, yielding $4dL$ parameters in total. Each attention block has four weight matrices (for Q, K, V, and O projections), thereby contributing $4d^2L$ parameters in total for the MHA component. Similarly, the FFN component introduces an additional $8d^2L$ parameters.

\paragraph{Computation}  The FLOPs for RMSNorm to process a single token vector of dimension $d$ is $(4d + 4)$. Since the constant term becomes negligible for any reasonably large $d$, this can be simplified to $4d$. Accordingly, for Pre-Norm, the $2L$ RMSNorm operations incur a total cost of approximately $8sdL$, whereas HybridNorm incurs $16sdL$. In practice, however, the dominant computational overhead in a Transformer stems from matrix-vector multiplications, primarily within MHA and FFN modules. Specifically, the FLOPs for MHA amount to $(8sd^2 + 4s^2d)L$, while the FLOPs for FFN total $16sd^2L$.

\begin{table*}[t]
\caption{Parameter and computation cost comparison between Pre-Norm and HybridNorm architectures. 
$L$ denotes the number of layers, $d$ the hidden dimension, and $s$ the sequence length.}
\label{table:architecture_cost}
\begin{center}
\setlength{\tabcolsep}{4.5pt}
\begin{tabular}{llccc}
\toprule
\textbf{Type} & \textbf{Architecture} 
& \textbf{RMSNorm} 
& \begin{tabular}[c]{@{}c@{}}\textbf{Main Transformer}\\ \textbf{Components}\\ \textbf{(MHA \& FFN)}\end{tabular}
& \begin{tabular}[c]{@{}c@{}}\textbf{Ratio}\\ \textbf{(RMSNorm / Total)}\end{tabular} \\
\midrule
\multirow{2}{*}{\textbf{Parameter}} 
& Pre-Norm   
& $2dL$     
& $12d^2L$     
& $\displaystyle \frac{2dL}{12d^2L + 2dL} \approx \frac{1}{6d}$ \\[4pt]
& HybridNorm 
& $4dL$     
& $12d^2L$     
& $\displaystyle \frac{4dL}{12d^2L + 4dL} \approx \frac{1}{3d}$ \\[4pt]
\midrule
\multirow{2}{*}{\begin{tabular}[c]{@{}l@{}}\textbf{Computation}\\ \textbf{(FLOPs)}\end{tabular}} 
& Pre-Norm   
& $8sdL$     
& $(24sd^2 + 4s^2d)L$     
& $\displaystyle \frac{8sdL}{(24sd^2 + 4s^2d)L} = \frac{2}{6d + s}$ \\[4pt]
& HybridNorm 
& $16sdL$    
& $(24sd^2 + 4s^2d)L$     
& $\displaystyle \frac{16sdL}{(24sd^2 + 4s^2d)L} = \frac{4}{6d + s}$ \\
\bottomrule
\end{tabular}
\end{center}
\end{table*}

\begin{table*}[h]
\caption{Downstream evaluation results of 1.2B dense models with Post-Norm, Pre-Norm, Mix-LN, HybridNorm, and HybridNorm$^*$ under 200B training tokens. OQA refers to OpenbookQA.}
\label{table:more comporation for dense results}
\begin{center}
\setlength{\tabcolsep}{5.5pt}
\begin{tabular}{lrrrrrrrrr}
\toprule
\textbf{Methods} & \textbf{BasicArithmetic} & \textbf{HellaSwag} & \textbf{SciQ}  & \textbf{ARC-C} & \textbf{ARC-E} & \textbf{PIQA}  & \textbf{OQA} & \textbf{COPA}  & \textbf{Avg.}$\uparrow$  \\
\midrule
Post-Norm      & 37.07           & 57.44     & 88.86 & 35.85 & 66.32 & 73.23 & \textbf{37.62}      & 75.80 & 59.02 \\
Pre-Norm       & \textbf{40.06}           & 57.38     & 89.78 & 32.14 & 67.12 & 73.04 & 36.58      & 80.40 & 59.56 \\
Mix-LN         & 33.23           & 56.91     & 89.20 & 33.78 & \textbf{69.30} & 72.25 & 37.00      & 79.00 & 58.83 \\
HybridNorm     & 35.29           & 58.24     & 88.36 & 35.69 & 68.63 & 73.45 & 37.02      & 76.80 & 59.18 \\
HybridNorm$^*$ & 39.46           & \textbf{59.56}     & \textbf{90.70} & \textbf{36.15} & 68.25 & \textbf{73.83} & 37.00      & \textbf{80.40} & \textbf{60.67} \\
\bottomrule
\end{tabular}
\end{center}
\end{table*}

\section{Additional Experimental Results}
\subsection{Comparison with Other Methods}\label{sec:Comparison with Other Methods}
We compare the downstream evaluation results of 1.2B dense models trained on 200B tokens using five normalization strategies: Post-Norm, Pre-Norm, Mix-LN, HybridNorm, and HybridNorm$^*$. As shown in Table~\ref{table:more comporation for dense results}, \textbf{HybridNorm$^*$ consistently delivers the highest average performance across eight downstream tasks}, outperforming all other methods both on average and in the majority of individual cases.

In particular, HybridNorm$^*$ achieves the top scores on HellaSwag (59.56), SciQ (90.70), ARC-C (36.15), PIQA (73.83, and COPA (80.40), while remaining competitive on the remaining tasks. In contrast to Post-Norm and Pre-Norm, which show strong results on select benchmarks but suffer from variability elsewhere, HybridNorm$^*$ exhibits \textbf{robust and consistently balanced performance}. Notably, it attains an average score of 60.67, surpassing the second-best method (Pre-Norm, 59.56) by over one point, underscoring its effectiveness in enhancing generalization across a diverse set of evaluation tasks.

We also observe that Post-Norm underperforms Pre-Norm in both Table~\ref{table:more comporation for dense results} and Table~\ref{table:normalization position}. This can be attributed to the fact that, although Post-Norm generally offers a higher performance upper bound, it suffers from reduced training stability and requires extensive hyperparameter tuning~\citep{xiong2020layer,xie2023residual,liu2020understanding}. In our experiments, we adopt the default hyperparameters from OLMo 2 without performing extensive hyperparameter search; hence, it is unsurprising that Post-Norm does not outperform Pre-Norm under these settings.
This observation motivates us to combine the higher performance ceiling of Post-Norm with the superior training stability and hyperparameter robustness of Pre-Norm. Our experimental results show that HybridNorm achieves stronger robustness to hyperparameters, enabling a balanced trade-off between stability and performance without the need for extensive tuning.

We also have extended Table \ref{table:normalization position} to include DeepNorm \cite{wang2024deepnet}, Sandwich-LN \cite{ding2021cogview}, and OutputNorm (OLMo 2 \cite{olmo20242}), as shown in Table \ref{table:norm comparison of 550m}. We have also clarified in the revised manuscript that OutputNorm refers to the normalization strategy employed in OLMo 2, where RMSNorm is applied to the outputs of attention and MLP sublayers.

It shows that HybridNorm consistently outperforms all other normalization strategies, including the aforementioned advanced methods, across all metrics. This highlights the strength of our approach in both generative perplexity tasks and HellaSwag.

\begin{figure}[t]
\begin{center}
\centerline{\includegraphics[width=\linewidth]{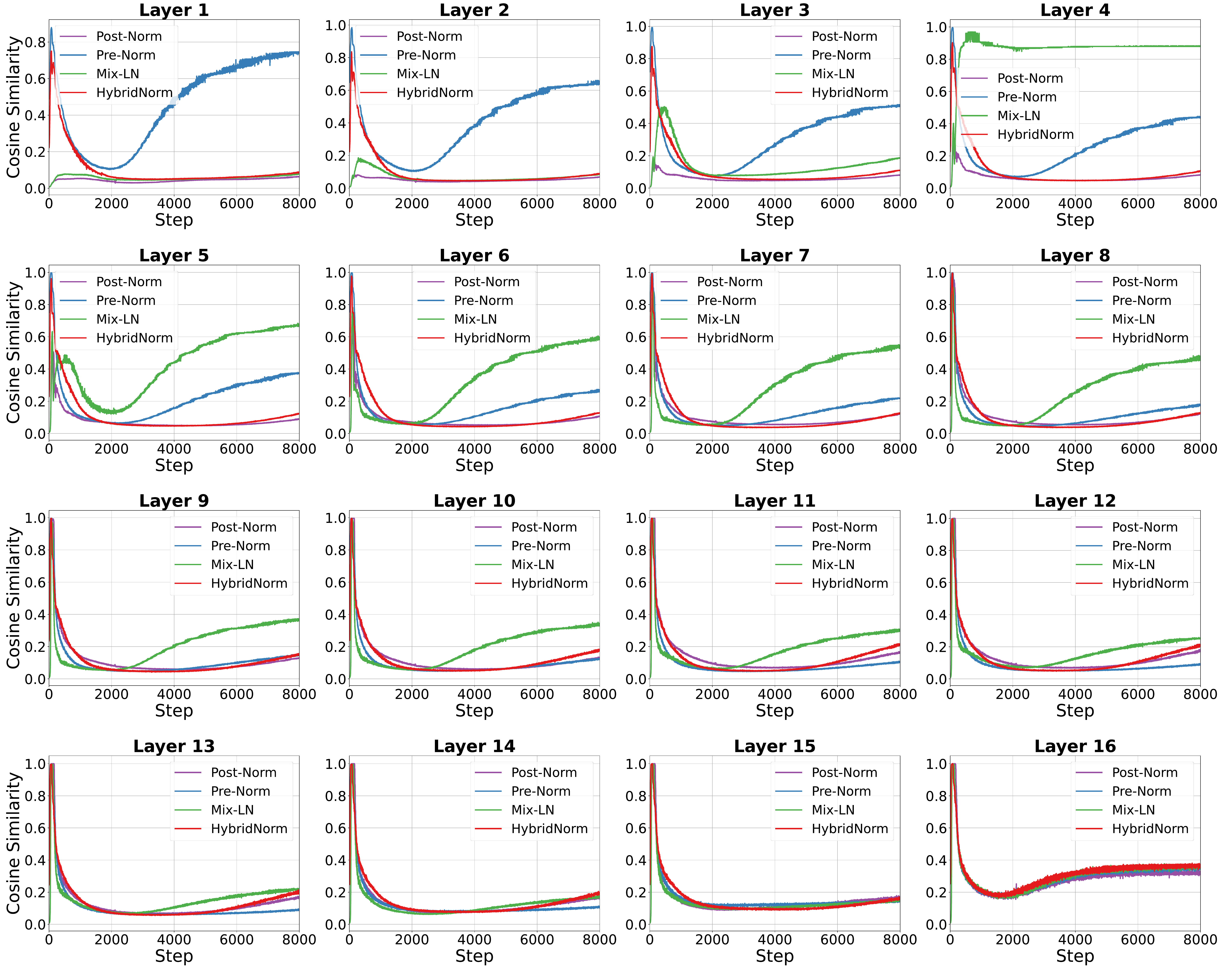}}
\caption{Evolution of the cosine similarity between tokens for Post-Norm, Pre-Norm, Mix-LN, and HybridNorm at different layers.}
\label{fig:signal propagation}
\end{center}
\end{figure}

\begin{table*}[t]
\caption{Comparison of normalization methods on 550M models.}
\label{table:norm comparison of 550m}
\begin{center}
\begin{tabular}{lcccc}
\toprule
\textbf{Methods} & \textbf{Training Loss}$\downarrow$ & \textbf{C4 PPL}$\downarrow$ & \textbf{Pile PPL}$\downarrow$ & \textbf{HellaSwag}$\uparrow$  \\
\midrule
Post-Norm           & 2.760 & 20.43 & 10.57 & 51.20 \\
Pre-Norm            & 2.751 & 20.30 & 10.48 & 51.97 \\
Mix-LN              & 2.760 & 20.43 & 10.56 & 51.29 \\
DeepNorm            & 2.746 & 20.34 & 10.48 & 52.11 \\
Sandwich-LN         & 2.751 & 20.45 & 10.58 & 52.07 \\
OutputNorm (OLMo 2) & 2.750 & 20.34 & 10.44 & 52.82 \\
\midrule
HybridNorm          & 2.737 & 20.00 & 10.29 & 53.35 \\
HybridNorm$^*$      & \textbf{2.731} & \textbf{19.85} & \textbf{10.25} & \textbf{53.36} \\
\bottomrule
\end{tabular}
\end{center}
\end{table*}

\subsection{Signal Propagation}\label{sec:Signal Propagation}

Following \citep{noci2022signal}, we plotted the evolution of the cosine similarity between tokens during pretraining. As shown in Figure~\ref{fig:signal propagation}, both Pre-Norm and Mix-LN exhibit a notable increase in token similarity in certain layers, indicating a tendency toward representation degeneration. In contrast, HybridNorm maintains consistently lower similarity across most layers, suggesting better capabilities in information representation \citep{noci2022signal}. This highlights the benefit of employing QKV-Norm in the attention module and Post-Norm in FFN to improve information flow.

\subsection{Entropy Dynamics}\label{sec:Entropy Dynamics}

Following \citep{jha2024relu}, we also examine the layerwise entropy evolution during pre-training. From Figure~\ref{fig:entropy dynamics}, HybridNorm maintains entropy within a relatively stable range, avoiding the sharp fluctuations observed in other methods. A stable entropy distribution is known to correlate with smoother training dynamics and improved optimization stability \citep{jha2024relu}, providing further empirical support for the robustness of HybridNorm.
\begin{figure}[t]
\begin{center}
\centerline{\includegraphics[width=0.95\linewidth]{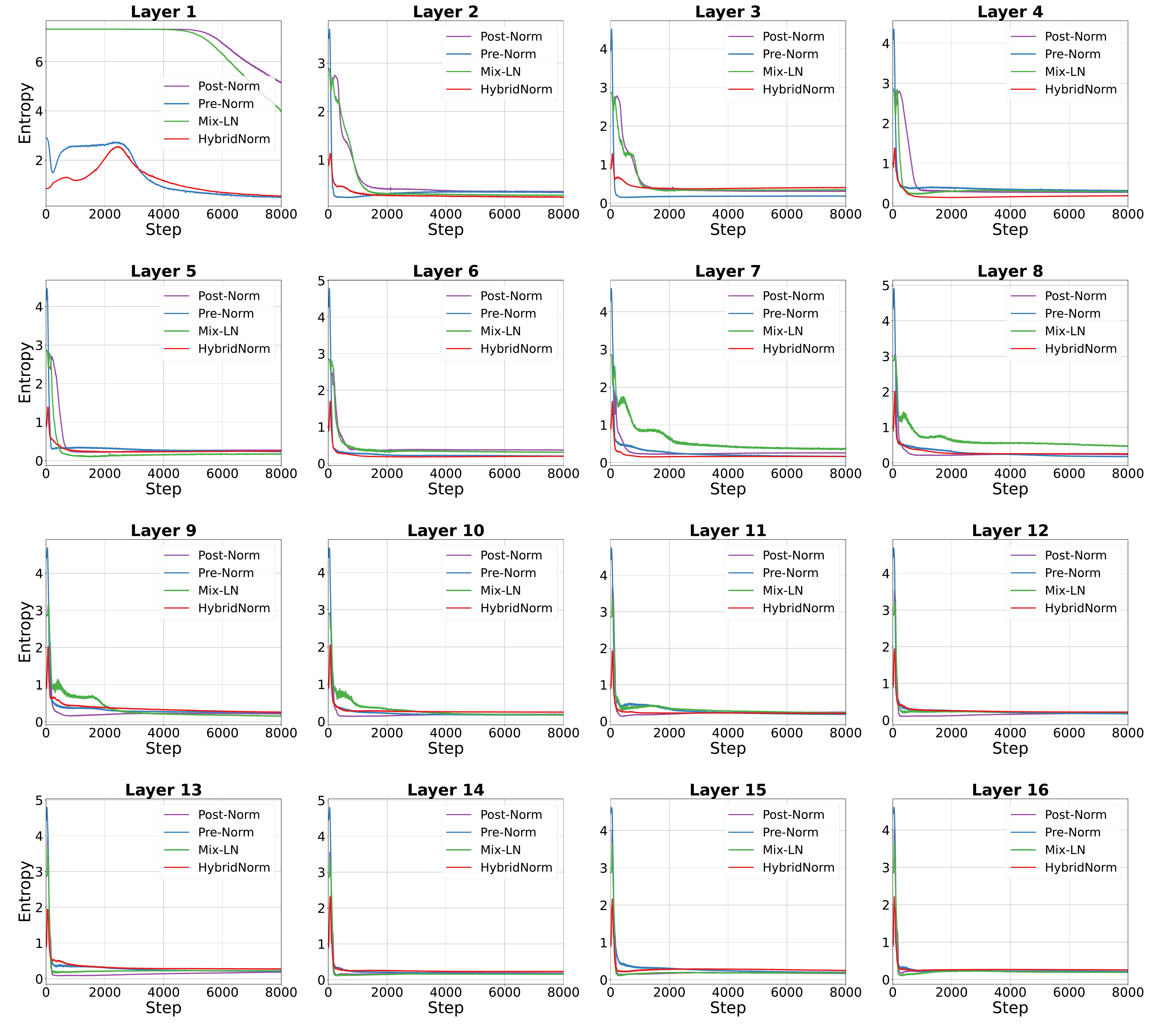}}
\caption{Evolution of the layerwise entropy for Post-Norm, Pre-Norm, Mix-LN, and HybridNorm at different layers.}
\label{fig:entropy dynamics}
\end{center}
\end{figure}

\subsection{Overall Results for Dense Models}
Overall results for the dense model are presented in Figure \ref{fig:overall results for dense models}, depicting validation losses and downstream evaluations over 1T training tokens. The comparison includes models with Pre-Norm, HybridNorm, and HybridNorm$^*$. 
One can see that both HybridNorm and HybridNorm$^*$ outperform Pre-Norm, with HybridNorm$^*$ achieving the lowest training and validation losses while delivering the best downstream performance on average.

\begin{figure}[h!]
\vskip 0.2in
\begin{center}
\centerline{\includegraphics[width=0.87\linewidth]{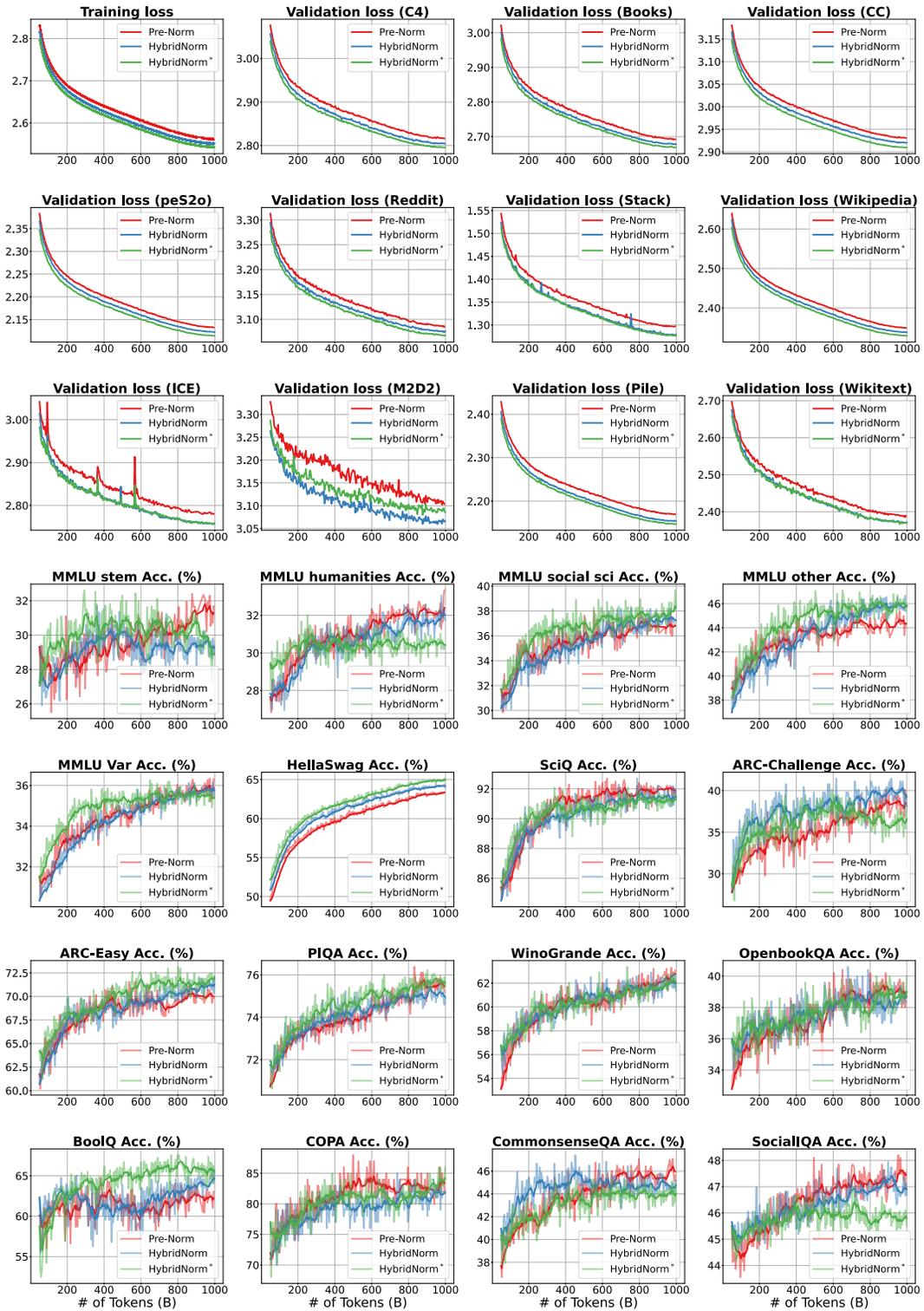}}
\caption{Overall loss and downstream evaluations for the 1.2B dense models with 1T training tokens. }
\label{fig:overall results for dense models}
\end{center}
\end{figure}

\begin{figure}[h!]
\vskip 0.2in
\begin{center}
\centerline{\includegraphics[width=0.87\linewidth]{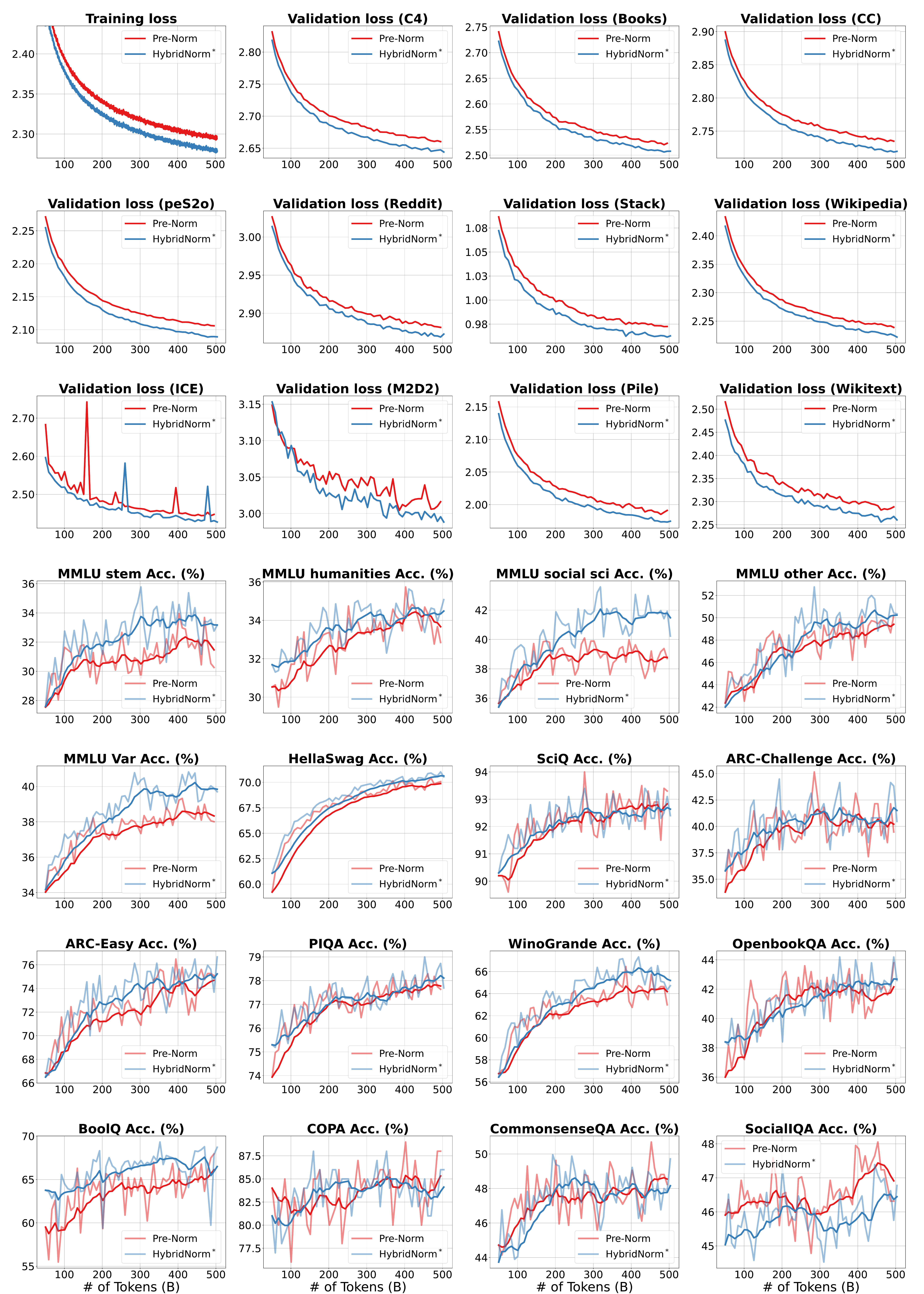}}
\caption{Overall loss and downstream evaluations for the MoE models with 500B training
tokens. }
\label{fig:overall results for moe models}
\end{center}
\end{figure}

\subsection{Overall Results for MoE Models}
Overall results for the MoE model are presented in Figure \ref{fig:overall results for moe models}, illustrating validation losses and downstream evaluations over 500 billion training tokens. The comparison focuses on models employing Pre-Norm and HybridNorm$^*$. From the figures, we can see that HybridNorm$^*$ achieves lower training loss and validation loss compared to Pre-Norm on all datasets, such as C4, Books, and Pile. Additionally, HybridNorm$^*$ outperforms Pre-Norm on most downstream tasks, though there are some cases where it underperforms. On average, however, HybridNorm$^*$ demonstrates superior downstream performance.

\section{Essential Differences Between Pre-Norm and Post-Norm}
\label{sec:Essential Differences}
To facilitate a unified analysis of the various normalization variants, we propose a categorization of Pre-Norm and Post-Norm based on their distinct approaches to residual connections (refer to Figure \ref{fig:unified_norm}, particularly the sections highlighted by the dashed boxes). From this perspective, the FFN sublayer in HybridNorm can be considered as adopting a connection scheme similar to the Post-Norm.

\begin{figure}[H]
\begin{center}
\centerline{\includegraphics[width=0.8\linewidth]{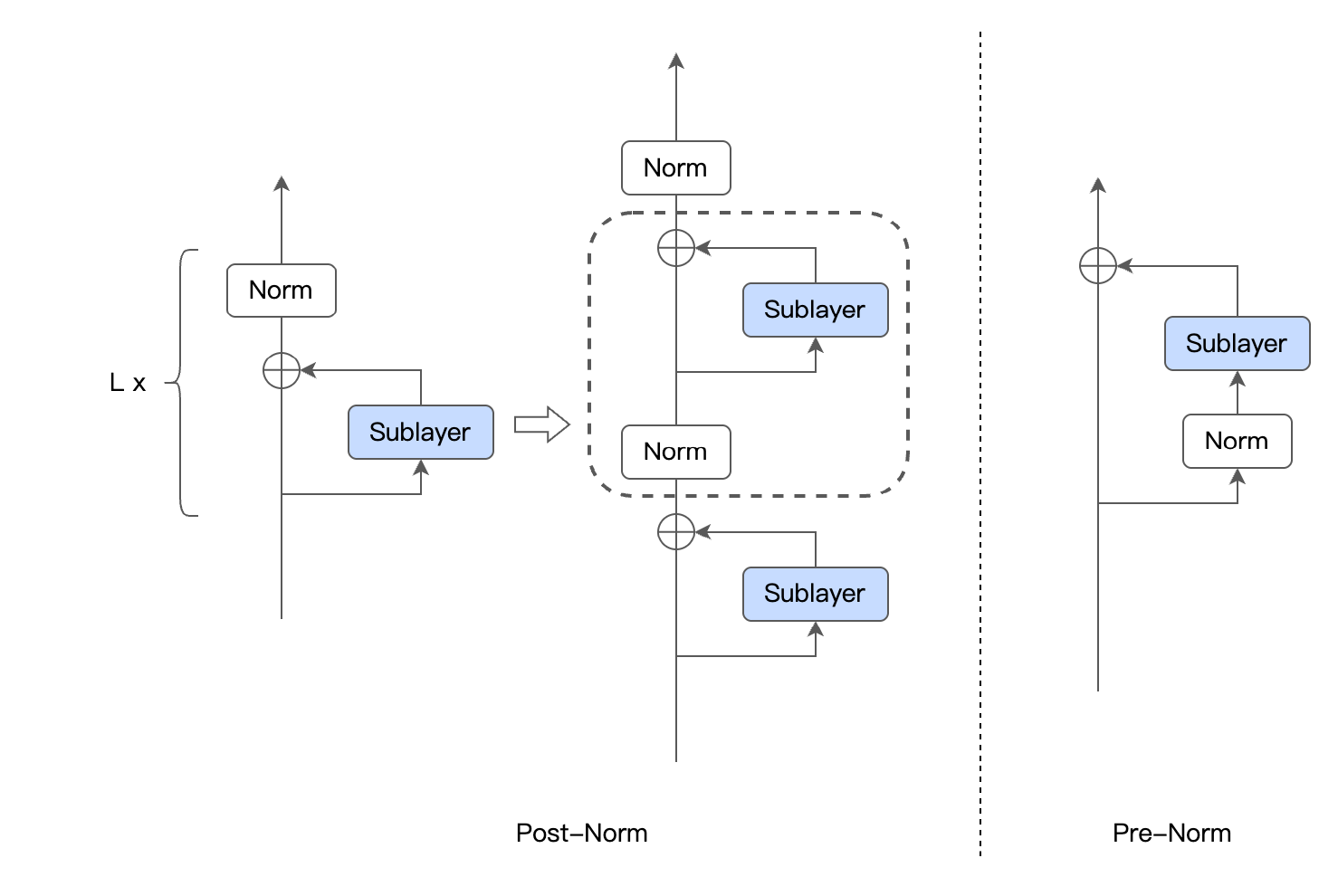}}
\caption{A unified view of Pre-Norm and Post-Norm}
\label{fig:unified_norm}
\end{center}
\end{figure}

\section{Formulas for Different Positions of Normalization Layers}\label{sec:formulas for normalization position}
In this section, we present the mathematical formulations for various normalization techniques. We begin by introducing the normalization layer within the attention mechanism.

\textbf{Vanilla scaled dot-product attention} are show in Eq.~\ref{eq:attn}, and \textbf{attention with QKV-Norm} is defined in Eq.~\ref{eq:attn_qkv}. Similarly, \textbf{attention with QK-Norm} is defined as
\begin{equation}\label{eq:attn_qk}
    \mathrm{attn}_{QK}(Q, K, V) =\mathrm{softmax}\left(\frac{\mathrm{Norm}(Q)\mathrm{Norm}(K)^\top}{\sqrt{d_k}}\right)V.
\end{equation}
\textbf{Attention with KV-Norm} is defined as 
\begin{equation}\label{eq:attn_kv}
    \mathrm{attn}_{KV}(Q, K, V) =\mathrm{softmax}\left(\frac{Q\mathrm{Norm}(K)^\top}{\sqrt{d_k}}\right)\mathrm{Norm}(V).
\end{equation}

As mentioned in Section \ref{sec:ablation}, we extend traditional normalization approaches by considering not only the Query (Q), Key (K), and Value (V) components but also the Context (C), which refers to the output of the attention mechanism. And \textbf{attention with QKVC-Norm} is defined as
\begin{equation}\label{eq:attn_qkvc}
    \mathrm{attn}_{QKVC}(Q, K, V) =\mathrm{Norm}\left(\mathrm{softmax}\left(\frac{\mathrm{Norm}(Q)\mathrm{Norm}(K)^\top}{\sqrt{d_k}}\right)\mathrm{Norm}(V)\right).
\end{equation}

\textbf{Attention with QKC-Norm} is defined as 
\begin{equation}\label{eq:attn_qkc}
    \mathrm{attn}_{QKC}(Q, K, V) =\mathrm{Norm}\left(\mathrm{softmax}\left(\frac{\mathrm{Norm}(Q)\mathrm{Norm}(K)^\top}{\sqrt{d_k}}\right)V\right).
\end{equation}

\textbf{Attention with KC-Norm} is defined as 
\begin{equation}\label{eq:attn_kc}
    \mathrm{attn}_{KC}(Q, K, V) =\mathrm{Norm}\left(\mathrm{softmax}\left(\frac{Q\mathrm{Norm}(K)^\top}{\sqrt{d_k}}\right)V\right).
\end{equation}

Then we denote MHA with $\mathrm{attn}_{\#}$ as $\mathrm{MHA}_{\#}$ for $\# \in \{QKVC, QKV, QKC, QK, KV, KC\}$,
\begin{equation}
    \mathrm{MHA}(X)_{\#} = \mathrm{Concat}(\mathrm{head}_1, \dots, \mathrm{head}_h) W^O,
\end{equation}
where $\mathrm{head}_i = \mathrm{attn}_{\#}(Q_i, K_i, V_i)$ for $i = 1, 2, \dots, h$, $\{\bullet_i\}_{i=1}^{h}=\mathrm{Split}(XW_{\bullet})$ for $\bullet \in \{Q,K,V\}$, and $W_Q,W_K,W_V,W_O \in \mathbb{R}^{d \times d}$ are learnable parameters.

With the aforementioned definitions in hand, we present the mathematical formulations for the methods discussed in the Ablation Study below ($\# \in \{QKVC, QKV, QKC, QK, KV, KC\}$).

\textbf{$\#$-Post}:
\begin{align}
    Y^l &= \mathrm{MHA}_{\#}(X^l) +X^l,\\
    X^{l+1}&=\mathrm{FFN}(\mathrm{Norm}(Y^l)) +\mathrm{Norm}(Y^l).
\end{align}
\textbf{$\#$-Pre}:
\begin{align}
    Y^l &= \mathrm{MHA}_{\#}(X^l) +X^l,\\
    X^{l+1}&=\mathrm{FFN}(\mathrm{Norm}(Y^l)) +Y^l.
\end{align}

\textbf{Pre-$\#$-Post}:
\begin{align}
    Y^l &= \mathrm{MHA}_{\#}(\mathrm{Norm}(X^l)) +X^l,\\
    X^{l+1}&=\mathrm{FFN}(\mathrm{Norm}(Y^l)) +\mathrm{Norm}(Y^l).
\end{align}

\textbf{Pre-$\#$-Pre}:
\begin{align}
    Y^l &= \mathrm{MHA}_{\#}(\mathrm{Norm}(X^l)) +X^l,\\
    X^{l+1}&=\mathrm{FFN}(\mathrm{Norm}(Y^l)) + Y^l.
\end{align}

\textbf{Pre-Post}:
\begin{align}
    Y^l &= \mathrm{MHA}(\mathrm{Norm}(X^l)) +X^l,\\
    X^{l+1}&=\mathrm{FFN}(\mathrm{Norm}(Y^l)) +\mathrm{Norm}(Y^l).
\end{align}

\textbf{Post-Pre}:
\begin{align}
    Y^l &= \mathrm{MHA}(\mathrm{Norm}(X^l)) + \mathrm{Norm}(X^l),\\
    X^{l+1}&=\mathrm{FFN}(\mathrm{Norm}(Y^l)) + Y^l.
\end{align}

\section{Broader Impacts and Limitations}\label{sec:Broader Impacts and Limitations}
\subsection{Broader Impacts}
This paper proposes HybridNorm, a simple yet effective hybrid normalization technique that improves the training stability and performance of transformers. It has the potential to assist the LLM community in advancing transformer architectures and enhancing their overall effectiveness. While there may be societal implications of our work, none of which we feel must be specifically highlighted here.
\subsection{Limitations}
First, due to limited computational resources, our experiments are conducted on models ranging from 151M to 7B parameters. While our method shows strong effectiveness on smaller models, its performance on larger-scale models has not yet been empirically validated.
Second, although our theoretical analysis demonstrates improved gradient stability, this does not directly guarantee better overall model performance.

\end{document}